\newcommand\reallywidehat[1]{%
\savestack{\tmpbox}{\stretchto{%
  \scaleto{%
    \scalerel*[\widthof{\ensuremath{#1}}]{\kern-.6pt\bigwedge\kern-.6pt}%
    {\rule[-\textheight/2]{1ex}{\textheight}}%WIDTH-LIMITED BIG WEDGE
  }{\textheight}% 
}{0.5ex}}%
\stackon[1pt]{#1}{\tmpbox}%
}
\newcommand{\ceil}[1]{\left \lceil #1 \right \rceil}
\newcommand{\bE}{\mathbb{E}}
\newtheorem{definition}{Definition}
\newtheorem{example}{Example}
\newtheorem{theorem}{Theorem}
\newtheorem{lemma}{Lemma}
\newtheorem{rmk}{Remark}
\newenvironment{fminipage}%
  {\begin{Sbox}\begin{minipage}}%
  {\end{minipage}\end{Sbox}\fbox{\TheSbox}}
\newcommand*{\rom}[1]{\expandafter\@slowromancap\romannumeral #1@}
\newcommand{\Ind}{\mathbbm{1}}
\newcommand{\vol}{\s{Vol}}
\newcommand{\abs}[1]{\left|#1\right|}
\newcommand{\R}{\mathbb{R}} 
\newcommand{\N}{\mathbb{N}}
\newcommand{\E}{\mathbb{E}}
\def\P{{\mathbb P}}
\def\S{{\mathbb S}}
\newcommand {\pr} {\mathbb{P}}
\newcommand{\calA}{{\cal A}}
\newcommand{\calB}{{\cal B}}
\newcommand{\calD}{{\cal D}}
\newcommand{\calE}{{\cal E}}
\newcommand{\calG}{{\cal G}}
\newcommand{\calH}{{\cal H}}
\newcommand{\calK}{{\cal K}}
\newcommand{\calL}{{\cal L}}
\newcommand{\calM}{{\cal M}}
\newcommand{\calN}{{\cal N}}
\newcommand{\calO}{{\cal O}}
\newcommand{\calP}{{\cal P}}
\newcommand{\calQ}{{\cal Q}}
\newcommand{\calX}{{\cal X}}
\newcommand{\calY}{{\cal Y}}
\newcommand{\be}{\begin{equation}}
\newcommand{\ee}{\end{equation}}
\newcommand{\beqna}{\begin{eqnarray}}
\newcommand{\eeqna}{\end{eqnarray}}
\DeclarePairedDelimiterX{\set}[1]{\{}{\}}{\setargs{#1}}
\DeclarePairedDelimiterX{\cond}[1]{[}{]}{\setargs{#1}}
\NewDocumentCommand{\setargs}{>{\SplitArgument{1}{;}}m}
{\setargsaux#1}
\NewDocumentCommand{\setargsaux}{mm}
{\IfNoValueTF{#2}{#1} {#1\,\delimsize|\,\mathopen{}#2}}%{#1\:;\:#2}
\DeclarePairedDelimiter\parenv{\lparen}{\rparen}
\newcommand{\indep}{\perp \!\!\! \perp}
\newcommand{\p}[1]{\left(#1\right)}
\newcommand{\pp}[1]{\left[#1\right]}
\newcommand{\ppp}[1]{\left\{#1\right\}}
\newcommand{\norm}[1]{\left\|#1\right\|}
\newcommand{\s}[1]{\mathsf{#1}}
\def\thanks#1{\protected@xdef\@thanks{\@thanks
        \protect\footnotetext{#1}}}
\begin{document}
%\title{Detecting Statistically Dependent Databases}
\title{Testing Dependency of Unlabeled Databases}
\author{Vered~Paslev~~~~~~~~~~~~~Wasim Huleihel\thanks{V. Paslev and  W. Huleihel are with the Department of Electrical Engineering-Systems at Tel Aviv University, {T}el {A}viv 6997801, Israel (e-mails:  \texttt{veredpaslev@mail.tau.ac.il, wasimh@tauex.tau.ac.il}). This work is supported by the ISRAEL SCIENCE FOUNDATION (grant No. 1734/21).}}

\maketitle
\sloppy

\begin{abstract}
In this paper, we investigate the problem of deciding whether two random databases $\s{X}\in\calX^{n\times d}$ and $\s{Y}\in\calY^{n\times d}$ are statistically dependent or not. This is formulated as a hypothesis testing problem, where under the null hypothesis, these two databases are statistically independent, while under the alternative, there exists an unknown row permutation $\sigma$, such that $\s{X}$ and $\s{Y}^\sigma$, a permuted version of $\s{Y}$, are statistically dependent with some known joint distribution, but have the same marginal distributions as the null. We characterize the thresholds at which optimal testing is information-theoretically impossible and possible, as a function of $n$, $d$, and some spectral properties of the generative distributions of the datasets. For example, we prove that if a certain function of the eigenvalues of the likelihood function and $d$, is below a certain threshold, as $d\to\infty$, then weak detection (performing slightly better than random guessing) is statistically impossible, no matter what the value of $n$ is. This mimics the performance of an efficient test that thresholds a centered version of the log-likelihood function of the observed matrices. We also analyze the case where $d$ is fixed, for which we derive strong (vanishing error) and weak detection lower and upper bounds.
\end{abstract}

\section{Introduction}\label{sec:intro}

Prompted by practical scenarios, such as computational biology applications \cite{pmid:18725631,kang2012fast}, social network analysis \cite{4531148,5207644}, computer vision \cite{Berg2005,10.5555/2976456.2976496}, and data anonymization/privacy-focused systems, there has been a recent focus on exploring the theoretical underpinnings and algorithmic solutions for database alignment under statistical frameworks. Indeed, quantifying relationships between disparate databases stand as fundamental undertakings in statistics. Modern databases present many challenges: they are high-dimensional, lack labels, contain noise, and appear scrambled. One concrete example of an inference problem, involving a pair of databases, is framed as the following hypothesis testing problem. Under the null hypothesis, no statistical dependency exists between the databases, whereas under the alternative hypothesis, there exists a permutation that reorganizes one database in such a way that the two become dependent. Then, given these databases, under what conditions can we discern whether they are dependent or not?

As a tangible folklore example, consider the following scenario: envision two distinct data sources, such as Netflix and IMDb, each providing feature lists for a set of entities, like users. These features encompass diverse user attributes, such as names, user IDs, and ratings. Frequently, feature labels are either unavailable or intentionally removed to safeguard sensitive personally identifiable information. Consequently, straightforwardly matching feature pairs between the two sources, corresponding to the same user, becomes challenging. Nevertheless, there is optimism that when a substantial correlation exists between the two databases, it becomes feasible to establish connections between them and create a coherent alignment for their respective feature lists \cite{4531148,5207644}.

Recently, there has been a focus on what is known as the \emph{data alignment problem}. This problem can be seen as a straightforward probabilistic model that encapsulates the scenario described above. It was introduced and explored in, e.g., \cite{10.1109/ISIT.2018.8437908,pmlr-v89-dai19b}. In essence, this problem involves two databases, denoted as $\s{X}\in\mathbb{R}^{n\times d}$ and $\s{Y}^{n\times d}$, each comprising $n$ users, each with $d$ features. The key challenge lies in uncovering an unknown permutation or correspondence that matches users in $\s{X}$ with those in $\s{Y}$. When a pair of entries from these databases is matched, their features are dependent according to a known distribution, whereas for unmatched entries, the features are independent. The primary objective is to \emph{recover} the unknown permutation and establish statistical assurances regarding the feasibility and impossibility of this recovery. The feasibility of recovery is contingent upon factors such as the correlation level, $n$, and $d$. The statistical limits of this recovery problem are understood for some specific probability distributions that generate the databases. For instance, in the Gaussian case, where the two databases have independent standard normal entries, with correlation coefficient $\rho$ between the entries of matched rows, it has been demonstrated in \cite{pmlr-v89-dai19b} that perfect recovery is attainable if $\rho^2 = 1-o(n^{-4/d})$, while it becomes impossible if $\rho^2 = 1-\omega(n^{-4/d})$ as both $n$ and $d$ tend to infinity.

The \emph{detection} counterpart of the recovery problem discussed above has also undergone extensive investigation in \cite{9834731,nazer2022detecting,tamir,HuleihelElimelech}, in the Gaussian case. As mentioned earlier, the central question here revolves around determining the correlation level needed to decide whether the two databases are correlated or not. It has been established in \cite{nazer2022detecting} that when $\rho^2d\to\infty$, efficient detection becomes feasible (with an exceedingly small error probability) using a straightforward thresholding of the sum of entries in $\s{X}^T\s{Y}$. Conversely, it has also been demonstrated that when $\rho^2d\sqrt{n}\to0$ and $d=\Omega(\log n)$, detection becomes information-theoretically impossible. Most recently, in \cite{HuleihelElimelech}, the gap between the lower and upper bounds above has been conclusively addressed, by proving that the upper bound is tight, i.e., if $\rho^2d\to0$ as $d\to\infty$, weak detection (performing slightly better than random guessing) is information-theoretically impossible, regardless of the specific value of $n$. In fact, while the main focus in related literature was exclusively on the asymptotic regime where both $n$ and $d$ tend to infinity, \cite{HuleihelElimelech} determine sharp thresholds for all possible asymptotic regimes of $n$ and $d$.

In this paper, we continue the investigation of the database alignment detection problem. While the results above are neat and interesting, they apply for the Gaussian case only. This is in fact true not just for the detection but also for the recovery problem. This case is, however, restricted in the sense that for jointly Gaussian random variables, the notion of uncorrelatedness and independence are equivalent; the information-theoretic limits depend on the distribution through the correlation parameter solely. Accordingly, in the general case, it is a priori unclear how these thresholds depend on the distributions at hand. Furthermore, as it turns out, many of the techniques in \cite{9834731,nazer2022detecting,tamir,HuleihelElimelech} are tailored to the Gaussian case, and the analysis of general distributions becomes challenging. To the best of our knowledge, our paper is the first to provide characterization for the thresholds at which optimal testing is information-theoretically impossible and possible, for two general distributions. Specifically, given two generative distributions, we prove that weak (strong) detection is information-theoretically impossible if a certain function of the eigenvalues of the likelihood function kernel (as defined in \eqref{eqn:kernel}) is below some threshold; we consider all possible asymptotic regimes as a function of $n$ and $d$. We then complement out lower bounds by algorithmic upper bounds, and show that these are tight in a few classical examples. To that end, we propose three detection algorithms and analyze their associated risks and sample complexities. 

We will now provide a brief overview of other related works. In \cite{9174507}, the problem of partial recovery of the hidden alignment was investigated. In \cite{ShiraniISIT}, necessary and sufficient conditions for successful recovery through a typicality-based framework were established. Additionally, \cite{Bakirtas2021DatabaseMU} and \cite{Bakirtas2022DatabaseMU} explored the problem of alignment recovery in cases involving feature deletions and repetitions, respectively. A recent development in this area involved the joint analysis of correlation detection and alignment in Gaussian databases, as presented in \cite{tamir}. Finally, it is worth noting that the challenges in database alignment and detection closely relate to various planted matching problems, specifically the graph alignment problem, which has yielded numerous intriguing results and valuable mathematical techniques. Roughly speaking, in this problem the goal is to detect the edge correlation between two random graphs with unlabeled nodes. For further exploration, one can refer to works such as \cite{GraphAl1,GraphAl2,GraphAl3,GraphAl4,wu2020testing,GraphAl5,GraphAl6,GraphAl7}, and their associated references. In particular, as we mention later on in the outlook of our paper, an interesting open problem mentioned in \cite{wu2020testing} is to investigate the case of general edge weight distributions, which inspired our work for the random databases case.

%We now mention other related work briefly. In \cite{9174507} the problem of partial recovery of the permutation aligning the databases was analyzed. In \cite{ShiraniISIT} necessary and sufficient conditions for successful recovery matching using a typicality-based framework were established. Furthermore, \cite{Bakirtas2021DatabaseMU} and \cite{Bakirtas2022DatabaseMU} proposed and explored the problem of permutation recovery under feature deletions and repetitions, respectively. Recently, the problem of joint correlation detection and alignment of {G}aussian database was analyzed in \cite{tamir}. Recently, the Gaussian setting was analyzed in \cite{HuleihelElimelech}, where several open questions were resolved, most notably, it was shown that if $\rho^2d\to0$, as $d\to\infty$, then weak detection (performing slightly better than random guessing) is statistically impossible, \emph{irrespectively} of the value of $n$. Finally, we note that the problem of database alignment and detection is closely related to a wide verity of planted matching problems, specifically, the graph alignment problem, with many exciting and interesting results, and useful mathematical techniques, see, e.g., \cite{GraphAl1,GraphAl2,GraphAl3,GraphAl4,wu2020testing,GraphAl5,GraphAl6,GraphAl7}, and many references therein.

\paragraph{Notation.}
For any $n\in\mathbb{N}$, the set of integers $\{1,2,\dots,n\}$ is denoted by $[n]$, and $a_1^n = \{a_1,a_2,\ldots,a_n\}$. Let $\mathbb{S}_n$ denotes the set of all permutations on $[n]$. For a given permutation $\sigma\in\mathbb{S}_n$, let $\sigma_i$ denote the value to which $\sigma$ maps $i\in[n]$. Random vectors are denoted by capital letters such as $X$ with transpose $X^T$. A collection of $n$ random vectors is written as $\s{X}$ = $(X_1,\ldots,X_n)$. The notation $(X_1,\ldots,X_n)\sim P_X^{\otimes n}$ means that the random vectors $(X_1,\ldots,X_n)$ are independent and identically distributed (i.i.d.) according to $P_X$. We use $\calN(\eta,\Sigma)$ to represent the multivariate normal distribution with mean vector $\eta$ and covariance matrix $\Sigma$. Let $\s{Poisson}(\lambda)$ denote the Poisson distribution with parameter $\lambda$. The $n\times n$ identity matrix is denoted by $I_{n\times n}$. For probability measures $\mathbb{P}$ and $\mathbb{Q}$, let $d_{\s{TV}}(\mathbb{P},\mathbb{Q})=\frac{1}{2}\intop |\mathrm{d}\mathbb{P}-\mathrm{d}\mathbb{Q}|$ and $d_{\s{KL}}(\mathbb{P}||\mathbb{Q}) = \bE_{\mathbb{P}}\log\frac{\mathrm{d}\mathbb{P}}{\mathrm{d}\mathbb{Q}}$ denote the total variation distance and and the Kullback-Leibler (KL) divergence, respectively. We also define the symmetric KL divergence by $d_{\s{SKL}}(\mathbb{P}||\mathbb{Q}) = \frac{1}{2}\cdot[d_{\s{KL}}(\mathbb{P}||\mathbb{Q})+d_{\s{KL}}(\mathbb{Q}||\mathbb{P})]$. For a probability measure $\mu$ on a space $\Omega$, we use $\mu^{\otimes d}$ for the product measure of $\mu$ ($d$ times) on the product space $\Omega^d$. For a measure $\nu\ll\mu$ (that is, a measure absolutely continuous with respect to $\mu$), we denote (by abuse of notation) the Randon-Nikodym derivative $\nu$ with respect to $\mu$ by $\frac{\nu}{\mu}$. For functions $f,g:\N \to \R$, we say that $f=O(g)$ (and  $f=\Omega(g)$) if there exists $c>0$ such that $f(n)\leq cg(n)$ (and $f(n)\geq cg(n)$) for all $n$. We say that $f=o(g)$ if $\lim_{n\to\infty}f(n)/g(n)=0$, and that $f=\omega(g)$ if $g=o(f)$.

\section{Setup and Learning Problem}\label{sec:model}

\paragraph{Setup.} As mentioned in the introduction, in this paper we will investigate the following decision problem. We deal with two databases $\s{X} = (X_1,\ldots,X_n)$ and $\s{Y}=(Y_1,\ldots,Y_n)$, where $X_i\in\calX^d$ and $Y_i\in\calY^d$, for $i=1,2,\ldots,n$, with $n$ being the number of entities (say, users), and $d$ is the number of features. Now, under the null hypothesis $\calH_0$, the databases $\s{X}$ and $\s{Y}$ are generated \emph{independently} at random, where $X_1,\ldots,X_n\sim P_X^{\otimes d}$ and $Y_1,\ldots,Y_n\sim P_Y^{\otimes d}$, with $P_X = P_Y$ (and therefore $\calX = \calY$ as well). We denote by $\mathbb{P}_0$ the null distribution, i.e., the joint probability distribution of $(\s{X},\s{Y})$ under $\calH_0$. Also, for simplicity of notation, we denote the product measure $Q_{XY}\triangleq P_X\times P_Y$. Under the alternate hypothesis $\calH_1$, the databases $\s{X}$ and $\s{Y}$ are dependent under some \emph{unknown} alignment/permutation $\sigma\in\mathbb{S}_n$, specifically, given $\sigma\in\mathbb{S}_n$, a permutation over $[n]$, we have $(X_1,Y_{\sigma_1}),(X_2,Y_{\sigma_2}),\ldots,(X_n,Y_{\sigma_n})\stackrel{\mathrm{i.i.d.}}{\sim}P_{XY}^{\otimes d}$, with the same marginals $P_X=P_Y$. For a fixed $\sigma\in \S_n$, we denote the joint probability distribution of $(\s{X},\s{Y})$  under the hypothesis $
\calH_{1}$ by $\P_{\calH_1\vert\sigma}$. To summarize, the hypothesis testing problem we deal with is,
\begin{equation}\label{eqn:compositeTesting}
\begin{aligned}
    &\calH_0: (X_1,Y_1),\ldots,(X_n,Y_n)\stackrel{\mathrm{i.i.d}}{\sim} P_X^{\otimes d}\times P_Y^{\otimes d}\\
& \calH_1: (X_1,Y_{\sigma_1}),\ldots,(X_n,Y_{\sigma_n})\stackrel{\mathrm{i.i.d}}{\sim} P_{XY}^{\otimes d},\ \ \text{for some } \sigma\in\mathbb{S}_n.
\end{aligned}
\end{equation}
Viewing $\s{X}$ as an $n\times d$ matrix, we denote its $(i,j)$ element by $X_{ij}$ (and similarly for $\s{Y}$).  We remark here that the distributions $Q_{XY}$ and $P_{XY}$ are allowed to be functions of $n$ and $d$. In fact, as we will see later on, this is when the problem becomes interesting and non-trivial. Fig.~\ref{fig:comp} presents an illustration of our probabilistic model under $\calH_1$.

\begin{figure}[t]
\centering
\begin{overpic}[scale=2]
    {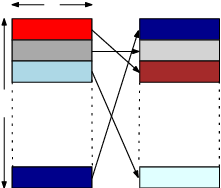}
    \put(-1.5,37.7){\begin{turn}{90}$n$\end{turn}}
    \put(21.6,82.8){$d$}
    \put(21.8,-8){$\s{X}$}
    \put(78.5,-8){$\s{Y}$}
    %\put(64.2,2){$\s{X}_{\calH_0}$}
    %\put(90.5,2){$\s{Y}_{\calH_0}$}
    %\put(90.5,2){$\s{Y}_{\calH_0}$} 
    \put(47,69.8){\small{$\sigma_1$}}
    \put(47,57.4){\small{$\sigma_2$}} 
    \put(47,47){\small{$\sigma_3$}}     
    \put(47,9){\small{$\sigma_n$}}    
\end{overpic}
\vspace*{5mm}
\caption{
An illustration of the probabilistic model under the alternative hypothesis. It can be seen that the databases $\s{X}$ and $\s{Y}$ dependent vectors are marked with a similar color. 
}
\label{fig:comp}
\end{figure}

\paragraph{Learning problem.} Given the databases $\s{X}$ and $\s{Y}$, a test/detection algorithm $\phi_{n,d}:\calX^{n\times d}\times\calY^{n\times d}\to \{0,1\}$, for the hypothesis testing problem above, is tasked with outputting a decision in $\{0,1\}$. We define the risk of a detection algorithm $\phi_{n,d}$ as the sum of its Type-I and (worst-case) Type-II error probabilities, namely,
\begin{align}
\s{R}(\phi_{n,d})\triangleq \P_{\calH_0}[\phi_{n,d}(\s{X},\s{Y})=1]+\max_{\sigma\in \S_n}\P_{\calH_1\vert\sigma}[\phi_{n,d}(\s{X},\s{Y})=0].\label{eqn:worstCaseRisk}
\end{align}
The \textit{minimax} risk is then defined as,
\begin{align}
\s{R}^\star\triangleq\inf_{\phi_{n,d}:\calX^{n\times d}\times\calY^{n\times d}\to \{0,1\}}\s{R}(\phi_{n,d}).\label{eqn:optimalworstCaseRisk}
\end{align}
We remark that $\s{R}$ and $\s{R}^\star$ are in general functions of $n$, $d$, $Q_{XY}$, and $P_{XY}$. However, we omit the dependency on these parameters from our notations for the benefit of readability, as we shall do for our detection algorithms as well.

As in \cite{HuleihelElimelech}, we study the information-theoretic limits (i.e., impossibility and possibility lower and upper bounds) of the hypothesis testing problem in \eqref{eqn:compositeTesting}, for several possible asymptotic regimes of $n$ and $d$. To be more precise, let $\calD$ denote the pair of distributions $Q_{XY}$, and $P_{XY}$. Then, the regimes we investigate are characterized by sequences of the parameters $(\calD,d,n)=(\calD_\ell,d_\ell,n_\ell)_{\ell\in \N}$. For example, if $P_X=P_Y = \calN(0,1)$ and $P_{XY}$ denote the joint distribution of two standard normal random variables with correlation coefficient $\rho$, then the triplet $(\calD,d,n)$ reduces to $(\rho,d,n)$. In this paper, the asymptotic regimes we consider correspond to the scenarios where $d_\ell$ and $n_\ell$ are either fixed or tend to infinity. Accordingly, asymptotic notations, such as, $f(\calD) = o(\cdot)$, $f(\calD)=\Omega(\cdot)$, etc., where $f(\cdot)$ is some one-dimensional function of $\calD$, should be interpreted in terms of the sequences above. For example, the condition $f(\calD)=o(d^{-1})$ means that the sequence $(\calD,d,n)$ satisfies $f(\calD_\ell) d_\ell\to0$, as $\ell\to\infty$. Later on, we will give concrete examples which elucidate the notations above. With the above in mind, we are in a position to define the notions of strong and weak detection.

\begin{definition}
A sequence $(\calD,d,n)=(\calD_k,d_k,n_k)_k$ is said to be:
\begin{enumerate}
    \item Admissible for strong detection if 
    % there exists an algorithm $\phi$ such that 
    $\lim_{k\to \infty}\s{R}^\star=0$.
    \item Admissible for weak detection if
    % if there exists an algorithm $\phi$ such that 
    $\limsup_{k\to \infty} \s{R}^\star<1$.
\end{enumerate}
\end{definition}
A few comments are in order. First, it should be clear that admissibility of strong detection implies the admissibility of weak detection. More concretely, if a test $\phi$ achieves strong detection, i.e., $\s{R}(\phi)\to0$, then $\phi$ achieves weak detection as well. Also, note that strong detection requires the test statistic to determine with high probability whether $(\s{X},\s{Y})$ is is drawn from $\calH_0$ or $\calH_1$, while weak detection only aims at strictly outperforming random
guessing of the underlying hypothesis. Finally, let $\bar{\s{R}}^\star$ denote the optimal \emph{average} risk, which corresponds to the case where $\sigma$ is uniformly drawn over $\mathbb{S}_n$ (rather being unknown). Mathematically, it is defined similarly to \eqref{eqn:worstCaseRisk}--\eqref{eqn:optimalworstCaseRisk}, but with the maximum in \eqref{eqn:worstCaseRisk} over $\sigma\in\mathbb{S}_n$ replaced by an expectation with respect to $\sigma\sim\s{Unif}(\mathbb{S}_n)$. It is clear that $\bar{\s{R}}^\star$ lower bounds the worst-case risk $\s{R}^\star$, and it is well-known that $\bar{\s{R}}^\star=1-d_{\s{TV}}(\pr_{\calH_0},\pr_{\calH_1})$, where with some abuse of notation, in the average case, the distribution $\pr_{\calH_1}$  is defined as the mixture $\pr_{\calH_1}\triangleq\bE_{\sigma}\pr_{\calH_1\vert\sigma}$. Also, the optimal test that achieves the minimal average risk is the likelihood ratio test (a.k.a. the Neyman-Pearson test). Accordingly, weak and strong detection are equivalent to $\liminf d_{\s{TV}}(\pr_{\calH_0},\pr_{\calH_1})>0$ and $\lim d_{\s{TV}}(\pr_{\calH_0},\pr_{\calH_1})=0$, respectively. We discuss this in detail in Section~\ref{sec:proofLowerBounds}. In the same vein, to rule out the possibility of weak/strong detection, we will use the well-known facts that,
\begin{equation}\label{eq:TV1}
    d_{\s{TV}}(\P_{\calH_0},\P_{\calH_1})=o(1)\implies \lim_{k\to\infty}\s{R}^\star=1,
\end{equation}
and
\begin{equation}\label{eq:TV2}
    d_{\s{TV}}(\P_{\calH_0},\P_{\calH_1})\leq1-\Omega(1) \implies\liminf_{k\to\infty} \s{R}^\star>0.
\end{equation}
To wit, the implication in \eqref{eq:TV1} correspond to the impossibility of weak detection, while \eqref{eq:TV2} corresponds to the impossibility of strong detection, respectively. We discuss further and prove these relations in Section~\ref{sec:proofLowerBounds}.

\section{Main Results}\label{sec:main}

In this section, we present our main results on the impossibility and admissibility of weak and strong criteria of detection, under the setting presented in Section~\ref{sec:model}. We differentiate between the various possible asymptotic regimes: (\texttt{I}) both $n,d\to\infty$; the regime analyzed in most related past literature, (\texttt{II}) $n$ is a constant and $d\to\infty$, and (\texttt{III}) $d$ is a constant and $n\to\infty$. We begin with our impossibility lower-bounds. To that end, we introduce a few important notations. 

For any $x\in\calX$ and $y\in\calY$, we let $\calL(x,y) \triangleq \frac{P_{XY}(x,y)}{Q_{XY}(x,y)}$ denote the single-letter likelihood function. It turns our beneficial to think of $\calL$ as a kernel, which induces an operator as follows. For any square-integrable function $f$ under $Q_{XY}$, i.e., $ \int\int f^2(x,y)Q_{X}(\mathrm{d}x)Q_{Y}(\mathrm{d}y)<\infty$,
\begin{align}
    (\calL f)(x) \triangleq \bE_{Y\sim Q_Y}\pp{\calL(x,Y)f(Y)}.\label{eqn:kernel}
\end{align}
Furthermore, the composition $\calL^2= \calL\circ\calL$ is given by $\calL^2(x,y) = \bE_{Z\sim Q}[\calL(x,Z)\calL(Z,y)]$, and $\calL^k$ is defined similarly. We assume that $\calL(x,y) = \calL(y,x)$, and hence $\calL$ is self-adjoint. Furthermore, if we assume that $ \int\int \calL^2(x,y)Q_{X}(\mathrm{d}x)Q_{Y}(\mathrm{d}y)<\infty$, then $\calL$ is Hilbert-Schmidt. Thus, $\calL$ is diagonazable, and we denote its eigenvalues by $\{\lambda_i\}_{i\geq0}$. Accordingly, the trace of $\calL$ is given by $\s{trace}(\calL) = \bE_{Y\sim Q_{Y}}[\calL(Y,Y)] = \sum_{i\in\mathbb{N}}\lambda_i$. Without loss of generality, we assume that the sequence of eigenvalues $\{\lambda_i\}_{i\geq0}$ is decreasing, namely, $\lambda_i\geq\lambda_{i+1}$, for all $i\in\mathbb{N}$. Finally, we prove in Lemma~\ref{lem:eigen} in Section~\ref{sec:proofLowerBounds} that the largest eigenvalue of $\calL$ is unity, i.e., $\lambda_0=1$. We are now in a position to state our main results.
\begin{theorem}[Weak detection lower bound]\label{th:LB}
 Weak detection is impossible as long as 
 \begin{align}
\sum_{i\geq1}\frac{\lambda_i^2}{1-\lambda_i^2}=o(d^{-1}).\label{eqn:weakdetectlowerCond}     
 \end{align}
That is, for a sequence $(\calD,d,n)=(\calD_k,d_k,n_k)_k$ such that \eqref{eqn:weakdetectlowerCond} holds:
\begin{itemize}
     \item If $d$ is any function of $k$, and $n\to \infty$ then $\lim_{k\to \infty}\s{R}^\star =1$.
    \item If $n$ is constant and $d\to \infty$ then $\lim_{k\to \infty}\s{R}^\star =1$. 
\end{itemize}
Namely, $(\calD,d,n)$ is not admissible for  weak detection.
\end{theorem}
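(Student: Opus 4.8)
The plan is to bound the \emph{average}-case quantity $d_{\s{TV}}(\P_{\calH_0},\P_{\calH_1})$, where $\P_{\calH_1}\triangleq\bE_{\sigma\sim\s{Unif}(\S_n)}\P_{\calH_1\vert\sigma}$ is the uniform mixture over alignments, and then invoke \eqref{eq:TV1}. So it suffices to prove $d_{\s{TV}}(\P_{\calH_0},\P_{\calH_1})=o(1)$; the point I want to stress in advance is that the bound I will obtain is entirely free of $n$, so it settles both bullet points at once. By Cauchy--Schwarz, $4\,d_{\s{TV}}(\P_{\calH_0},\P_{\calH_1})^2\le \bE_{\P_{\calH_0}}[L^2]-1$, where
\[
L\;\triangleq\;\frac{\mathrm{d}\P_{\calH_1}}{\mathrm{d}\P_{\calH_0}}\;=\;\bE_{\sigma}\prod_{i=1}^{n}\prod_{j=1}^{d}\calL(X_{ij},Y_{\sigma_ij}),
\]
so the whole problem reduces to showing $\bE_{\P_{\calH_0}}[L^2]\to1$.

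Next I would carry out the second-moment computation. Writing $L^2$ as a double expectation over independent uniform $\sigma,\tau\in\S_n$ and using that the $d$ columns are i.i.d.\ under $\P_{\calH_0}$, one gets
\[
\bE_{\P_{\calH_0}}[L^2]=\bE_{\sigma,\tau}\Big(\bE\big[\textstyle\prod_{i=1}^{n}\calL(X_i,Y_{\sigma_i})\calL(X_i,Y_{\tau_i})\big]\Big)^{\!d},
\]
the inner expectation being over a single column ($X_i\sim P_X$, $Y_k\sim P_Y$, all independent). Each $X_i$ appears in exactly two factors; integrating it out and using self-adjointness of $\calL$ together with $Q_X=Q_Y$ replaces $\bE_X[\calL(X,y)\calL(X,y')]$ by the composed kernel $\calL^2(y,y')$. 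Re-indexing by $k=\sigma_i$ turns the column expectation into $\bE\big[\prod_k\calL^2(Y_k,Y_{\rho(k)})\big]$ with $\rho\triangleq\tau\sigma^{-1}$ and i.i.d.\ $Y_k$'s, and splitting $\rho$ into cycles shows that a cycle of length $\ell$ contributes the $\ell$-fold cyclic trace $\s{trace}\big((\calL^2)^\ell\big)=\sum_{i\ge0}\lambda_i^{2\ell}$. Since $\tau\sigma^{-1}$ is uniform on $\S_n$ and $\lambda_0=1$ (Lemma~\ref{lem:eigen}), this gives
\[
\bE_{\P_{\calH_0}}[L^2]=\bE_{\rho\sim\s{Unif}(\S_n)}\Big[\textstyle\prod_{\ell\ge1}b_\ell^{\,C_\ell(\rho)}\Big],\qquad b_\ell\triangleq(1+\epsilon_\ell)^d,\quad \epsilon_\ell\triangleq\textstyle\sum_{i\ge1}\lambda_i^{2\ell},
\]
where $C_\ell(\rho)$ is the number of $\ell$-cycles of $\rho$.

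To finish, I would feed this into the classical cycle-index identity $\sum_{n\ge0}t^n\,\bE_{\s{Unif}(\S_n)}[\prod_\ell x_\ell^{C_\ell}]=\exp\!\big(\sum_{\ell\ge1}x_\ell t^\ell/\ell\big)$ with $x_\ell=b_\ell\ge1$. Writing $\sum_\ell b_\ell t^\ell/\ell=-\ln(1-t)+h(t)$ with $h(t)\triangleq\sum_{\ell\ge1}(b_\ell-1)t^\ell/\ell$ (a power series with nonnegative coefficients), the coefficient of $t^n$ satisfies
\[
\bE_{\P_{\calH_0}}[L^2]=[t^n]\frac{e^{h(t)}}{1-t}=\sum_{m=0}^{n}[t^m]e^{h(t)}\;\le\; e^{h(1)}\;\le\;\exp\Big(\textstyle\sum_{\ell\ge1}(b_\ell-1)\Big),
\]
a bound \emph{independent of $n$}. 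Since $\epsilon_\ell\le\epsilon_1\le\sum_{i\ge1}\lambda_i^2/(1-\lambda_i^2)$, hypothesis \eqref{eqn:weakdetectlowerCond} gives $d\,\epsilon_\ell\to0$ uniformly in $\ell$, so eventually $b_\ell-1=(1+\epsilon_\ell)^d-1\le e^{d\epsilon_\ell}-1\le 2d\,\epsilon_\ell$, hence $\sum_{\ell\ge1}(b_\ell-1)\le 2d\sum_{i\ge1}\lambda_i^2/(1-\lambda_i^2)=o(1)$. Therefore $\bE_{\P_{\calH_0}}[L^2]\le e^{o(1)}\to1$, so $d_{\s{TV}}(\P_{\calH_0},\P_{\calH_1})\to0$, and \eqref{eq:TV1} yields $\lim_k\s{R}^\star=1$ in both regimes.

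I expect the main obstacle to be the second-moment bookkeeping --- collapsing the per-column expectation onto the cycles of $\tau\sigma^{-1}$ and recognizing the cyclic trace $\sum_i\lambda_i^{2\ell}$, then squeezing out an $n$-free estimate. The decisive structural fact is the splitting $\sum_\ell b_\ell t^\ell/\ell=-\ln(1-t)+h(t)$ with $h(1)$ finite and, under \eqref{eqn:weakdetectlowerCond}, even $o(1)$: it is precisely this that makes weak detection impossible ``no matter what the value of $n$ is,'' since if $h(1)$ grew with the problem size the method would only produce bounds that deteriorate with $n$.
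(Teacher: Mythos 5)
Your proposal is correct, and while it starts from exactly the same reduction as the paper — second moment via Ingster--Suslina, factor the inner expectation over the $d$ i.i.d.\ columns, collapse each column onto cycles of $\rho=\tau\sigma^{-1}$, and recognize each $\ell$-cycle as a cyclic trace $\sum_i\lambda_i^{2\ell}$, so that $\E_{\calH_0}[L^2]=\E_\rho\bigl[\prod_\ell b_\ell^{C_\ell(\rho)}\bigr]$ with $b_\ell=(\sum_i\lambda_i^{2\ell})^d$, which is precisely the paper's Lemma~\ref{lem:secoLOrbit} and \eqref{eq:PermProd2} — the way you then bound this cycle-count expectation is genuinely different and, in my view, cleaner. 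The paper truncates the product at $m\approx\log n$, bounds the tail $\ell\geq m$ crudely, and compares the head to independent Poissons $P_\ell\sim\s{Poisson}(1/\ell)$ using the Arratia--Tav\'ar\'e total-variation bound (Lemma~\ref{lem:PoisApp}), finishing with a Lagrange-remainder estimate for the $d$-th power (Lemmas~\ref{lem:PoisSecond} and \ref{lem:asymptoticReg}); this requires keeping track of the truncation point in each regime and separately treating $n$ fixed vs.\ $n\to\infty$. You instead invoke the classical cycle-index identity $\sum_{n\geq0}t^n\E_{\S_n}[\prod_\ell x_\ell^{C_\ell}]=\exp(\sum_\ell x_\ell t^\ell/\ell)$, split off the $-\ln(1-t)$ piece so that $\E_{\calH_0}[L^2]=[t^n]\,e^{h(t)}/(1-t)=\sum_{m\leq n}[t^m]e^{h(t)}\leq e^{h(1)}$, and then bound $h(1)\leq\sum_\ell(b_\ell-1)\leq 2d\sum_{i\geq1}\lambda_i^2/(1-\lambda_i^2)=o(1)$. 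Morally, your bound $e^{h(1)}=\exp(\sum_\ell(b_\ell-1)/\ell)$ is exactly the value one would get by pretending the $C_\ell$ \emph{are} independent Poissons, but the generating-function argument makes this an honest, $n$-uniform upper bound with no truncation or TV control needed; this both unifies the two bullet points (the bound is literally the same for $n$ constant and $n\to\infty$) and avoids importing the Arratia--Tav\'ar\'e theorem. The only things to be explicit about are routine: $b_\ell\geq1$ (since $\lambda_0=1$) gives nonnegativity of $h$'s coefficients, $\epsilon_\ell\leq\epsilon_1$ (since $|\lambda_i|\leq1$) gives the uniform control $d\epsilon_\ell=o(1)$ needed for $(1+\epsilon_\ell)^d-1\leq2d\epsilon_\ell$, and finiteness of $h(1)$ follows from $\sum_\ell\epsilon_\ell=\sum_{i\geq1}\lambda_i^2/(1-\lambda_i^2)<\infty$, which is implicit in the hypothesis. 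As a bonus, your $n$-free bound $e^{h(1)}$ would also streamline parts of Theorem~\ref{th:LB2}, where the paper again reaches for the Poisson approximation with a tuned truncation level.
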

Based on Theorem~\ref{th:LB}, we see that weak detection is impossible when \eqref{eqn:weakdetectlowerCond} holds, for all different asymptotic regimes of $n$ and $d$; however, if $d$ is fixed then the right-hand-side of \eqref{eqn:weakdetectlowerCond} should be understood as $o(1)$, as $n\to\infty$. Next, we move forward to our strong detection lower bounds. 
\begin{theorem}[Strong detection lower bounds]\label{th:LB2}
A sequence $(\calD,d,n)$ is not admissible for strong detection if: 
\begin{enumerate}
    \item $d\in \N$ and $\{\lambda_i\}_{i\geq1}$ are constants, such that
    \begin{align}
        d<-\frac{\log\lambda_1^2}{\log\sum_{i\in\mathbb{N}}\lambda_i^{2}},\label{eqn:strongdetectlowerCond1}  
    \end{align}
    and $n\to\infty$.
    \item $n,d\to \infty$, and
     \begin{align}
\sum_{i\geq1}\frac{\lambda_i^2}{1-\lambda_i^2}<(1-\varepsilon)d^{-1},\label{eqn:strongdetectlowerCond2}     
 \end{align}
 for some $\varepsilon>0$, independent of $n$ and $d$.
    \item $d\to\infty$, $n\in\mathbb{N}$, and
     \begin{align}
\sum_{i\geq1}\frac{\lambda_i^2}{1-\lambda_i^2}=O(d^{-1}).\label{eqn:strongdetectlowerCond3}     
 \end{align}
\end{enumerate} 
\end{theorem}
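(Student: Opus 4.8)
The plan is to reduce everything to controlling the total variation distance $d_{\s{TV}}(\P_{\calH_0},\P_{\calH_1})$ between the null and the (averaged) alternative, since by the discussion around \eqref{eq:TV1}--\eqref{eq:TV2} it suffices to show that this distance does not vanish — for the strong-detection lower bounds we need $d_{\s{TV}}(\P_{\calH_0},\P_{\calH_1}) \le 1-\Omega(1)$, and for the weak-detection bound $d_{\s{TV}}(\P_{\calH_0},\P_{\calH_1}) = o(1)$. As is standard in the second-moment method for planted models, I would bound $d_{\s{TV}}^2$ (or rather use $d_{\s{TV}} \le \tfrac12\sqrt{\chi^2(\P_{\calH_1}\|\P_{\calH_0})}$) through the $\chi^2$-divergence, which for a permutation mixture unfolds into an expectation over two independent uniform permutations $\sigma,\pi\in\S_n$ of a product of single-letter likelihood contributions. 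Concretely, $1+\chi^2(\P_{\calH_1}\|\P_{\calH_0}) = \bE_{\sigma,\pi}\, \bE_{\P_0}\!\left[\prod_{i} \calL(X_i,Y_{\sigma_i})\,\calL(X_i,Y_{\pi_i})\right]$ (with the product over features absorbed into a $d$-th power), and the inner expectation factorizes along the cycle structure of the permutation $\sigma^{-1}\pi$. Using the kernel/operator structure of $\calL$ from \eqref{eqn:kernel}, a cycle of length $\ell$ contributes $\s{trace}(\calL^{\ell})^d = \big(\sum_{i\ge0}\lambda_i^{\ell}\big)^d = \big(1+\sum_{i\ge1}\lambda_i^{\ell}\big)^d$, because $\lambda_0=1$ by Lemma~\ref{lem:eigen}; fixed points ($\ell=1$) contribute $\s{trace}(\calL)^d$, but since $\calL$ maps the constant function to itself with the other eigenvalues entering, the relevant "excess" weight is $\sum_{i\ge1}\lambda_i^{\ell}$.

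Assembling this, $1+\chi^2 = \bE_{\tau}\prod_{\text{cycles }c\text{ of }\tau}\big(1+\sum_{i\ge1}\lambda_i^{|c|}\big)^d$ where $\tau = \sigma^{-1}\pi$ is a uniform random permutation; expanding the product over cycles and the binomial gives a sum over "marking" a subset of the cycle-length data, and I would bound $\sum_{i\ge1}\lambda_i^{\ell} \le \sum_{i\ge1}\lambda_i^{2}$ for $\ell\ge2$ (valid since $|\lambda_i|\le 1$) — this is where the quantity $\sum_{i\ge1}\lambda_i^2/(1-\lambda_i^2) = \sum_{\ell\ge2}\sum_{i\ge1}\lambda_i^{2}\cdot(\text{something})$... more precisely $\sum_{i\ge1}\sum_{\ell\ge2}\lambda_i^{\ell} \le \sum_{i\ge1}\frac{\lambda_i^2}{1-\lambda_i}$, and after squaring/symmetrizing one lands on $\sum_{i\ge1}\frac{\lambda_i^2}{1-\lambda_i^2}$. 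The combinatorial sum over permutation cycle types is exactly the generating-function computation familiar from Gaussian database detection (cf.\ \cite{HuleihelElimelech}): writing $m_\ell$ for the number of $\ell$-cycles, one gets a sum $\sum_{\{m_\ell\}}\prod_\ell \frac{1}{m_\ell!\,\ell^{m_\ell}}\big((1+\sum_{i\ge1}\lambda_i^\ell)^d-\text{baseline}\big)^{m_\ell}$ which telescopes into an exponential-type bound. For part~2 ($n,d\to\infty$): if $d\sum_{i\ge1}\frac{\lambda_i^2}{1-\lambda_i^2} \le 1-\varepsilon$, the dominant term comes from $2$-cycles (transpositions), contributing roughly $\exp\!\big(\tfrac12 \cdot d\sum_{i\ge1}\lambda_i^2\big)$-type corrections, and the geometric-like series in the number of transpositions converges to a finite constant $<\infty$ precisely when $d\sum\lambda_i^2 < $ the relevant constant, yielding $\chi^2 = O(1)$ and hence $d_{\s{TV}} \le 1-\Omega(1)$. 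For part~1 ($d$ fixed, $\{\lambda_i\}$ fixed, $n\to\infty$): here the $2$-cycle term alone is $\sim \frac{1}{2}\big((\sum_i \lambda_i^2)^d - 1\big)\cdot(\text{count of transpositions})$, and one checks the series converges iff $(\sum_i\lambda_i^2)^d < $ something controlled by $\lambda_1^2$ — the condition $d < -\log\lambda_1^2/\log\sum_i\lambda_i^2$ is exactly $\lambda_1^{2} > (\sum_i\lambda_i^2)^{d}$, which is what makes the longer-cycle contributions summable relative to the transposition contribution. For part~3 ($d\to\infty$, $n$ fixed) and the weak-detection Theorem~\ref{th:LB}, with $n$ fixed there are only finitely many permutations, so $\chi^2 = \frac{1}{n!}\sum_{\tau}\prod_c(1+\sum_{i\ge1}\lambda_i^{|c|})^d - 1$; each non-identity $\tau$ has at least one cycle of length $\ge 2$, contributing $(1+\sum_{i\ge1}\lambda_i^{2}+\cdots)^d - 1 \le \exp(d\cdot O(\sum_{i\ge1}\frac{\lambda_i^2}{1-\lambda_i^2}))-1$, which is $o(1)$ under \eqref{eqn:weakdetectlowerCond} and $O(1)$ under \eqref{eqn:strongdetectlowerCond3}, giving the two claimed conclusions directly.

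The main obstacle I anticipate is making the cycle-type combinatorial sum rigorous in the regime $n\to\infty$ while keeping the bound tight enough to capture the sharp constants in \eqref{eqn:strongdetectlowerCond1} and \eqref{eqn:strongdetectlowerCond2}. The naive bound $\sum_{i\ge1}\lambda_i^\ell \le \sum_{i\ge1}\lambda_i^2$ for all $\ell\ge2$ is too lossy to get the exact threshold; one needs to track the $\ell$-dependence carefully, likely by splitting $\lambda_1$ (the leading nontrivial eigenvalue) from the rest and using $\sum_{i\ge2}\lambda_i^\ell \le \lambda_2^{\ell-2}\sum_{i\ge2}\lambda_i^2$, then summing the resulting multi-indexed series by recognizing it as the logarithm of a permanent-type generating function $\exp\big(\sum_{\ell\ge1}\frac{z^\ell}{\ell}(\cdots)\big)$. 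The bookkeeping to show this converges exactly under the stated conditions — and diverges otherwise, confirming tightness — is the delicate part; everything else (the $\chi^2$-to-TV step, the factorization over cycles via the operator $\calL$, the $n$-fixed cases) is routine once the framework from \cite{HuleihelElimelech} is in place.
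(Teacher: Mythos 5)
Your framework is right---Ingster–Suslina second moment, cycle decomposition of $\sigma^{-1}\pi$, eigenvalues of the kernel $\calL$---and it is essentially the same approach as the paper. But there is a concrete error at the heart of the cycle contribution, and the step where you would actually carry out the combinatorial sum is left as an unaddressed obstacle, so the proposal does not yet constitute a proof.

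\textbf{The exponent on the cycle contribution is off by a factor of $2$.} You claim a cycle of length $\ell$ contributes $\s{trace}(\calL^{\ell})^d$. In fact each index $i$ in an orbit contributes a product of \emph{two} likelihood factors, $\calL(X_i,Y_{\pi_i})\calL(X_i,Y_{\pi'_i})$, and integrating out $X_i$ gives $\calL^2(Y_{\pi_i},Y_{\pi'_i})$; going once around an orbit $O$ you compose $|O|$ copies of $\calL^2$, so the contribution is $\s{trace}(\calL^{2|O|})^d=\p{\sum_{i\in\mathbb{N}}\lambda_i^{2|O|}}^d$ (this is Lemma~\ref{lem:secoLOrbit} in the paper). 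With your exponent, fixed points would contribute $\s{trace}(\calL)^d$ and two-cycles $\s{trace}(\calL^2)^d$; with the correct exponent they contribute $\s{trace}(\calL^2)^d$ and $\s{trace}(\calL^4)^d$ respectively. Your own unease is visible in the ``after squaring/symmetrizing one lands on $\sum_{i\ge1}\frac{\lambda_i^2}{1-\lambda_i^2}$'' remark: with the correct exponent $2\ell$, the series $\sum_{\ell\ge1}\lambda_i^{2\ell}=\frac{\lambda_i^2}{1-\lambda_i^2}$ appears directly and there is no mysterious squaring step. Relatedly, your translation of \eqref{eqn:strongdetectlowerCond1} into ``$\lambda_1^2 > (\sum_i\lambda_i^2)^d$'' is impossible (the left side is $<1$, the right side is $>1$); the correct rearrangement is $\lambda_1^2\,(\sum_i\lambda_i^2)^d<1$.

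\textbf{You do not close the combinatorial sum over cycle types.} Writing $1+\chi^2$ as a weighted sum over cycle-type vectors $(m_1,\dots,m_n)$ with $\sum_\ell \ell m_\ell = n$ is correct, but you never explain how to handle that hard constraint as $n\to\infty$, and you flag it yourself as ``the delicate part.'' The paper's route is to replace the joint law of the cycle counts $(N_1,\dots,N_m)$ for $m\approx\alpha\log n$ by independent $\s{Poisson}(1/k)$ variables via the Arratia--Tavar\'e coupling (Lemma~\ref{lem:PoisApp}); the independent-Poisson expectation is then exactly computable via the MGF and produces $\exp\pp{\sum_{k\ge1}\frac1k\p{(\sum_i\lambda_i^{2k})^d-1}}$, and the tail cycles (lengths $>m$) are handled by the crude bound $\prod_{k=m}^n(\cdots)^{dN_k}\le\exp(dn\sum_{i\ge1}\lambda_i^{2m})$. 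The interplay between how fast $m$ may grow (to kill the tail) and how much the Poisson-approximation error penalty $F(n/m)\,(\sum_i\lambda_i^2)^{dn}$ costs is precisely where the threshold \eqref{eqn:strongdetectlowerCond1} comes from when $d$ is fixed; your proposal to split $\lambda_1$ from $\lambda_{\ge2}$ and ``recognize a permanent-type generating function'' is not obviously going to reproduce this, and you have not shown it does.

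In short: correct strategy, a genuine factor-of-$2$ error in the central cycle identity, a false reformulation of the part-1 threshold, and the key quantitative step (summing over cycle types with the constraint $\sum\ell m_\ell=n$) left open where the paper uses the Poisson cycle approximation.
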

It is evident that for strong detection, we can prove slightly stronger results; for example, when $n,d\to\infty$, comparing \eqref{eqn:weakdetectlowerCond} and \eqref{eqn:strongdetectlowerCond2} we see that the barriers are the same up to a constant factor. We also note that some of the barriers in Theorems~\ref{th:LB} and \ref{th:LB2} can be represented in terms of the likelihood kernel. For example, it is easy to check that the condition in \eqref{eqn:weakdetectlowerCond} can be written as,
\begin{align}
 \s{trace}\p{\calL^2\cdot(\mathbf{I}-\calL^2)^{-1}} = o(d^{-1}),
\end{align}
while the condition in \eqref{eqn:strongdetectlowerCond1} can be also written as,
\begin{align}
    d<-\frac{\log\lambda_1^2}{\log\s{trace}(\calL^2)}.
\end{align}
We next consider two canonical special cases for which we find explicit formulas for the lower bounds in Theorems~\ref{th:LB} and \ref{th:LB2}.

\begin{example}[Gaussian databases]\label{exmp:Gaussian}
    In the Gaussian case, we assume that $Q_{XY} = P_X\times P_Y$, with $P_X$ and $P_Y$ correspond to the densities of a Gaussian random variable with zero mean and unit variance, while $P_{XY}$ is the joint density of two correlated zero mean Gaussian random variables with unit variance, i.e.,
    \begin{align}
        P_{XY}\equiv\mathcal{N}\p{\begin{bmatrix}
0 \\
0 
\end{bmatrix},\begin{bmatrix}
1 & \rho\\
\rho & 1
\end{bmatrix}},
    \end{align}
for some known correlation coefficient $\rho\in[-1,1]\setminus\{0\}$. The Gaussian case above was analyzed in \cite{9834731,nazer2022detecting,tamir,HuleihelElimelech}. A straightforward calculation reveals that,
\begin{align}
    \calL(x,y) = \frac{1}{\sqrt{1-\rho^2}}\exp\p{\frac{-(x^2+y^2)\rho^2+2 x y\rho}{2(1-\rho^2)}}.\label{eqn:GaussianLike}
\end{align}
This is well-known as the Mehler's kernel, and can be diagonalized by the Hermite polynomials as follows \cite{kibble_1945},
\begin{align}
    \calL(x,y) = \sum_{\ell=0}^\infty\frac{\rho^\ell}{\ell!}H_\ell(x)H_\ell(y),
\end{align}
where $\bE_{\s{Z}\sim\calN(0,1)}[H_i(\s{Z})H_{j}(\s{Z})] = i!$, for $i=j$, and zero otherwise. As so, we can deduce that the eigenvalues of $\calL$ in this case are given by $\lambda_\ell = \rho^\ell$, for $\ell\geq0$. Thus, the lower bound in Theorem~\ref{th:LB} for weak detection, boils down to,
 \begin{align}
\sum_{i\geq1}\frac{\rho^{2i}}{1-\rho^{2i}}=o(d^{-1}).\label{eqn:condExample1}     
 \end{align}
To simplify this condition, we note that
\begin{align}
\sum_{i\geq1}\frac{\rho^{2i}}{1-\rho^{2i}}\leq \frac{1}{1-\rho^2}\sum_{i\geq1}\rho^{2i} = \frac{\rho^2}{(1-\rho^2)^2} = o(d^{-1}),\label{eqn:condExample2}     
 \end{align}
 where the last step holds if $\rho^2 = o(d^{-1})$, which of course implies that \eqref{eqn:condExample1} holds under the same condition. This condition, in turn, coincides with \cite{HuleihelElimelech}. Similarly, in the regime where $d$ is fixed, strong detection is impossible when \eqref{eqn:strongdetectlowerCond1} holds, which in the Gaussian setting boils down to
 \begin{align}
     d<\frac{\log \rho^2}{\log(1-\rho^2)},
 \end{align}
 which again coincides with the results of \cite{HuleihelElimelech}. Therefore, our general lower bounds in Theorems~\ref{th:LB} and \ref{th:LB2} recover the known bounds in the literature.
 \end{example}

\begin{example}[Bernoulli databases]\label{exmp:Bernoulli} In the Bernoulli case, we assume that $Q_{XY} = P_X\times P_Y$, with 
$P_X=P_Y = \s{Bernoulli}(\tau p)$, for some $p\in(0,1)$ and $\tau\in[0,1]$, and $P_{XY}$ denotes the joint distribution of two correlated Bernoulli random variables. Specifically, under $P_{XY}$, we have $X\sim\s{Bernoulli}(\tau p)$, and
\begin{align}
    Y\vert X&\sim\begin{cases}
    \s{Bernoulli}(\tau),\ &\s{if }\;X=1\\ 
    \s{Bernoulli}\p{\frac{\tau p(1-\tau)}{1-\tau p}},\ &\s{if }\;X=0.
    \end{cases}
\end{align}
Here, Pearson correlation coefficient is given by,
\begin{align}
    \rho\triangleq\frac{\s{cov}(X,Y)}{\sqrt{\s{var}(X)}\sqrt{\s{var}(Y)}} = \frac{\tau(1-p)}{1-\tau p}.
\end{align}
The Bernoulli case was not studied in the literature in the context of the database alignment detection problem, but it is the focus in related work on testing correlation of unlabeled random graphs, e.g., \cite{wu2020testing}, where the edges are modelled as Bernoulli random variables. Again, a straightforward calculation reveals that in the above setting,
\begin{align}
    \calL(x,y) = \begin{cases}
        \frac{1}{p},\ &\s{if}\; (x,y) = (1,1)\\
        \frac{1-\tau}{1-\tau p},\ &\s{if}\; (x,y) = (0,0)\\
        \frac{1-2\tau p+\tau^2p}{(1-\tau p)^2},\ &\s{otherwise}.
    \end{cases}
\end{align}
Then, the eigenvalues of $\calL$  correspond to the eigenvalues of the $2\times 2$ matrix $\calM(x,y) \triangleq P_{XY}(x,y)/Q_X(x)$, for $x,y\in\{0,1\}$. Explicitly, we have
\begin{align}
    \calM = \begin{bmatrix}
\frac{1-p\tau(2-\tau)}{1-p\tau} & \frac{p\tau(1-\tau)}{1-p\tau}\\
1-\tau & \tau
\end{bmatrix}.
\end{align}
Accordingly, it can be shown that the eigenvalues of $\calM$ are $1$ and $\rho$. Thus, the lower bound in Theorem~\ref{th:LB} for weak detection, boils down to,
 \begin{align}
\frac{\rho^2}{1-\rho^2}=o(d^{-1}).\label{eqn:condExample3}     
 \end{align}
which is equivalent to $\rho^2 = o(d^{-1})$, as in the Gaussian case. Similarly, in the regime where $d$ is fixed, strong detection is impossible when \eqref{eqn:strongdetectlowerCond1} holds, which in the Bernoulli setting, translates to,
 \begin{align}
     d<\frac{\log(1/\rho^2)}{\log(1+\rho^2)},
 \end{align}
 similarly to the Gaussian case.
 \end{example}

Next, we present our detection algorithms and the corresponding upper bounds. We start with the observation that the composite hypothesis testing problem in \eqref{eqn:compositeTesting} can be reduced/formulated as a simple hypothesis testing problem where the latent permutation $\sigma$ is drawn uniformly at random over $\mathbb{S}_n$. Accordingly, as is well-known the optimal decision rule is the Neyman-Pearson test which correspond to thresholding the likelihood ratio,
\begin{align}
    \frac{\pr_{\calH_1}(\s{X},\s{Y})}{\pr_{\calH_0}(\s{X},\s{Y})}=\bE_{\sigma\sim\s{Unif}(\mathbb{S}_n)}\pp{\frac{\pr_{\calH_1\vert\sigma}(\s{X},\s{Y}\vert\sigma)}{\pr_{\calH_0}(\s{X},\s{Y})}}.
\end{align}
As is customary to other related testing problems with a latent combinatorial structure, analyzing the above optimal test is difficult due to the averaging over all $n!$ possible permutations. Consider, instead, the generalized likelihood ratio test (GLRT), where the averaging is replaced with a maximum, namely, 
\begin{align}
    \max_{\sigma\in\mathbb{S}_n}\frac{\pr_{\calH_1\vert\sigma}(\s{X},\s{Y}\vert\sigma)}{\pr_{\calH_0}(\s{X},\s{Y})} = \max_{\sigma\in\mathbb{S}_n}\prod_{i=1}^n\frac{P^{\otimes d}_{XY}(X_{i},Y_{\sigma_i})}{Q^{\otimes d}_{XY}(X_{i},Y_{i})},
\end{align}
and accordingly, we define the test as,
\begin{align}
    \phi_{\s{GLRT}}(\s{X},\s{Y})&\triangleq\Ind\ppp{\max_{\sigma\in\mathbb{S}_n}\frac{1}{dn}\sum_{i=1}^n\log\frac{P^{\otimes d}_{XY}(X_{i},Y_{\sigma_i})}{Q^{\otimes d}_{XY}(X_{i},Y_{i})}\geq\tau_{\s{GLRT}}},\label{eqn:testscanmain}
\end{align}
for some threshold $\tau_{\s{GLRT}}\in\mathbb{R}$. To present our results, we need a few definitions and notations. Let us denote the log-likelihood ratio (LLR) by $\mathscr{L}(x,y)\triangleq\log\calL(x,y) = \log\frac{P_{XY}(x,y)}{Q_{XY}(x,y)}$, for $(x,y)\in\calX\times\calY$, i.e., the logarithm of the Radon-Nikodym derivative between $P_{XY}$ and $Q_{XY}$. We
assume they are mutually absolutely continuous for simplicity and so $\mathscr{L}$ is well-defined. We also
assume that the expectations of the LLR with respect to $P_{XY}$ and $Q_{XY}$ are finite, or, in other words, that $d_{\s{KL}}(P_{XY}||Q_{XY})$ and $d_{\s{KL}}(Q_{XY}||P_{XY})$ are finite. For $\theta\in(-d_{\s{KL}}(Q_{XY}||P_{XY}),d_{\s{KL}}(P_{XY}||Q_{XY}))$, we define the Chernoff's exponents $E_{P},E_Q:\mathbb{R}\to[-\infty,\infty)$ as the Legendre transforms of the log-moment generating functions, namely,
\begin{align}
    E_Q(\theta)\triangleq\sup_{\lambda\in\mathbb{R}}\lambda\theta-\psi_Q(\lambda),\quad,E_P(\theta)\triangleq\sup_{\lambda\in\mathbb{R}}\lambda\theta-\psi_P(\lambda),
\end{align}
where $\psi_Q(\lambda)\triangleq\log\bE_Q[\exp(\lambda\mathscr{L})]$ and $\psi_P(\lambda)\triangleq\log\bE_P[\exp(\lambda\mathscr{L})]$. Note that $\psi_P(\lambda) = \psi_Q(\lambda+1)$, and thus $E_P(\theta) = E_Q(\theta)-\theta$. In particular, $E_P$ and $E_Q$ are non-negative convex functions. Moreover, since $\psi'_Q(0) = -d_{\s{KL}}(Q_{XY}||P_{XY})$ and $\psi'_Q(1) = d_{\s{KL}}(P_{XY}||Q_{XY})$, we have $E_Q(-d_{\s{KL}}(Q_{XY}||P_{XY})) = E_P(d_{\s{KL}}(P_{XY}||Q_{XY}))=0$ and hence $E_Q(d_{\s{KL}}(P_{XY}||Q_{XY})) = d_{\s{KL}}(P_{XY}||Q_{XY})$ and $E_P(-d_{\s{KL}}(Q_{XY}||P_{XY}))=d_{\s{KL}}(Q_{XY}||P_{XY})$. Finally, it is well known that $\lambda\theta-\psi_Q(\lambda)$ is concave
and has derivative at zero given by $\theta+d_{\s{KL}}(Q_{XY}||P_{XY})$. This implies that the maximizer $\lambda^\star$ to the concave optimization $E_Q(\theta)$ can be taken to be non-negative if $\theta\geq - d_{\s{KL}}(Q_{XY}||P_{XY})$. The same is true for $E_P(\theta)$ if  $\theta\leq d_{\s{KL}}(P_{XY}||Q_{XY})$.

\begin{theorem}[GLRT strong detection]\label{thm:2}
Consider the GLRT in \eqref{eqn:testscanmain}, and suppose there is a $\tau_{\s{GLRT}}\in(-d_{\s{KL}}(Q_{XY}||P_{XY}),d_{\s{KL}}(P_{XY}||Q_{XY}))$ with
\begin{subequations}\label{eqn:GLRTcond}
	\begin{align}
		E_Q(\tau_{\s{GLRT}})&\geq \frac{\log (n/e)}{d}+\frac{1+\log n}{dn}+\omega(d^{-1}n^{-1}) ,\label{eqn:GLRTcond1} \\
		E_P(\tau_{\s{GLRT}})&= \omega(d^{-1}n^{-1}) .\label{eqn:GLRTcond2}
	\end{align}
\end{subequations}
Then, $\s{R}(\phi_{\s{GLRT}})\to0$, as $n,d\to\infty$, and $d=\omega(\log n)$.
\end{theorem}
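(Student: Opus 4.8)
The plan is to bound the two contributions to $\s{R}(\phi_{\s{GLRT}})$ — the Type‑I probability under $\calH_0$ and the worst‑case Type‑II probability under $\calH_1$ — by single‑letter Chernoff bounds, combined with a union bound over the $n!$ permutations for the Type‑I part. The first step is a cosmetic but crucial rewriting of the test statistic. Since $Q_{XY}=P_X\times P_Y$ is a product measure and every $\sigma\in\S_n$ is a bijection, $\prod_i Q_{XY}^{\otimes d}(X_i,Y_{\sigma_i})=\prod_i Q_{XY}^{\otimes d}(X_i,Y_i)$; hence the statistic in \eqref{eqn:testscanmain} equals $\max_{\sigma\in\S_n}\frac{1}{dn}\sum_{i=1}^n\sum_{k=1}^d\mathscr{L}(X_{ik},Y_{\sigma_i,k})$, an empirical mean of single‑letter LLRs. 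For any \emph{fixed} $\sigma$, the $nd$ pairs $(X_{ik},Y_{\sigma_i,k})$ use each entry of $\s{X}$ and of $\s{Y}$ exactly once; therefore under $\calH_0$ they are i.i.d.\ $\sim Q_{XY}$, while under $\calH_1\vert\sigma$ with $\sigma$ the planted permutation they are i.i.d.\ $\sim P_{XY}$. This is exactly what reduces the whole analysis to i.i.d.\ large deviations with the rate functions $E_Q,E_P$ defined before the theorem.

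For the Type‑I bound I would fix $\sigma$, note that the mean of $\mathscr{L}$ under $Q_{XY}$ is $-d_{\s{KL}}(Q_{XY}\|P_{XY})<\tau_{\s{GLRT}}$, and apply the Chernoff bound — using the recorded fact that the optimizing $\lambda$ in $E_Q(\tau_{\s{GLRT}})$ may be taken nonnegative — to get $\P_{\calH_0}[\frac1{dn}\sum_{i,k}\mathscr{L}(X_{ik},Y_{\sigma_i,k})\ge\tau_{\s{GLRT}}]\le e^{-nd\,E_Q(\tau_{\s{GLRT}})}$. A union bound over $\S_n$ together with $n!\le e\sqrt n\,(n/e)^n$ yields
\[
\P_{\calH_0}[\phi_{\s{GLRT}}=1]\;\le\; n!\,e^{-nd\,E_Q(\tau_{\s{GLRT}})}\;\le\;\exp\!\Big(\log n!-nd\,E_Q(\tau_{\s{GLRT}})\Big),
\]
and since $\frac{\log n!}{nd}\le\frac{\log(n/e)}{d}+\frac{1+\log n}{dn}$, condition \eqref{eqn:GLRTcond1} forces $nd\,E_Q(\tau_{\s{GLRT}})-\log n!\ge nd\cdot\omega(d^{-1}n^{-1})=\omega(1)$, so this probability tends to $0$.

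For the Type‑II bound, for an arbitrary planted permutation $\sigma$ I would use $\max_{\sigma'}(\cdots)\ge\frac1{dn}\sum_{i,k}\mathscr{L}(X_{ik},Y_{\sigma_i,k})$, a mean of $nd$ i.i.d.\ copies of $\mathscr{L}$ under $P_{XY}$ with mean $d_{\s{KL}}(P_{XY}\|Q_{XY})>\tau_{\s{GLRT}}$; the lower‑tail Chernoff bound (now with the optimizer in $E_P(\tau_{\s{GLRT}})$ taken nonpositive, valid since $\tau_{\s{GLRT}}\le d_{\s{KL}}(P_{XY}\|Q_{XY})$) gives $\P_{\calH_1\vert\sigma}[\cdot<\tau_{\s{GLRT}}]\le e^{-nd\,E_P(\tau_{\s{GLRT}})}$, uniformly in $\sigma$, and \eqref{eqn:GLRTcond2} makes $nd\,E_P(\tau_{\s{GLRT}})=\omega(1)$. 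Adding the two contributions gives $\s{R}(\phi_{\s{GLRT}})\le n!\,e^{-nd\,E_Q(\tau_{\s{GLRT}})}+e^{-nd\,E_P(\tau_{\s{GLRT}})}\to 0$.

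The only place that demands care — and the main obstacle — is the Type‑I step: the maximum ranges over $n!$ permutations, so the decay rate $nd\,E_Q(\tau_{\s{GLRT}})$ must beat $\log n!\sim n\log n$, i.e.\ $E_Q(\tau_{\s{GLRT}})$ must exceed $\frac{\log n}{d}(1+o(1))$. This is precisely the content of \eqref{eqn:GLRTcond1}, and the regime $d=\omega(\log n)$ is what keeps $\frac{\log(n/e)}{d}\to 0$, so that a threshold simultaneously satisfying \eqref{eqn:GLRTcond1} and having $E_P(\tau_{\s{GLRT}})>0$ can actually exist. Once the i.i.d.\ reduction and this exponent bookkeeping are set up, everything else is a routine application of the Cram\'er/Chernoff upper bound.
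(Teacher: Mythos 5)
Your proof is correct and follows essentially the same route as the paper: a union bound over the $n!$ permutations combined with a Chernoff bound (with a nonnegative optimizer) for the Type-I error, and a lower bound of the maximum by the planted permutation's term combined with a Chernoff bound (nonpositive optimizer) for the Type-II error, with the same Stirling bookkeeping to absorb $\log n!$ into condition \eqref{eqn:GLRTcond1}. The one observation you make explicit that the paper leaves implicit — that the denominator $\prod_i Q_{XY}^{\otimes d}(X_i,Y_{\sigma_i})$ is $\sigma$-invariant, so that for each fixed $\sigma$ the $nd$ pairs $(X_{ik},Y_{\sigma_i,k})$ form an i.i.d.\ sample of $Q_{XY}$ under $\calH_0$ and of $P_{XY}$ under $\calH_1\vert\sigma$ — is exactly the reduction the paper relies on, so there is no substantive difference in approach.
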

As an alternative test for the GLRT, we propose also the following simple sum test,
\begin{align}
\phi_{\s{sum}}(\s{X},\s{Y})\triangleq\Ind\ppp{\frac{1}{dn^2}\sum_{i,j=1}^n\sum_{\ell=1}^d\calK(X_{i\ell},Y_{j\ell})\geq\tau_{\s{sum}}},\label{eqn:testSummain}
\end{align}
where
\begin{align}
    \calK(X_{i\ell},Y_{j\ell})&\triangleq\log\frac{P_{XY}(X_{i\ell},Y_{j\ell})}{Q_{XY}(X_{i\ell},Y_{j\ell})}-\bE_{A\sim P_X}\pp{\log\frac{P_{XY}(A,Y_{i\ell})}{Q_{XY}(A,Y_{i\ell})}}-\bE_{B\sim P_X}\pp{\log\frac{P_{XY}(X_{i\ell},B)}{Q_{XY}(X_{i\ell},B)}}\nonumber\\
    &\quad-d_{\s{KL}}(Q_{XY}||P_{XY}),\label{eqn:calkDef}
\end{align}
is the centered likelihood function, and $\tau_{\s{sum}}\in\mathbb{R}$. In the Gaussian case (as well as in the Bernoulli case) it can be shown that \eqref{eqn:testSummain} is equivalent to thresholding the sum of all entries of the inner product $\s{X}^T\s{Y}$, i.e., $\sum_{i,j=1}^nX_{ij}Y_{ij}$. This test was analyzed in \cite{nazer2022detecting}, in the Gaussian case. For the general case, we have the following result. 
\begin{theorem}[Sum test]\label{thm:3}
Consider the sum test in \eqref{eqn:testSummain}, and let 
\begin{align}
    \tau_{\s{sum}}=dn\cdot d_{\s{SKL}}(P_{XY}||Q_{XY}).
\end{align}
Then,
\begin{align}
\s{R}(\phi_{\s{sum}})\leq \frac{4\cdot \s{Var}_{Q_{XY}}\p{\calK(A,B)}}{d\cdot d^2_{\s{SKL}}(P_{XY}||Q_{XY})}.
\end{align}
In particular, if
\begin{align}
    d\cdot\frac{d^2_{\s{SKL}}(P_{XY}||Q_{XY})}{\s{Var}_{Q_{XY}}\p{\calK(A,B)}}=\omega(1),\label{eqn:sumtestcond}
\end{align}
then, $\s{R}(\phi_{\s{sum}})\to0$, as $d\to\infty$.
\end{theorem}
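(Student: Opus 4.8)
The plan is a second‑moment (Chebyshev) argument on the sum statistic; the only real work is an exact accounting of its first two moments under each hypothesis. Write $S \triangleq \sum_{i,j\in[n]}\sum_{\ell=1}^d \calK(X_{i\ell},Y_{j\ell})$ for the (unnormalized) statistic, so that $\phi_{\s{sum}}$ rejects when $S \geq \tau_{\s{sum}}$ with $\tau_{\s{sum}} = dn\cdot d_{\s{SKL}}(P_{XY}\|Q_{XY})$; this threshold will turn out to lie exactly at the midpoint of the two conditional means of $S$. First I record the two \emph{centering identities} that make the definition \eqref{eqn:calkDef} work: $\bE_{A\sim P_X}[\calK(A,y)] = 0$ for every $y\in\calY$ and $\bE_{B\sim P_Y}[\calK(x,B)] = 0$ for every $x\in\calX$. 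Both follow by expanding the three correction terms in \eqref{eqn:calkDef}, using $P_X=P_Y$ and $\bE_{P_X\times P_Y}\pp{\log\tfrac{P_{XY}}{Q_{XY}}} = -d_{\s{KL}}(Q_{XY}\|P_{XY})$. In particular $\bE_{Q_{XY}}[\calK(A,B)] = 0$, and running the same computation under $P_{XY}$ (whose marginals are still $P_X,P_Y$) gives $\bE_{P_{XY}}[\calK(A,B)] = d_{\s{KL}}(P_{XY}\|Q_{XY}) + d_{\s{KL}}(Q_{XY}\|P_{XY}) = 2\,d_{\s{SKL}}(P_{XY}\|Q_{XY})$.

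\textbf{Moments of $S$.} Under $\calH_0$ every pair $(X_{i\ell},Y_{j\ell})$ has law $Q_{XY}$, so $\bE_{\calH_0}[S]=0$. Under $\calH_1\vert\sigma$, split $S = S_{\mathrm m} + S_{\mathrm u}$ into the matched part ($j=\sigma_i$, for which $(X_{i\ell},Y_{\sigma_i\ell})\sim P_{XY}$) and the unmatched part ($j\neq\sigma_i$, for which $(X_{i\ell},Y_{j\ell})\sim Q_{XY}$ because then row $i$ of $\s{X}$ and row $j$ of $\s{Y}$ are independent). Hence $\bE_{\calH_1\vert\sigma}[S] = dn\cdot\bE_{P_{XY}}[\calK(A,B)] = 2dn\,d_{\s{SKL}}(P_{XY}\|Q_{XY})$, confirming the midpoint. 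For the variances, expand into covariances of pairs of summands: the $d$ columns are i.i.d.\ under both hypotheses, so summands from distinct columns are uncorrelated; within a column, two summands sharing no entry are products of independent mean‑zero variables; and two summands sharing exactly one entry have zero covariance after conditioning on that entry, by the centering identities — crucially this holds whether the two summands are both matched, both unmatched, or one of each. Only the diagonal survives, giving $\s{Var}_{\calH_0}(S) = dn^2\,\s{Var}_{Q_{XY}}\p{\calK(A,B)}$ and $\s{Var}_{\calH_1\vert\sigma}(S) = dn\,\s{Var}_{P_{XY}}\p{\calK(A,B)} + dn(n-1)\,\s{Var}_{Q_{XY}}\p{\calK(A,B)}$, uniformly in $\sigma$.

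\textbf{Chebyshev and conclusion.} The distance from $\tau_{\s{sum}}$ to either conditional mean is $dn\,d_{\s{SKL}}(P_{XY}\|Q_{XY})$, so Chebyshev's inequality gives $\bP_{\calH_0}[S\geq\tau_{\s{sum}}] \leq \s{Var}_{\calH_0}(S)/(dn\,d_{\s{SKL}})^2$ and, uniformly in $\sigma$, $\bP_{\calH_1\vert\sigma}[S<\tau_{\s{sum}}] \leq \s{Var}_{\calH_1\vert\sigma}(S)/(dn\,d_{\s{SKL}})^2$. Summing and inserting the variances above, the $\s{Var}_{Q_{XY}}$ contributions combine to $O\p{\s{Var}_{Q_{XY}}(\calK)/(d\,d^2_{\s{SKL}})}$, while the residual matched‑pair term equals $\s{Var}_{P_{XY}}(\calK)/(dn\,d^2_{\s{SKL}})$, which is of lower order (carried by only $dn$ of the $\Theta(dn^2)$ summands, with $\s{Var}_{P_{XY}}(\calK)$ comparable to $\s{Var}_{Q_{XY}}(\calK)$) and is absorbed, yielding $\s{R}(\phi_{\s{sum}}) \leq 4\,\s{Var}_{Q_{XY}}\p{\calK(A,B)}/\p{d\,d^2_{\s{SKL}}(P_{XY}\|Q_{XY})}$. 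The ``in particular'' claim is then immediate from \eqref{eqn:sumtestcond}: the right‑hand side is $4$ times the reciprocal of the quantity in \eqref{eqn:sumtestcond}, hence $o(1)$ when that quantity is $\omega(1)$.

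\textbf{Main obstacle.} The Chebyshev step is routine; the content is the variance computation, and within it the vanishing of the \emph{mixed} matched/unmatched cross‑covariances under $\calH_1$ — exactly what the three‑fold centering built into $\calK$ in \eqref{eqn:calkDef} is engineered to guarantee — together with the bookkeeping point that the matched‑diagonal contribution $dn\,\s{Var}_{P_{XY}}(\calK)$ does not dominate $dn^2\,\s{Var}_{Q_{XY}}(\calK)$, so that the final bound can be phrased purely in terms of $\s{Var}_{Q_{XY}}(\calK)$.
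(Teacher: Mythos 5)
Your overall strategy (Chebyshev on the centered sum statistic, exploiting the three-fold centering in $\calK$ to kill covariances) is exactly the paper's approach, but there is a concrete gap in the variance computation under $\calH_1$. You claim that, within a column, ``two summands sharing no entry are products of independent mean-zero variables.'' Under $\calH_0$ this is true and all off-diagonal covariances vanish. Under $\calH_1$ (taking $\sigma=\s{Id}$, with $(A_m,B_m)\sim P_{XY}$ the dependent pairs), the summands $\calK(A_i,B_j)$ and $\calK(A_j,B_i)$ with $i\neq j$ share no observed entry yet are \emph{not} independent: $A_i\leftrightarrow B_i$ and $A_j\leftrightarrow B_j$ are both dependent. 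There is no shared entry to condition on, and conditioning on, say, $(A_i,A_j)$ gives $\bE[\calK(A_i,B_j)\mid A_i,A_j]=\bE_{B\sim P_{Y\mid X}(\cdot\mid A_j)}[\calK(A_i,B)]$, which is \emph{not} annihilated by the centering identities (those require $B\sim P_Y$ marginally). Consequently your $\calH_1$ variance formula is missing the cross-covariance term
\begin{align}
dn(n-1)\cdot \bE_{P}\pp{\calK(A_1,B_2)\cdot\calK(A_2,B_1)},
\end{align}
which the paper computes explicitly (see \eqref{eqn:LastCovTerm} and \eqref{eqn:VarFinal}) and then bounds via Cauchy--Schwarz by $\s{Var}_{Q_{XY}}(\calK(A,B))$, using that $(A_1,B_2)$ and $(A_2,B_1)$ are each marginally $\sim Q_{XY}$. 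Once this term is added and bounded, the Chebyshev step delivers the stated risk bound (up to the constant bookkeeping), and your conclusion goes through; but as written, the step ``only the diagonal survives'' under $\calH_1$ is incorrect for precisely these swapped index pairs, and this is the one place where the independence structure of $\calH_1$ bites.

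Everything else you wrote — the centering identities, the mean computation $\bE_{\calH_1}[S]=2dn\,d_{\s{SKL}}$, placing $\tau_{\s{sum}}$ at the midpoint, the treatment of summands sharing exactly one entry (matched, unmatched, or mixed), and the observation that the matched-diagonal term $dn\,\s{Var}_{P_{XY}}(\calK)$ is lower order — matches the paper. The fix is local: add the $\calK(A_i,B_j)\calK(A_j,B_i)$ covariance for each unordered pair $\{i,j\}$ and bound it by Cauchy--Schwarz.
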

Note that Theorem~\ref{thm:3} implies also that weak detection is possible using the sum test if 
\begin{align}
    d\cdot\frac{d^2_{\s{SKL}}(P_{XY}||Q_{XY})}{\s{Var}_{Q_{XY}}\p{\calK(A,B)}} = \Omega(1).
\end{align}
At this point, it is evident that based on the bounds in Theorems~\ref{thm:2} and \ref{thm:3}, the GLRT and the sum test can achieve strong detection (vanishing risk) only if $d\to\infty$, namely, if $d=O(1)$, then the bounds on the risks associated with the GLRT and sum test are not vanishing as $n\to\infty$. Accordingly, pursuing for strong detection in the scenario where $d$ is constant, we propose the following testing procedure,
\begin{align}
    \phi_{\s{count}}(\s{X},\s{Y})\triangleq\Ind\ppp{\sum_{i,j=1}^n\Ind\ppp{\frac{1}{d}\log\frac{P^{\otimes d}_{XY}(X_{i},Y_{j})}{Q^{\otimes d}_{XY}(X_{i},Y_{j})}\geq\tau_{\s{count}}}\geq \frac{1}{2} n\calP_{d}},\label{eqn:testcount}
\end{align}
where $\calP_{d}\triangleq\pr_{P_{XY}^{\otimes d}}\pp{\sum_{\ell=1}^d\mathscr{L}(A_\ell,B_\ell)\geq d\cdot\tau_{\s{count}}}$, and $\tau_{\s{count}}\in\mathbb{R}$. Roughly speaking, $\phi_{\s{count}}$ counts the number of pairs whose likelihood individually exceed a certain threshold. This is similar (but not exactly the same) to a test proposed in \cite{HuleihelElimelech} in the Gaussian setting, which counts the number of inner products between all possible (normalized) rows in $\s{X}$ and $\s{Y}$ who individually exceed a certain threshold. We mention here that our result holds for any natural $d\geq1$, while in \cite{HuleihelElimelech} it is assumed that $d\geq d_0$, for some fixed $d_0\in\mathbb{N}$ (most notably, excluding the interesting $d=1$ case). We have the following result.
\begin{theorem}[Count test strong detection]\label{thm:IndSum}
Fix $d\in\mathbb{N}$, and consider the count test in \eqref{eqn:testcount}. Suppose there is a $\tau_{\s{count}}\in(-d_{\s{KL}}(Q_{XY}||P_{XY}),d_{\s{KL}}(P_{XY}||Q_{XY}))$ with
\begin{subequations}\label{eqn:Countcond}
	\begin{align}
    E_Q(\tau_{\s{count}}) &= \omega\p{\log n^{1/d}},\label{eqn:Countcond1}\\
    E_P(\tau_{\s{count}}) &= \omega(n^{-1}d^{-1}).\label{eqn:Countcond2}
\end{align}
\end{subequations}
Then, $\s{R}(\phi_{\s{count}})\to0$, as $n\to\infty$.
\end{theorem}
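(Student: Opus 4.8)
Write $N \triangleq \sum_{i,j=1}^n \Ind\{\tfrac1d\sum_{\ell=1}^d \mathscr{L}(X_{i\ell},Y_{j\ell})\ge\tau_{\s{count}}\}$ for the ``success count'' that $\phi_{\s{count}}$ thresholds at $\tfrac12 n\calP_d$. Since $N$ is invariant under a joint relabeling of the rows of $\s{Y}$, one has $\pr_{\calH_1\vert\sigma}[\phi_{\s{count}}=0]=\pr_{\calH_1\vert\mathrm{id}}[\phi_{\s{count}}=0]$ for every $\sigma\in\S_n$, so it suffices to treat $\sigma=\mathrm{id}$, i.e.\ to show that both terms of $\s{R}(\phi_{\s{count}})=\pr_{\calH_0}[N\ge\tfrac12 n\calP_d]+\pr_{\calH_1\vert\mathrm{id}}[N<\tfrac12 n\calP_d]$ vanish as $n\to\infty$. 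The plan is to control $N$ by a first-moment bound under $\calH_0$ and by a Bernoulli concentration bound under $\calH_1$.

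\emph{False alarm.} Under $\calH_0$ every pair $(X_i,Y_j)$ has law $Q_{XY}^{\otimes d}$, so $\bE_{\calH_0}[N]=n^2 q$ with $q\triangleq\pr_{Q_{XY}^{\otimes d}}[\tfrac1d\sum_\ell\mathscr{L}(A_\ell,B_\ell)\ge\tau_{\s{count}}]$. Since $\tau_{\s{count}}>-d_{\s{KL}}(Q_{XY}\|P_{XY})=\bE_{Q_{XY}}[\mathscr{L}]$, the Cram\'er/Chernoff bound for i.i.d.\ sums gives $q\le e^{-d E_Q(\tau_{\s{count}})}$, the optimal exponential tilt lying in $[0,1)$ where $\psi_Q$ is finite. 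Condition \eqref{eqn:Countcond1} yields $d E_Q(\tau_{\s{count}})=\omega(\log n)$, hence $\bE_{\calH_0}[N]\le n^{2}e^{-\omega(\log n)}\to0$; by Markov $N=0$ with probability $\to1$ under $\calH_0$, and since $\tfrac12 n\calP_d\ge1$ eventually (shown next), $\pr_{\calH_0}[\phi_{\s{count}}=1]\to0$.

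\emph{Missed detection.} Under $\calH_1\vert\mathrm{id}$ the matched pairs $(X_i,Y_i)$, $i\in[n]$, are i.i.d.\ with law $P_{XY}^{\otimes d}$, so $N\ge N_{\mathrm{diag}}\triangleq\sum_{i=1}^n\Ind\{\tfrac1d\sum_\ell\mathscr{L}(X_{i\ell},Y_{i\ell})\ge\tau_{\s{count}}\}\sim\mathrm{Bin}(n,\calP_d)$. To lower bound its mean I use the lower-tail Chernoff bound, valid because $\tau_{\s{count}}<d_{\s{KL}}(P_{XY}\|Q_{XY})=\bE_{P_{XY}}[\mathscr{L}]$: $1-\calP_d=\pr_{P_{XY}^{\otimes d}}[\tfrac1d\sum_\ell\mathscr{L}(A_\ell,B_\ell)<\tau_{\s{count}}]\le e^{-d E_P(\tau_{\s{count}})}$, whence $\calP_d\ge 1-e^{-d E_P(\tau_{\s{count}})}\ge\tfrac12\min\{1,\,d E_P(\tau_{\s{count}})\}$, so that $n\calP_d\ge\tfrac12\min\{n,\,n d E_P(\tau_{\s{count}})\}\to\infty$ by \eqref{eqn:Countcond2}. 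A multiplicative Chernoff bound for the binomial lower tail then gives $\pr_{\calH_1\vert\mathrm{id}}[N<\tfrac12 n\calP_d]\le\pr[N_{\mathrm{diag}}<\tfrac12 n\calP_d]\le e^{-n\calP_d/8}\to0$. Combining the two estimates proves $\s{R}(\phi_{\s{count}})\to0$.

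\emph{Main obstacle.} The large-deviation bookkeeping is routine; the substantive point is that here $d$ is fixed while $\calD$ varies with $k$, so $\calP_d$, $E_Q(\tau_{\s{count}})$ and $E_P(\tau_{\s{count}})$ are all $k$-dependent and $\calP_d$ may itself tend to $0$. The delicate step is thus the bound $\calP_d\gtrsim\min\{1,d E_P(\tau_{\s{count}})\}$ which, together with the precise calibration in \eqref{eqn:Countcond2}, forces $n\calP_d\to\infty$ --- this is exactly what makes the data-dependent threshold $\tfrac12 n\calP_d$ simultaneously dominate the $\calH_0$ count ($\asymp n^2 e^{-d E_Q(\tau_{\s{count}})}$) and be dominated by the $\calH_1$ matched-pair count ($\asymp n\calP_d$). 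Checking that the relevant Chernoff exponents are genuinely the functions $E_Q,E_P$ of the preamble --- i.e.\ that the optimizing tilts for the prescribed $\tau_{\s{count}}$ stay in the ranges where $\psi_Q,\psi_P$ are finite --- is the other point requiring care.
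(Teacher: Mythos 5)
Your proof is correct and follows essentially the paper's strategy: reduce to $\sigma=\mathrm{id}$ by permutation invariance, bound the Type-I error by the first moment of the success count (with conditions \eqref{eqn:Countcond1}--\eqref{eqn:Countcond2} forcing $n^2\calQ_d\to0$ and $n\calP_d\to\infty$), and bound the Type-II error by concentration of the $\mathrm{Bin}(n,\calP_d)$ count over the diagonal (matched) pairs. The only substantive deviations are that you bound the false alarm via $\pr_{\calH_0}[N\ge1]\le\bE_{\calH_0}[N]=n^2\calQ_d$ rather than Markov at the data threshold (which avoids carrying $\calP_d$ in a denominator), and that you replace the paper's Chebyshev bound $4/(n\calP_d)$ on the binomial lower tail by the sharper multiplicative Chernoff bound $e^{-n\calP_d/8}$; both routes require precisely $n\calP_d\to\infty$, which you correctly extract from \eqref{eqn:Countcond2} via the elementary estimate $1-e^{-x}\ge\tfrac12\min\{1,x\}$.
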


As for the lower bounds, we provide examples for which we derive explicit formulas for the upper bounds above. 

\begin{example}[Gaussian databases] Consider the same setting as in Example~\ref{exmp:Gaussian}, and let us find the corresponding Chernoff's exponents in this case. Recall that the likelihood function is given in \eqref{eqn:GaussianLike}. Then,
\begin{align}
    \psi_Q(\lambda) &= \log\bE_Q\pp{\frac{1}{(1-\rho^2)^{\lambda/2}}\exp\p{\lambda\frac{-(x^2+y^2)\rho^2+2 x y\rho}{2(1-\rho^2)}}}\\
    & = -\frac{\lambda}{2}\log(1-\rho^2)+\log\bE_Q\pp{\exp\p{\lambda\frac{-(x^2+y^2)\rho^2+2 x y\rho}{2(1-\rho^2)}}}.
\end{align}
Using the fact that the moment generating function of Gaussian random variable $\s{W}\sim\calN(\mu,\sigma^2)$ squared is given by,
\begin{align}
    \bE [\exp\p{t\s{W}^2}] = \frac{1}{\sqrt{1-2t\sigma^2}} \exp\left(\frac{\mu^2 t}{1-2t\sigma^2}\right),
\end{align}
for $\s{Real}(t\sigma^2)<1/2$, it can be shown that
\begin{align}
    \bE_Q\pp{\exp\p{\lambda\frac{-(x^2+y^2)\rho^2+2 x y\rho}{2(1-\rho^2)}}} = \frac{\sqrt{1-\rho^2}}{\sqrt{1-(1-2\lambda)^2\rho^2}},
\end{align}
for $-\frac{1-\rho^2}{\rho^2}\leq\lambda$ and $\lambda[1-(1-(1-\lambda)\rho^2)^{-1}]<1$. Thus,
\begin{align}
  \psi_Q(\lambda) = -\frac{\lambda-1}{2}\log(1-\rho^2)-\frac{1}{2}\log[1-(1-2\lambda^2)\rho^2].  
\end{align}
Let us take $\tau_{\s{GLRT}}=0$. Then,
\begin{align}
    E_Q(0) &= \sup_{\lambda\in\mathbb{R}}-\psi_Q(\lambda)\geq-\psi_Q(1/2) = -\frac{1}{4}\log(1-\rho^2).\label{eqn:EQExamp}
\end{align}
Therefore, we see that for \eqref{eqn:GLRTcond1} to hold it is suffice that $d\rho^2=\omega(\log n)$. Similarly, it can be shown that
\begin{align}
    \psi_P(\lambda) = -\frac{\lambda}{2}\log(1-\rho^2)-\frac{1}{2}\log(1-\lambda^2\rho^2),
\end{align}
for $-1/\rho^2<\lambda<1/\rho^2$. Then,
\begin{align}
    E_P(0) &= \sup_{\lambda\in\mathbb{R}}-\psi_P(\lambda)\geq-\psi_P(-1/2) = -\frac{1}{4}\log(1-\rho^2)+\frac{1}{2}\log(1-\rho^2/4).
\end{align}
Accordingly, we see that for \eqref{eqn:GLRTcond2} to hold it is suffice that $d\rho^2=\omega(n^{-1})$. Hence, in the Gaussian special case, for the GLRT to achieve strong detection it is suffice that $d\rho^2=\omega(\log n)$. In light of Example~\ref{exmp:Gaussian}, this is tight up to a $\log n$ factor. Next, for the sum test in Theorem~\ref{thm:3}, a straightforward calculation shows that
\begin{align}
    d_{\s{KL}}(P_{XY}||Q_{XY}) &= -\frac{1}{2}\log(1-\rho^2),\\
    d_{\s{KL}}(Q_{XY}||P_{XY}) &= \frac{1}{2}\log(1-\rho^2)+\frac{\rho^2}{1-\rho^2}.
\end{align}
Furthermore,
\begin{align}
    \calK(A,B) = \frac{\rho}{1-\rho^2}\cdot AB,
\end{align}
and thus $\s{Var}_{Q_{XY}}\p{\calK(A,B)} = \frac{\rho^2}{(1-\rho^2)^2}$. Therefore, the condition in \eqref{eqn:sumtestcond} boils down to $d\rho^2=\omega(1)$, which in light of Example~\ref{exmp:Gaussian} is tight up to a constant term. Finally, for fixed $d\in\mathbb{N}$, we see from \eqref{eqn:Countcond} and \eqref{eqn:EQExamp} that a sufficient condition for the count test to achieve strong detection is $\rho^2 = 1-o(n^{-4/d})$, as $n\to\infty$. Interestingly, this bound coincided with the threshold for the recovery problem \cite{pmlr-v89-dai19b}, achieved by the exhaustive maximum-likelihood estimator, while the count test is clearly efficient.
\end{example}
\begin{example}[Bernoulli databases]
We can repeat the same calculations for the Bernoulli case. As an example, consider the sum test. Here, a straightforward calculation reveals that,
\begin{align}
        d_{\s{KL}}(P_{XY}||Q_{XY}) &= O(\rho^2),\\
    d_{\s{KL}}(Q_{XY}||P_{XY}) &= O(\rho^2),
\end{align}
and $\s{Var}_{Q_{XY}}\p{\calK(A,B)} = O(\rho^2)$. Therefore, the condition in \eqref{eqn:sumtestcond} boils down to $d\rho^2=\omega(1)$, which in light of Example~\ref{exmp:Bernoulli} is tight up to a constant term. Similarly, when $d$ is fixed it can be shown that the count test achieves strong detection if the same condition as in the Gaussian case holds.
\end{example}

We conclude this section by showing some numerical evaluations in the Gaussian setting (see, Example~\ref{exmp:Gaussian}). Specifically, in Fig.~\ref{fig:risk}, we present the empirical risk, averaged over $2\times 10^3$ Monte-Carlo runs, associated with the sum and count tests, as a function of $\rho$, for different values of $d$ in Fig.~\ref{fig:sub-first} and $n$  in Fig.~\ref{fig:sub-second}. As predicted by our theoretical results, it can be seen that for large values of $d$, the risk associated with the sum test is vanishing if the correlation is at least some value, independent of $n$, while if $d$ is relatively small, then the risk is not vanishing for any value of $\rho$ and $n$. On the other hand, this is not the case for the count test, where even for small values of $d$, if the correlation is sufficiently close to unity, then the associated risk decreases. 

\begin{figure}[t]
\begin{subfigure}{.48\textwidth}
  \centering
  % include first image
  \includegraphics[width=1\linewidth]{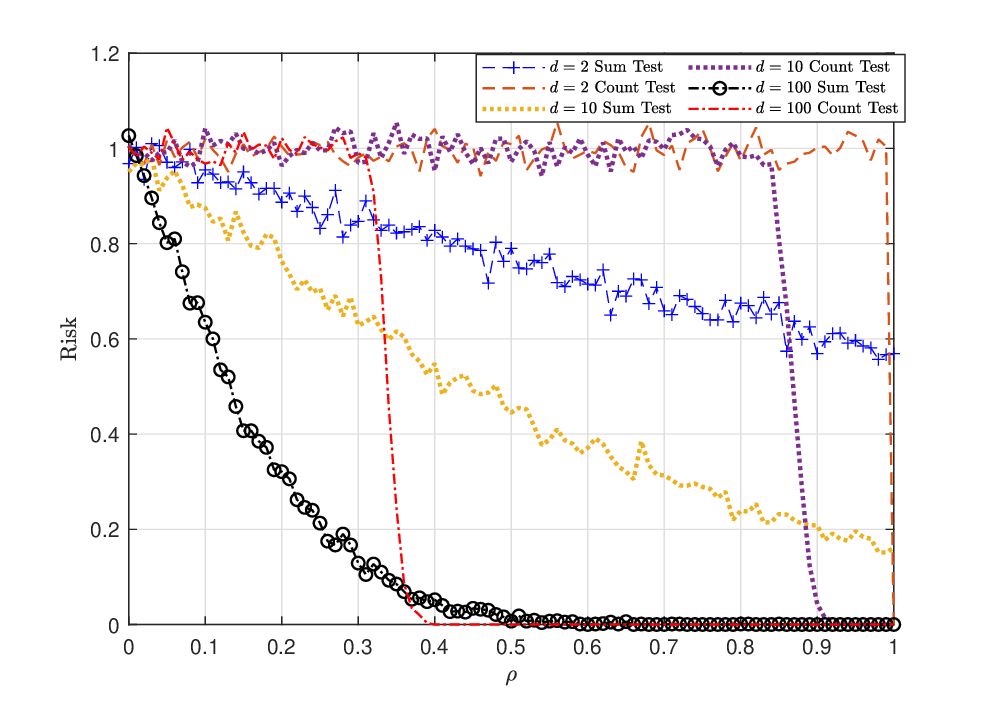}  
  \caption{The empirical risk associated with the sum and count tests as a function of $\rho$, for $n=100$, and various values of $d\in\{2,10,100\}$.}
  \label{fig:sub-first}
\end{subfigure}\hfill
\begin{subfigure}{.48\textwidth}
  \centering
  % include second image
  \includegraphics[width=1\linewidth]{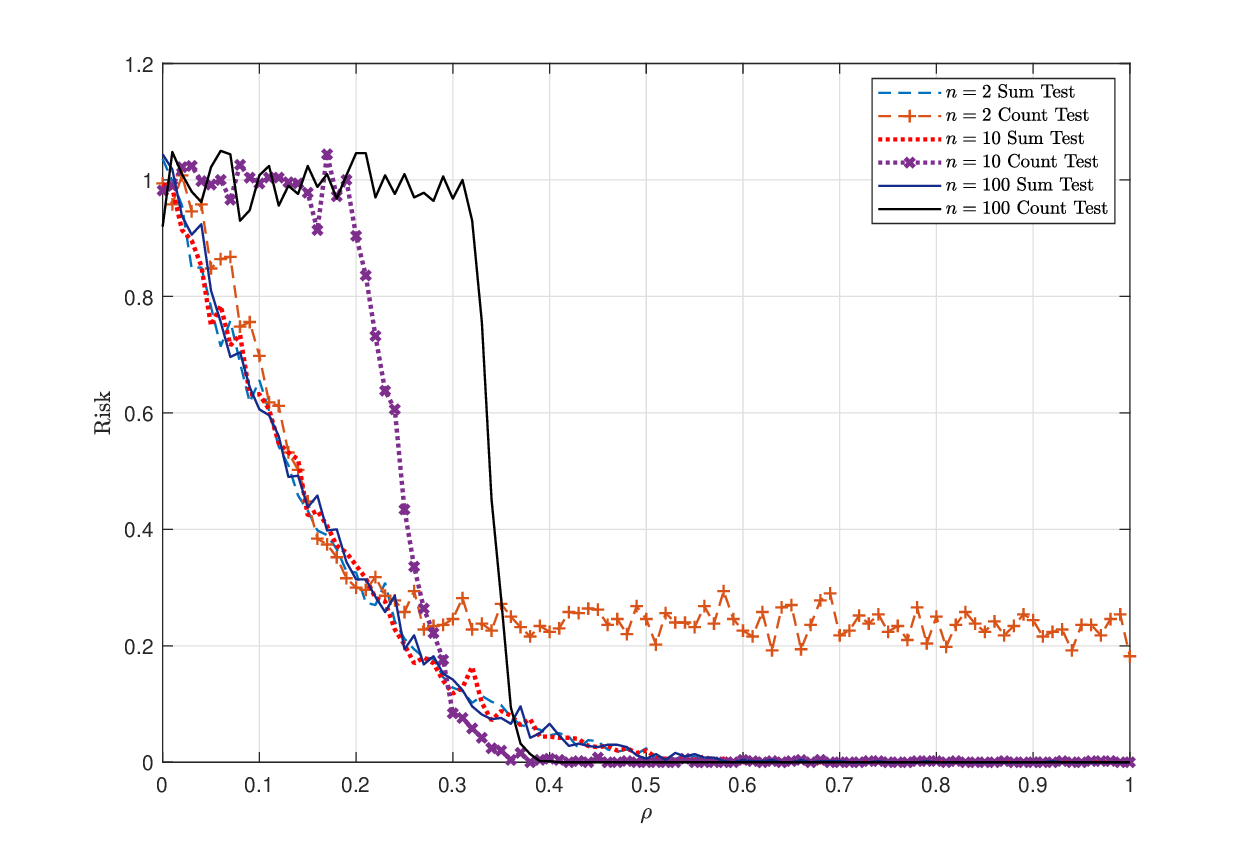}  
  \caption{The empirical risk associated with the sum and count tests as a function of $\rho$, for $d=100$, and various values of $n\in\{2,10,100\}$.}
  \label{fig:sub-second}
\end{subfigure}
\caption{Empirical risks as a function of $\rho$.}
\label{fig:risk}
\end{figure}

\section{Proofs of Lower Bounds}\label{sec:proofLowerBounds}

As is customary in lower bounds proofs for composite (worst-case) detection problems, it is convenient to reduce to a simple hypothesis testing problem by inducing a uniform prior on the parameter space. %    in many detection problems, evaluating the minimax risk function  opposes a great challenge due to the error term obtained by maximizing over the error for all permutations in $\S_n$. A well known strategy for overcoming this inherent obstacle is by considering the softer average-case version of the problem. 
Specifically, let $\pi$ be the uniform measure on $\S_n$, and let us denote by $\P_{\calH_1}$ the probability measure obtained by averaging $\P_{\calH_1\vert\pi}$ with respect to $\pi$. For a sequence of tests $\phi_{n,d}$, we define the average/Bayesian risk by,
\begin{align}
\bar{\s{R}}(\phi_{n,d}) \triangleq \P_{\calH_0}[\phi_{n,d}(\s{X},\s{Y})=1]+\E_{\sigma\sim\pi}\pp{\P_{\calH_1\vert\sigma}[\phi_{n,d}(\s{X},\s{Y})=0]},
\end{align}
and the optimal average risk,
\begin{align}
\bar{\s{R}}^\star\triangleq \inf_{\phi_{n,d}:\calX^{n\times d}\times\calY^{n\times d}\to \{0,1\}}\bar{\s{R}}(\phi_{n,d}).
\end{align}

Clearly, any test $\phi_{n,d}$ satisfies $\s{R}(\phi_{n,d})\geq \bar{\s{R}}(\phi_{n,d})$, and therefore $\s{R}^\star\geq \bar{\s{R}}^\star$. We conclude that lower bounds on the average risk imply lower bounds on the worst-case risk. For example, in order to prove Theorem~\ref{th:LB}, it is sufficient to find conditions for which $\bar{\s{R}}^\star=1-o(1)$. In particular, using a well-known equivalent characterization of the Bayesian risk function by the total variation distance and Cauchy-Schwartz inequality, we have,
\begin{equation}
    \label{eq:LikeBound2}
    {\s{R}}^\star\geq \bar{\s{R}}^\star= 1-d_{\s{TV}}(\P_{\calH_0},\P_{\calH_1})\geq 1-\frac{1}{2}\sqrt{\E_{\calH_0}\pp{\s{L}_n^2}-1},
\end{equation}
where $\s{L}_n\triangleq\frac{\P_{\calH_1}}{\P_{\calH_0}}$ is the likelihood ratio, and the expectation is taken with respect to $\P_{\calH_0}$. Accordingly, using \eqref{eq:LikeBound2} we see that in order to prove that $(\calD,d,n)$ is not admissible for weak detection, it is sufficient to show that under the assumptions of Theorem~\ref{th:LB}, $\E_{\calH_0}\pp{\s{L}_n^2}\leq 1+o(1)$. For strong detection, on the other hand, recall that (see, for example, \cite[Lemmata 2.6 and 2.7]{tsybakov2004introduction}), 
\begin{align}
    \bE_{\calH_0}[\s{L}_n^2] = O(1)\implies d_{\s{TV}}(\P_{\calH_0},\P_{\calH_1})=1-\Omega(1),
\end{align}
which implies, 
\begin{equation}\label{eq:ImpStrongDet0}
    \s{R}^\star\geq 1-d_{\s{TV}}(\P_{\calH_0},\P_{\calH_1})=\Omega(1),
\end{equation}
and thus $(\calD,d,n)$ is not admissible for strong detection (as \eqref{eq:ImpStrongDet0} implies $\limsup\s{R}^\star>0$). Therefore, our main goal now is to analyze $\E_{\calH_0}\pp{\s{L}_n^2}$.

\subsection{A formula for the second moment}

The first step in the calculation calls for a use of Ingster-Suslina method, stating that by Fubini's theorem, $\E_{\calH_0}[\s{L}_n^2]$ may be equivalently written as
\begin{equation}
    \label{eq:LikeSquare2}\E_{\calH_0}[\s{L}_n^2]=\E_{\pi\indep\pi'}\pp{\E_{\calH_0}\pp{\frac{\P_{\calH_1|\pi}}{\P_{\calH_0}}\cdot \frac{\P_{\calH_1|\pi'}}{\P_{\calH_0}}}},
\end{equation}
where the expectation is taken with respect to the independent coupling of $\pi$ and $\pi'$, two copies of the uniform measure on $\S_n$. Note that for fixed permutations $\pi$ and $\pi'$, in the discrete case, the distributions $\P_{\calH_1\vert\pi},\P_{\calH_1|\pi'}$ and $\P_{0}$, correspond to the underlying probability mass functions under the alternative and null hypotheses. In the continuous case, $\P_{\calH_1\vert\pi},\P_{\calH_1|\pi'}$ and $\P_{0}$ are absolutely continuous with respect to Lebesgue's measure on $\R^{2\times d\times n}$, and therefore,
\begin{align}
\frac{\P_{\calH_1\vert\pi}}{\P_{\calH_0}}=\frac{f_{\calH_1|\pi}}{f_{\calH_0}}, \quad\text{and} \quad \frac{\P_{\calH_1|\pi'}}{\P_{\calH_0}}=\frac{f_{\calH_1|\pi'}}{f_{\calH_0}}, 
\end{align}
where $f_{\calH_i}$ denotes the Radon-Nikodym derivative of $\P_{\calH_i}$ with respect to Lebesgue's measure, which is the density function $(\s{X},\s{Y})$ under the corresponding hypothesis. In the following, with some abuse of notations, we treat and use the same notations for both cases. Specifically, given $\pi,\pi'$, we note that
\begin{align}
    \frac{\P_{\calH_1|\pi}}{\P_{\calH_0}}\cdot \frac{\P_{\calH_1|\pi'}}{\P_{\calH_0}} &= \prod_{i=1}^n \s{L}(X_i,Y_{\pi_i})\s{L}(X_i,Y_{\pi_i'})\triangleq \prod_{i=1}^n\s{Z}_{i},\label{eqn:ZiDef}
\end{align}
where
\begin{align}
\s{L}(X_i,Y_{\pi_i})\triangleq\frac{P_{XY}^{\otimes d}(X_i,Y_{\pi_i})}{Q_{XY}^{\otimes d}(X_i,Y_{\pi_i})},  
\end{align}
for any $i\in[n]$.
% $Q_{XY}$ denotes the joint probability mass/density function of $X_i$ and $Y_i$ under $\calH_0$, and $P_{XY}$ denotes the joint probability mass/density function of $X_i$ and $Y_{\pi_i}$ under $\calH_1$ given its latent permutation $\pi$. 
Thus, substituting \eqref{eq:LikeSquare2} in \eqref{eqn:ZiDef}, we get,
\begin{align}
    \E_{\calH_0}[\s{L}_n^2]=\E_{\pi\indep\pi'}\pp{\E_{\calH_0}\p{\prod_{i=1}^n\s{Z}_{i}}}.\label{eqn:Product}
\end{align}
Next, we find an expression for the inner expectation in \eqref{eqn:Product}, for fixed permutations $\pi$ and $\pi'$. Observe that $\s{Z}_{i}$'s may not be independent across different values of $i$. Nonetheless, using the notion of cyclic decomposition, we can decompose $\prod_{i}\s{Z}_{i}$ as a product over independent sets of random variables. To that end, define $\sigma \triangleq\pi^{-1}\circ\pi'$, and note that $\sigma$ is distributed uniformly over $\mathbb{S}_n$. Now, for each orbit $O$ of $\sigma$, define
\begin{align}
    \s{Z}_O\triangleq\prod_{i\in O}\s{Z}_{i}.\label{eqn:ZoDef}
\end{align}
The main observation here is that $\s{Z}_O$ is a function of $(X_i,Y_{\pi_i})_{i\in O}$ solely. Let $\calO$ denote the collection of all possible orbits of $\sigma$. By definition, orbits are disjoint, and therefore we may write,
\begin{align}
   \prod_{i=1}^n\s{Z}_{i} = \prod_{O\in\calO}\s{Z}_{O}.
\end{align}
Since the random vectors $\{X_i\}_{i\in[n]}$ and $\{Y_i\}_{i\in[n]}$ are statistically independent under $\calH_0$, the random variables $\{\s{Z}_O\}_{O\in\calO}$ are also mutually independent under $\calH_0$. Therefore, we obtain that the product in \eqref{eqn:Product} can be rewritten as the following product over independent sets of random variables,
\begin{align}
    \E_{\calH_0}[\s{L}_n^2]=\E_{\pi\indep\pi'}\pp{ \prod_{O\in\calO}\E_{\calH_0}[\s{Z}_{O}]}.\label{eqn:secORb}
\end{align}
\begin{rmk}
    The fact that the inner expectation in \eqref{eqn:Product} depend $\pi$ and $\pi'$ only through $\sigma$ is clear. Indeed,  note that the distribution under the null hypothesis is invariant to the reordering of the coordinates, and the uniform measure on $\mathbb{S}_n$, is invariant under composition with a fixed permutation. 
\end{rmk}

Next, we find an expression for $\E_{\calH_0}[\s{Z}_{O}]$. To that end, recall our definitions and notations right before the statement of Theorem~\ref{th:LB}. Notice to the difference between $\calL(\cdot,\cdot)$ and $\s{L}(\cdot,\cdot)$, the former being the $d=1$ special case of the latter. %This kernel defines an operator as follows: for any square-integrable function $f$ under $Q_{XY}$,
%\begin{align}
 %   (\calL f)(x) \triangleq \bE_{Y\sim Q_Y}\pp{\calL(x,Y)f(Y)},
%\end{align}
%In addition, $\calL^2= \calL\circ\calL$ is given by $\calL^2(x,y) = \bE_{Z\sim Q}[\calL(x,Z)\calL(Z,y)]$, and $\calL^k$ is similarly defined. Assume that $\calL(x,y) = \calL(y,x)$, and hence $\calL$ is self-adjoint. Furthermore, if we assume that $ \int\int \calL^2(x,y)Q_{X}(\mathrm{d}x)Q_{Y}(\mathrm{d}y)<\infty$, then $\calL$ is Hilbert-Schmidt. Thus $\calL$ is diagonazable
%with eigenvalues $\lambda_i$'s and the trace of $\calL$ is given by $\s{trace}(\calL) = \bE_{Y\sim Q_{Y}}[\calL(Y,Y)] = \sum_{i\in\mathbb{N}}\lambda_i$. 
The following result gives a formula for the inner expectation in the right-hand-side of \eqref{eqn:secORb}, as a function of the eigenvalues of $\calL$. The proof of the following result relies on similar arguments as in \cite{wu2020testing}, for the special cases of the Bernoulli and Gaussian distributions over random graphs.
\begin{lemma}\label{lem:secoLOrbit}
    Fixing $\pi$ and $\pi'$, for any orbit O of $\sigma \triangleq\pi^{-1}\circ\pi'$, we have
    \begin{align}
        \E_{\calH_0}[\s{Z}_{O}] = \p{\sum_{i\in\mathbb{N}}\lambda_i^{2|O|}}^d.
    \end{align}
\end{lemma}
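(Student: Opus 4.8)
The plan is to exploit the structure of a single orbit $O$ of $\sigma=\pi^{-1}\circ\pi'$. Write $|O| = m$ and relabel the indices in the orbit as $i_1,\dots,i_m$ so that $\sigma$ cyclically sends $i_1\mapsto i_2\mapsto\cdots\mapsto i_m\mapsto i_1$. Recall from \eqref{eqn:ZiDef}--\eqref{eqn:ZoDef} that
\begin{align}
    \s{Z}_O = \prod_{i\in O}\s{L}(X_i,Y_{\pi_i})\s{L}(X_i,Y_{\pi'_i}),
\end{align}
and observe that as $i$ ranges over $O$, the index $\pi'_i$ ranges over exactly the same set $\{\pi_i : i\in O\}$, with $\pi'_{i} = \pi_{\sigma_i}$. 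So, renaming $U_k \triangleq X_{i_k}$ and $V_k \triangleq Y_{\pi_{i_k}}$ (which are, under $\calH_0$, $2m$ mutually independent $d$-dimensional vectors, each coordinate i.i.d.\ from $P_X$ resp.\ $P_Y$), we get the ``necklace'' form
\begin{align}
    \s{Z}_O = \prod_{k=1}^m \s{L}(U_k, V_k)\,\s{L}(U_k, V_{k+1}),
\end{align}
indices mod $m$. First I would reduce to $d=1$: since $\s{L}(u,v) = \prod_{\ell=1}^d \calL(u_\ell,v_\ell)$ and the coordinates are independent under $\calH_0$, the expectation factorizes, $\E_{\calH_0}[\s{Z}_O] = \big(\E_{\calH_0}[\s{Z}_O^{(1)}]\big)^d$ where $\s{Z}_O^{(1)}$ is the same product with scalar letters. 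So it suffices to prove $\E_{\calH_0}[\s{Z}_O^{(1)}] = \sum_{i}\lambda_i^{2m}$.

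Next I would compute the scalar necklace expectation by integrating out the $V_k$'s (or the $U_k$'s) one layer at a time, using the operator $\calL$ of \eqref{eqn:kernel}. The key identity is that for fixed $u, u'$,
\begin{align}
    \E_{V\sim Q_Y}\big[\calL(u,V)\calL(u',V)\big] = \calL^2(u,u'),
\end{align}
by definition of the composition $\calL^2$ and self-adjointness. Grouping the product over $k$ so that each $V_k$ appears exactly in the two factors $\calL(U_{k-1},V_k)\calL(U_k,V_k)$, integrating over all $V_k$ (independent of each other) turns the necklace into
\begin{align}
    \E_{\calH_0}[\s{Z}_O^{(1)}] = \E\Big[\prod_{k=1}^m \calL^2(U_{k-1}, U_k)\Big] = \E\big[\calL^{2m}(U_1,U_1)\big] = \s{trace}(\calL^{2m}),
\end{align}
where the middle equality is the standard ``trace of a product of kernels around a cycle'' fact: iterating $\E_{U\sim Q}[\calL^{2}(a,U)\calL^2(U,b)] = \calL^4(a,b)$ around the loop collapses it to $\E_{U_1}[\calL^{2m}(U_1,U_1)]$. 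Finally, since $\calL$ is Hilbert--Schmidt and diagonalizable with eigenvalues $\{\lambda_i\}$, we have $\s{trace}(\calL^{2m}) = \sum_{i\in\mathbb{N}}\lambda_i^{2m}$, which is exactly the claim after raising to the $d$-th power.

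The main obstacle I anticipate is purely a matter of bookkeeping rather than depth: one must be careful that when $V_k$ is integrated out it genuinely appears in only two $\calL$-factors (true because $\pi'_{i_k} = \pi_{\sigma_{i_k}}$ lands on the ``next'' vertex in the cycle), and that the case $m=1$ — a fixed point of $\sigma$ — is handled correctly (there $\s{Z}_O^{(1)} = \calL(U_1,V_1)^2$ and $\E[\calL(U_1,V_1)^2] = \s{trace}(\calL^2) = \sum_i\lambda_i^2$, consistent with the formula). I would also want to justify interchanging expectation and the (a priori infinite) sum over eigenvalues, i.e.\ that $\s{trace}(\calL^{2m})$ is finite and equals $\sum_i \lambda_i^{2m}$; this follows from the Hilbert--Schmidt assumption $\int\!\int \calL^2 \, dQ_X dQ_Y = \sum_i\lambda_i^2 < \infty$ together with $\lambda_0 = 1$ (Lemma~\ref{lem:eigen}) and $|\lambda_i|\le 1$, so $\sum_i \lambda_i^{2m} \le \sum_i\lambda_i^2 < \infty$ for all $m\ge 1$. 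Everything else is a routine Fubini argument given the independence structure established in \eqref{eqn:secORb}.
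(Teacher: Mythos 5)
Your proof is correct and follows essentially the same route as the paper: reduce to $d=1$ by coordinate-wise independence, relabel the orbit cyclically, integrate out one side of the bipartite cycle to produce $\calL^2$ kernels on the other side, collapse the cycle to $\s{trace}(\calL^{2|O|})$, and read off the eigenvalues. The only (immaterial) difference is that you integrate out the $Y$'s and land on a cycle in the $X$'s, while the paper integrates out the $X$'s and lands on a cycle in the $Y$'s; the symmetry $\calL(x,y)=\calL(y,x)$ and $P_X=P_Y$ make the two choices identical.
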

\begin{proof}[Proof of Lemma~\ref{lem:secoLOrbit}]
Let $k = |O|$. Since $O$ is an orbit of $\sigma$, we have $\{Y_{\pi_i}\}_{i\in O} = \{Y_{\pi'_i}\}_{i\in O}$ and $\pi'_i = \pi_{\sigma_i}$. Therefore, using \eqref{eqn:ZiDef} and \eqref{eqn:ZoDef},
\begin{align}
   \E_{\calH_0}[\s{Z}_{O}] &= \E_{\calH_0}\pp{\prod_{i\in O} \s{L}(X_i,Y_{\pi_i})\s{L}(X_i,Y_{\pi_i'})}\\
   & =\p{\E_{\calH_0}\pp{\prod_{i\in O} \calL(X_i,Y_{\pi_i})\calL(X_i,Y_{\pi_i'})}}^d\\
   & = \p{\bE\pp{\prod_{i=1}^{|O|}\calL(X_i,Y_i)\calL(X_i,Y_{(i+1)\s{mod}|O|})}}^d\\
   & = \p{\bE\pp{\prod_{i=1}^{|O|}\calL^2(Y_i,Y_{(i+1)\s{mod}|O|})}}^d\\
   & = \p{\s{trace}\p{\calL^{2|O|}}}^d\\
   & = \p{\sum_{i\in\mathbb{N}}\lambda_i^{2|O|}}^d.
\end{align}
\end{proof}
Recall that $\sigma \triangleq\pi^{-1}\circ\pi'$ and as so $\sigma\sim\s{Unif}(\mathbb{S}_n)$. For a fixed permutation $\pi\in \S_n$ and $k\in [n]$, let $N_k(\pi)$ denote the number of $k$-cycles of $\pi$. Combining \eqref{eqn:secORb}, and Lemma~\ref{lem:secoLOrbit} we obtain
\begin{equation}
    \label{eq:PermProd2}
\E_{\calH_0}[\s{L}_n^2]=\E_{\pi\indep\pi'}\pp{\prod_{O\in\calO}\E_{\calH_0}[\s{Z}_O]}=\E_{\sigma}\pp{\prod_{k=1}^n\p{\sum_{i\in\mathbb{N}}\lambda_i^{2k}}^{dN_k(\sigma)}}.
\end{equation}
For simplicity of notation, in the following we remove the dependency of $N_k(\sigma)$ on $\sigma$, and define $g_k(Q_{XY},P_{XY})\triangleq\sum_{i\in\mathbb{N}}\lambda_i^{2k}$. Then, we have,
\begin{align}
    \E_{\calH_0}[\s{L}_n^2] &= \E_{\sigma}\pp{\prod_{k=1}^ne^{dN_k\log g_k(Q_{XY},P_{XY})}} \\
    &= \E_{\sigma}\pp{\exp\p{d\sum_{k=1}^nN_k\log g_k(Q_{XY},P_{XY})}}.\label{eqn:seocndMoment}
\end{align}
Before we continue let us prove the following result.
\begin{lemma}\label{lem:eigen}
    The largest eigenvalue of $\calL$ is 1.
\end{lemma}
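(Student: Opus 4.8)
The plan is to exhibit an explicit eigenfunction of $\calL$ with eigenvalue $1$, and then argue that no eigenvalue can exceed $1$. For the first part, I would test $\calL$ against the constant function $\mathbf{1}$. By the definition of the operator in \eqref{eqn:kernel},
\begin{align}
(\calL \mathbf{1})(x) = \bE_{Y\sim Q_Y}\pp{\calL(x,Y)} = \int \frac{P_{XY}(x,y)}{Q_{XY}(x,y)}\,Q_Y(\mathrm{d}y) = \int \frac{P_{XY}(x,y)}{P_X(x)P_Y(y)}\,P_Y(\mathrm{d}y) = \frac{P_X(x)}{P_X(x)} = 1,
\end{align}
where I used $Q_{XY} = P_X\times P_Y$ and that $P_{XY}$ has marginal $P_X$. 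Hence $\mathbf{1}$ is an eigenfunction with eigenvalue $1$, so $1$ is in the spectrum; since $\calL$ is Hilbert--Schmidt and self-adjoint, $1$ is genuinely an eigenvalue. (One must check $\mathbf{1}$ is square-integrable under $Q_{XY}$, which is immediate since $Q$ is a probability measure.)

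For the second part — that $\lambda_0 = 1$ is the \emph{largest} eigenvalue — I would show that every eigenvalue $\lambda$ of $\calL$ satisfies $|\lambda|\le 1$. The cleanest route is via the trace identity already available: for any $k\ge 1$, $\s{trace}(\calL^{2k}) = \sum_i \lambda_i^{2k} = \bE_{Z\sim Q}[\calL^{2k}(Z,Z)]$ computed along a $k$-cycle, which by the same Ingster--Suslina-style unfolding as in Lemma~\ref{lem:secoLOrbit} equals $\bE\pp{\prod_{i=1}^{k}\calL^2(Y_i,Y_{(i+1)\bmod k})}$ with $Y_i$ i.i.d.\ $\sim Q_Y$. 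Since $\calL^2 = \calL\circ\calL$ has a nonnegative kernel (it is an expectation of a product $\calL(x,Z)\calL(Z,y)$, and $\calL = P_{XY}/Q_{XY}\ge 0$), and since $\calL^2$ also fixes $\mathbf{1}$, one sees $\bE_{Y'}[\calL^2(y,Y')] = 1$ for all $y$; thus the kernel $\calL^2(y,y')Q_Y(\mathrm{d}y')$ is a Markov transition kernel. Its $k$-fold composition is again Markov, so $\sum_i \lambda_i^{2k} = \s{trace}$ of a Markov operator's $k$-th power, which is bounded — in fact for $k=1$ we get $\sum_i\lambda_i^2 = \bE_Q[\calL^2(Y,Y)]$, finite by the Hilbert--Schmidt assumption. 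The decisive point: if some $|\lambda_j| > 1$, then $\sum_i \lambda_i^{2k} \ge \lambda_j^{2k} \to \infty$, contradicting that $\sum_i \lambda_i^{2k}$ is the trace of a (sub-)stochastic operator, hence at most the trace of $\calL^2$ raised appropriately — more directly, a self-adjoint Markov (averaging) operator on $L^2(Q_Y)$ has operator norm $1$, so its spectrum lies in $[-1,1]$, forcing $\lambda_i^2 \le 1$ for all $i$. Combined with $\lambda_0=1$ from the first part and the assumed decreasing ordering, this gives $\lambda_0 = 1 = \max_i \lambda_i$.

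The main obstacle I anticipate is not the algebra but making the "norm $\le 1$" step rigorous at the level the paper works in: one needs that $\calL$ maps $L^2(Q_Y)\to L^2(Q_X)$ with norm exactly $1$. The slick argument is that $\calL^2$ is a self-adjoint averaging (conditional-expectation-like) operator — concretely, $(\calL^2 f)(x) = \bE[f(Y')\mid \text{something}]$ under the coupling where $(X,Y')$ has a law built from two independent draws through $P_{XY}$ — so by Jensen/conditional-expectation contractivity $\|\calL^2 f\|_{L^2(Q)} \le \|f\|_{L^2(Q)}$, whence $\|\calL\|^2 = \|\calL^2\| \le 1$ and every eigenvalue of the self-adjoint $\calL$ has modulus $\le 1$. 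Alternatively, and perhaps most in the spirit of the surrounding text, one simply observes $\sum_{i\ge 0}\lambda_i^2 = \bE_{Q}[\calL^2(Y,Y)]$ and, were $\lambda_0 > 1$, the limit $\lim_{k}\bigl(\sum_i \lambda_i^{2k}\bigr)^{1/2k} = \max_i|\lambda_i| > 1$ would contradict that each $\sum_i \lambda_i^{2k}$ is a trace of a stochastic-kernel power and hence uniformly bounded in $k$ — this is the cleanest self-contained contradiction given what has already been set up. I would present the eigenfunction computation first (it is short and gives $\lambda_0 \ge 1$ together with $1$ being an eigenvalue), then the boundedness argument to pin down that it is the maximum.
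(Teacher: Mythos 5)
Your proof is correct, but the upper-bound half takes a genuinely different route from the paper's. For the lower bound both arguments amount to the same computation: you check directly that $(\calL\mathbf{1})(x)=1$; the paper observes, in the discrete case, that the associated matrix $\mathbf{M}(x,y)=P_{XY}(x,y)/Q_X(x)$ is row-stochastic so $\mathbf{M}\mathbf{1}=\mathbf{1}$, and in the continuous case it simply notes the constant function is an eigenfunction. The real divergence is in showing $|\lambda|\le 1$. The paper splits into cases: in the discrete setting it invokes the Gershgorin circle theorem on $\mathbf{M}$; in the continuous setting it takes $x^\star=\arg\max_x |f_\lambda(x)|$, evaluates the eigenvalue equation at $x^\star$, and bounds by the triangle inequality together with $\bE_{Y\sim Q_Y}[\calL(x,Y)]=1$. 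You instead argue in $L^2(Q)$: $\calL^2$ has a nonnegative, symmetric kernel that fixes $\mathbf{1}$, so it is a reversible Markov operator with invariant measure $Q$; by Jensen-type contractivity $\|\calL^2\|_{L^2(Q)\to L^2(Q)}\le 1$, and since $\calL$ is self-adjoint $\|\calL\|^2=\|\calL^2\|\le 1$, forcing every eigenvalue into $[-1,1]$. This is a single argument that covers discrete and continuous cases uniformly, and it avoids a small rigor issue in the paper's continuous-case argument (an $L^2$ eigenfunction need not attain its supremum, so the $x^\star$ step is informal as written). The paper's Gershgorin argument, on the other hand, is entirely elementary and immediate once one is in finite dimensions.

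One caution about your writeup: the intermediate "trace" reasoning — that $\sum_i \lambda_i^{2k}$ must stay bounded because it is the trace of a stochastic operator — does not stand on its own. The trace of a Markov kernel need not be finite at all (the identity is Markov and has infinite trace on an infinite space), and bounding $\s{trace}(\calL^{2k})$ by $\s{trace}(\calL^2)$ already presupposes the very operator-norm bound you are trying to establish, so that route is circular. You recognize this and pivot to the operator-norm/contraction argument, which is the one that actually closes the proof; I would simply drop the trace detour and lead with the Markov-contraction step, followed by the eigenfunction computation.
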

\begin{proof}[Proof of Lemma~\ref{lem:eigen}]
We first prove the result for the case where $P_{XY}$ and $Q_{XY}$ are probability mass functions, i.e., the datasets $\s{X}$ and $\s{Y}$ are discrete and take values over $\calX$ and $\calY$, respectively. The proof for the continuous case, where $P_{XY}$ and $Q_{XY}$ are probability density functions, is similar. Note that for any square-integrable function $f$ under $Q_{Y}$,
\begin{align}
    (\calL f)(x) &= \bE_{Y\sim Q_{Y}}\pp{\calL(x,Y)f(Y)}\\
    & = \sum_{y\in\calY}\frac{P_{XY}(x,y)}{Q_{X}(x)Q_{Y}(y)}f(y)Q(y)\\
    & = \frac{1}{Q_{X}(x)}\sum_{y\in\calY}f(y)P_{XY}(x,y).
\end{align}
Thus, the eigenvalues of $\calL$ are given by the eigenvalues of the following $|\calX|\times|\calY|$ row-stochastic matrix
$\mathbf{M}(x,y)\triangleq \frac{P_{XY}(x,y)}{Q_{X}(x)}$. Since $\mathbf{M}$ is a row-stochastic matrix its largest eigenvalue is $1$. Indeed, let $\mathbf{1}$ be the all one vector. Then, note that $\mathbf{M}\mathbf{1} = \mathbf{1}$, and thus $1$ is an eigenvalue of $\mathbf{M}$. To show that the this is the largest eigenvalue we use Gershgorin circle theorem. Specifically, take row $\ell$ in $\mathbf{M}$. The diagonal element will be $\mathbf{M}_{\ell\ell}$ and the radius will be $\sum_{i\neq \ell} |\mathbf{M}_{\ell i}| = \sum_{i \neq \ell} \mathbf{M}_{\ell i}$ since all $\mathbf{M}_{\ell i} \geq 0$. This will be a circle with its center in $\mathbf{M}_{\ell \ell} \in [0,1]$, and a radius of $\sum_{i \neq \ell} \mathbf{M}_{\ell i} = 1-\mathbf{M}_{\ell \ell}$. So, this circle will have $1$ on its perimeter. This is true for all Gershgorin circles for this matrix (since $\ell$ was chosen arbitrarily). Thus, since all eigenvalues lie in the union of the Gershgorin circles, all eigenvalues $\lambda_i$ satisfy $|\lambda_i| \leq 1$. 

In the continuous case, we use similar arguments. Specifically, let $\lambda$ be an eigenvalue of $\calL$ with corresponding eigenfunction $f_\lambda$. Let $x^\star = \arg\max_{x\in\calX}|f_\lambda(x)|$. Because $f_\lambda$ is an eigenfunction, we have
\begin{align}
    (\calL f_\lambda)(x) = \bE_{Y\sim Q_{Y}}\pp{\calL(x,Y)f_\lambda(Y)} = \lambda\cdot f_\lambda(x),\label{eqn:GershCont}
\end{align}
for any $x\in\calX$. Substituting $x = x^\star $ in \eqref{eqn:GershCont} and isolating $\lambda$, we get
\begin{align}
    |\lambda| &= \frac{\abs{\bE_{Y\sim Q_{Y}}\pp{\calL(x,Y)f_\lambda(Y)}}}{|f_\lambda(x^\star)|}\\
    &\leq \bE_{Y\sim Q_{Y}}\pp{\calL(x,Y)\abs{\frac{f_\lambda(Y)}{f_\lambda(x^\star)}}}\\
    &\leq \bE_{Y\sim Q_{Y}}\pp{\calL(x,Y)} = 1,
\end{align}
where the first inequality follows from the triangle inequality, the second inequality is because $\abs{\frac{f_\lambda(x)}{f_\lambda(x^\star)}}\leq1$, for any $x\in\calX$, and the last equality follows by the definition of $\calL$. Since $\lambda$ is arbitrary, the above arguments hold for any eigenvalue of $\calL$. Finally, note that the identity eigenfunction, i.e., $f(x) = 1$, for any $x\in\calX$, corresponds to $\lambda=1$.

\end{proof}
\subsection{Auxiliary lemmata}

The following lemma concerns the approximation of the joint distribution of $k$-cycles by independent Poisson random variables.
\begin{lemma}\label{lem:PoisApp}\cite[Theorem 2]{arratia1992cycle} Let $1\leq k\leq n$ be an integer, and let $P_1\dots,P_k$ be independent random variables such that for all $1\leq i \leq k$, $P_i\sim \s{Poisson}\p{i^{-1}}$. Then, the total variation between the law of $N_1,\dots, N_k$ and $P_1,\dots,P_k$ satisfies
\begin{align}
d_{\s{TV}}\p{\pr\p{N_1,N_2,\ldots,N_k},\pr\p{P_1,P_2,\ldots,P_k}}\leq F\p{\frac{n}{k}},
\end{align}
where $F(x)$ is a monotone decreasing function satisfying $\log F(x)=-x\log x
(1+o(1))$ as $x\to \infty$.
\end{lemma}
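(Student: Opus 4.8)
This is quoted as Theorem~2 of \cite{arratia1992cycle}; the route I would take to prove it is the following. The backbone is the Shepp--Lloyd \emph{conditioning relation}: let $Z_1,\dots,Z_n$ be independent with $Z_i\sim\s{Poisson}(i^{-1})$. Cauchy's formula for the number of permutations of $[n]$ of a prescribed cycle type gives, for every $\mathbf m=(m_1,\dots,m_n)$ with $\sum_i i m_i=n$, the identity $\pr(N_1=m_1,\dots,N_n=m_n)=\prod_i (i^{m_i}m_i!)^{-1}=\pr(Z_1=m_1,\dots,Z_n=m_n)\big/\pr\big(\sum_{i\le n}iZ_i=n\big)$; equivalently $(N_1,\dots,N_n)\overset{d}{=}(Z_1,\dots,Z_n)\mid\{\sum_{i\le n}iZ_i=n\}$, and summing over cycle types shows $\pr\big(\sum_{i\le n}iZ_i=n\big)=e^{-H_n}$ with $H_n=\sum_{i\le n}i^{-1}$.

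Marginalizing to the first $k$ coordinates and splitting $\sum_{i\le n}iZ_i=T_k+U_k$ with $T_k\triangleq\sum_{i\le k}iZ_i$ and $U_k\triangleq\sum_{k<i\le n}iZ_i$ (which are independent), one reads off the Radon--Nikodym derivative of $\mathcal{L}(N_1,\dots,N_k)$ with respect to $\mathcal{L}(Z_1,\dots,Z_k)=\mathcal{L}(P_1,\dots,P_k)$: it is $\mathbf m\mapsto \pr(U_k=n-t(\mathbf m))/e^{-H_n}$, where $t(\mathbf m)=\sum_{i\le k}im_i$. Hence, writing $h(s)\triangleq\pr(U_k=n-s)$ and noting $e^{-H_n}=\pr(T_k+U_k=n)=\E[h(T_k)]$,
\[
d_{\s{TV}}\big(\mathcal{L}(N_1,\dots,N_k),\mathcal{L}(P_1,\dots,P_k)\big)=\tfrac12\,\E\Big[\big|\,h(T_k)/\E[h(T_k)]-1\,\big|\Big].
\]

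The bound then follows from two ingredients. First, a large-deviation estimate for $T_k$: since $\E[T_k]=k$ and $T_k$ is a weighted sum of independent Poissons whose upper tail is governed by the single summand $kZ_k$, a Chernoff bound yields $\pr(T_k\ge cn)\le F(n/k)$ for any fixed $c\in(0,1)$, with $\log F(x)=-x\log x\,(1+o(1))$ as $x\to\infty$ (this $F$ is, up to constants, the Dickman function at $n/k$); this is precisely the function in the statement. Second, a quantitative local-limit estimate for $U_k$ near its mean $n-k$: for $s$ in the typical range of $T_k$ (so $s=O(k)$ and $n-s=(1-o(1))n$), one shows $h(s)=(1+o(1))\,\E[h(T_k)]$ uniformly, so that on the complement of the rare event above the integrand is $o(1)$. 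The structural input that makes this work is that $U_k/n$ converges to the Dickman law, whose density is \emph{constant} on $[0,1]$; this flatness is exactly what makes the ``bulk'' contribution to the total variation collapse, leaving only $\{T_k\gtrsim n\}$ — on which $h(T_k)$ is negligible or vanishes — to contribute, at order $F(n/k)$.

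The delicate step is the second one: one needs genuinely quantitative control of $\pr(U_k=m)$ over a window of width $O(k)$ just below $\E[U_k]=n-k$, with an error small enough to be absorbed into $F(n/k)$ and not merely into the cruder $O(k/n)$; this is where the generating-function analysis of \cite{arratia1992cycle} (equivalently, singularity analysis of the associated Dickman-type transform) does the real work, and it is the part I would expect to require the most care. A softer alternative, cheap but giving only the weaker $O(k/n)$ bound — still enough to conclude $d_{\s{TV}}\to0$ whenever $k=o(n)$ — is the Feller coupling: realize both $(N_1,\dots,N_k)$ and $(P_1,\dots,P_k)$ as spacing counts of the same independent $\s{Bernoulli}(i^{-1})$ variables $\xi_i$, and bound the disagreement probability by the chance that some $\xi_i=1$ among the last $\Theta(k)$ indices.
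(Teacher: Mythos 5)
The paper does not prove this lemma: it is imported verbatim, with citation, from \cite[Theorem~2]{arratia1992cycle}, so there is no in-paper argument against which to check your reconstruction. Judged on its own, your sketch captures the correct skeleton of the Arratia--Tav\'ar\'e proof: the Cauchy-formula / Shepp--Lloyd conditioning identity and the value $\pr\bigl(\sum_{i\le n}iZ_i=n\bigr)=e^{-H_n}$ are stated correctly, the Radon--Nikodym derivative $\mathbf m\mapsto h(t(\mathbf m))/\E[h(T_k)]$ with $h(s)=\pr(U_k=n-s)$ is right, and so is the resulting expression for the total variation. Your Chernoff tail estimate for $T_k$, with the exponent $-\tfrac{n}{k}\log\tfrac{n}{k}(1+o(1))$ governed by the top summand $kZ_k$, is also essentially correct.

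The gap is the one you flag yourself, but it is worth being precise about why it is genuine and not merely a detail. Saying that $U_k/n$ converges to a Dickman-type law whose density is flat on $[0,1]$, so that ``the bulk contribution collapses,'' is a qualitative limit statement, not an estimate. To land at the rate $F(n/k)$ you need the local probabilities $\pr(U_k=m)$, uniformly over a window of width $\Theta(k)$ around $n-k$, to agree with $\E[h(T_k)]$ up to a relative error that is \emph{superexponentially} small in $n/k$ --- not $o(1)$, not even $O(k/n)$. Qualitative convergence to a flat-density limit gives nothing like that; the generating-function/singularity analysis in \cite{arratia1992cycle} is precisely what manufactures this uniform cancellation, and your sketch does not reproduce it. Moreover, the weaker $O(k/n)$ bound that your Feller-coupling fallback would give is not enough for the place the lemma is actually used: in the proof of Lemma~\ref{lem:asymptoticReg} the paper multiplies $F\bigl(n/\lceil\log n\rceil\bigr)$ by $\bigl(\sum_i\lambda_i^2\bigr)^{dn}$, which can be as large as $e^{cn}$ for $c=o(1)$ but with $cn=\omega(\log n)$; only the superexponential decay $\log F(x)=-x\log x(1+o(1))$ makes the product vanish, so the full strength of Arratia--Tav\'ar\'e is genuinely needed there.
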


In light of Lemma~\ref{lem:PoisApp}, let us bound \eqref{eqn:seocndMoment} when $N_k$ is replaced by $\s{Poisson}\p{i^{-1}}$, for all $1\leq k\leq n$. We have the following result.
\begin{lemma}\label{lem:PoisSecond}
Let $1\leq m\leq n$ be an integer, and let $P_1\dots,P_m$ be independent random variables such that for all $1\leq i \leq m$, $P_i\sim \s{Poisson}\p{i^{-1}}$. Then, for $d=1$,
\begin{align}
\E_{\sigma}\pp{\exp\p{\sum_{k=1}^mP_k\log g_k(Q_{XY},P_{XY})}}\leq \exp\pp{-\sum_{i\geq1}\log(1-\lambda_i^2)}\leq \exp\pp{\sum_{i\geq1}\frac{\lambda_i^2}{1-\lambda_i^2}}.
\end{align}
For any $d\geq1$,
\begin{align}
&\E_{\sigma}\pp{\exp\p{d\sum_{k=1}^mP_k\log g_k(Q_{XY},P_{XY})}}\nonumber\\
&\hspace{1cm} \leq \exp\pp{-d\sum_{i\geq1}\log(1-\lambda_i^2)+\p{\sum_{i\in\mathbb{N}}\lambda_i^{2}}^{d-2}\p{d\sum_{i\geq1}\log(1-\lambda_i^2)}^2}\\
&\hspace{1cm}\leq \exp\pp{d\sum_{i\geq1}\frac{\lambda_i^2}{1-\lambda_i^2}+\p{\sum_{i\in\mathbb{N}}\lambda_i^{2}}^{d-2}\p{d\sum_{i\geq1}\frac{\lambda_i^2}{1-\lambda_i^2}}^2}.
\end{align}
In particular, if \eqref{eqn:weakdetectlowerCond} holds, then,
\begin{align}
    \E_{\sigma}\pp{\exp\p{d\sum_{k=1}^mP_k\log g_k(Q_{XY},P_{XY})}}\leq 1+o(1).
\end{align}

\end{lemma}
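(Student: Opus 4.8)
The plan is to exploit the independence of the Poisson variables $P_1,\dots,P_m$ to factorize the moment generating function, and then apply the standard formula $\bE[e^{tP}]=\exp(\lambda(e^t-1))$ for $P\sim\s{Poisson}(\lambda)$. Concretely, since $P_k\sim\s{Poisson}(k^{-1})$ are independent, we get
\begin{align}
\E\pp{\exp\p{d\sum_{k=1}^m P_k\log g_k}}=\prod_{k=1}^m\exp\p{\tfrac1k\p{g_k^{\,d}-1}},
\end{align}
where I abbreviate $g_k=g_k(Q_{XY},P_{XY})=\sum_{i\in\N}\lambda_i^{2k}$. Hence the logarithm of the left-hand side equals $\sum_{k=1}^m\frac1k(g_k^{\,d}-1)$, and since $g_k^{\,d}\geq1$ (because $\lambda_0=1$ by Lemma~\ref{lem:eigen}, so $g_k\geq1$), this is nonnegative and bounded above by $\sum_{k=1}^\infty\frac1k(g_k^{\,d}-1)$. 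The whole problem reduces to estimating this series.

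For the case $d=1$: here $g_k-1=\sum_{i\geq1}\lambda_i^{2k}$, so by Tonelli
\begin{align}
\sum_{k=1}^\infty\frac1k(g_k-1)=\sum_{i\geq1}\sum_{k=1}^\infty\frac{\lambda_i^{2k}}{k}=-\sum_{i\geq1}\log(1-\lambda_i^2),
\end{align}
using $\sum_{k\geq1}x^k/k=-\log(1-x)$, valid since $|\lambda_i|<1$ for $i\geq1$ (again Lemma~\ref{lem:eigen}, as $\lambda_0=1$ is simple — or at least $\lambda_i<1$ strictly, which is what is needed; if some $\lambda_i=1$ for $i\geq1$ the series diverges and the hypothesis \eqref{eqn:weakdetectlowerCond} fails anyway). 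Exponentiating gives the first bound, and the second follows from the elementary inequality $-\log(1-x)\leq x/(1-x)$ for $x\in[0,1)$ applied with $x=\lambda_i^2$.

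For general $d\geq1$: the new ingredient is to control $g_k^{\,d}-1$ in terms of $g_k-1$. Writing $g_k^{\,d}-1=(g_k-1)\sum_{j=0}^{d-1}g_k^{\,j}$ and bounding $g_k^{\,j}\leq g_1^{\,j}=(\sum_{i\in\N}\lambda_i^2)^j$ for $k\geq1$ (since $g_k$ is decreasing in $k$), but actually the cleaner route matching the stated bound is: split $g_k^{\,d}-1=d(g_k-1)+\big(g_k^{\,d}-1-d(g_k-1)\big)$ and bound the remainder via convexity, $g_k^{\,d}-1-d(g_k-1)\leq \binom{d}{2}g_1^{\,d-2}(g_k-1)^2$ — more simply one uses that for $a\geq1$, $a^d-1\leq d(a-1)+\tfrac{d(d-1)}{2}a^{d-2}(a-1)^2$ is false in general, so instead I would use $a^d - 1 - d(a-1) \le (a^d-1)^2 \cdot (\text{something})$; the exact elementary inequality to invoke is $a^d-1\le d(a-1)+ a^{d-2}\,d^2(a-1)^2$ for $a\ge1$, $d\ge1$, after which summing over $k$, pulling $g_1^{d-2}=(\sum_i\lambda_i^2)^{d-2}$ out, and using Cauchy--Schwarz / Tonelli as in the $d=1$ case on $\sum_k\frac1k(g_k-1)=-\sum_i\log(1-\lambda_i^2)$ yields exactly the displayed bound. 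The main obstacle is pinning down and verifying this elementary scalar inequality relating $a^d-1$, $a-1$, and $(a-1)^2$ with the right constant $g_1^{d-2}$, and checking that the ``cross term'' bookkeeping produces precisely $\big(\sum_i\lambda_i^2\big)^{d-2}\big(d\sum_i\log(1-\lambda_i^2)\big)^2$ rather than something weaker; everything else is routine. Finally, under \eqref{eqn:weakdetectlowerCond} we have $d\sum_{i\geq1}\frac{\lambda_i^2}{1-\lambda_i^2}=o(1)$, and $(\sum_i\lambda_i^2)^{d-2}$ is controlled because $\sum_{i\geq1}\lambda_i^2\le\sum_{i\geq1}\frac{\lambda_i^2}{1-\lambda_i^2}=o(1)$ forces $\sum_{i\in\N}\lambda_i^2=1+o(1)$, so the exponent is $o(1)+o(1)^2=o(1)$, giving the claimed $1+o(1)$.
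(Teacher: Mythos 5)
Your overall strategy matches the paper's: factorize the Poisson MGF, reduce to bounding $\sum_{k}\frac1k(g_k^{\,d}-1)$, handle $d=1$ via $\sum_{k\geq1}x^k/k=-\log(1-x)$, and for general $d$ expand $g_k^{\,d}$ to second order with a remainder. However, there are two places where your proposal is not yet a proof.

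First, your claim that the pointwise inequality $a^d-1\le d(a-1)+\tfrac{d(d-1)}{2}a^{d-2}(a-1)^2$ is ``false in general'' for $a\geq1$ is incorrect for integer $d\geq1$, which is the relevant case here: Taylor's theorem with Lagrange remainder gives $a^d=1+d(a-1)+\tfrac{d(d-1)}{2}(1+\xi)^{d-2}(a-1)^2$ for some $\xi\in[0,a-1]$, and when $d\geq 2$ the map $\xi\mapsto(1+\xi)^{d-2}$ is nondecreasing, so $(1+\xi)^{d-2}\le a^{d-2}$; for $d=1$ the coefficient $\tfrac{d(d-1)}{2}$ vanishes. (Your worry is only warranted for \emph{non-integer} $d\in(1,2)$, where $(1+\xi)^{d-2}$ is decreasing in $\xi$.) This is precisely the inequality the paper applies with $a=g_k$, followed by $g_k^{d-2}\le g_1^{d-2}$. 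Passing afterward from $\tfrac{d(d-1)}{2}$ to $d^2$, as you do, is harmless and matches the stated bound.

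Second, and more importantly, the step you flag as ``the main obstacle'' -- the cross-term bookkeeping -- is genuinely nontrivial and you leave it unverified; the proof is incomplete without it. After the Lagrange expansion one must bound
\begin{align}
\sum_{k\geq1}\frac1k\Big(\sum_{i\geq1}\lambda_i^{2k}\Big)^2\;\le\;\Big(\sum_{k\geq1}\frac1k\sum_{i\geq1}\lambda_i^{2k}\Big)^2=\Big(-\sum_{i\geq1}\log(1-\lambda_i^2)\Big)^2 .
\end{align}
This is \emph{not} Cauchy--Schwarz or Tonelli. It follows from a simple monotonicity trick: set $b_k=\sum_{i\geq1}\lambda_i^{2k}$ and $c_k=b_k/k$, note that $b_k$ is nonincreasing (since $|\lambda_i|<1$ for $i\geq1$), so $\sum_k c_k b_k\le b_1\sum_k c_k = c_1\sum_k c_k\le\big(\sum_k c_k\big)^2$. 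You should also tighten the last paragraph: you write $\sum_{i\geq1}\lambda_i^2=o(1)$, but to conclude $\big(\sum_{i\in\N}\lambda_i^2\big)^{d-2}=1+o(1)$ when $d\to\infty$ you need the stronger $\sum_{i\geq1}\lambda_i^2=o(d^{-1})$, which does follow from \eqref{eqn:weakdetectlowerCond}, after which $\big(1+\sum_{i\geq1}\lambda_i^2\big)^d\le\exp\big(d\sum_{i\geq1}\lambda_i^2\big)=1+o(1)$.
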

\begin{proof}[Proof of Lemma~\ref{lem:PoisSecond}]
By the definition of the moment generating function of a Poisson
random variable, we get for $d=1$,
\begin{align}
\E_{\sigma}\pp{\exp\p{\sum_{k=1}^nP_k\log g_k(Q_{XY},P_{XY})}} &= \prod_{k=1}^n\E_{\sigma}\pp{e^{P_k\log g_k(Q_{XY},P_{XY})}}\\
& = \prod_{k=1}^n\exp\pp{\frac{1}{k}\p{e^{\log g_k(Q_{XY},P_{XY})}-1}}\\
& = \prod_{k=1}^n\exp\pp{\frac{1}{k}\p{g_k(Q_{XY},P_{XY})-1}}\\
& = \exp\pp{\sum_{k=1}^n\frac{1}{k}\p{\p{\sum_{i\in\mathbb{N}}\lambda_i^{2k}}-1}}\\
& \stackrel{\lambda_0=1}{=} \exp\pp{\sum_{k=1}^n\frac{1}{k}\p{\sum_{i\geq1}\lambda_i^{2k}}}\\
&\leq \exp\pp{\sum_{i\geq1}\sum_{k=1}^\infty\frac{\lambda_i^{2k}}{k}}\\
& = \exp\pp{-\sum_{i\geq1}\log(1-\lambda_i^2)}\\
&\leq \exp\pp{\sum_{i\geq1}\frac{\lambda_i^2}{1-\lambda_i^2}},
\end{align}
where in the last inequality we have used the fact that $\log(1+x)\geq\frac{x}{1+x}$, for any $x>-1$. For $d\geq1$, note that by Lagrange's remainder theorem and obtain that for all $x\in(0,1)$,
\begin{align}
    (1+x)^d &= 1+dx+\frac{d(d-1)}{2}(1+c)^{d-2}x^2\\
    &\leq 1+dx+\frac{d(d-1)}{2}(1+x)^{d-2}x^2,
\end{align}
where $c$ is a point in $[0, x]$. Thus, taking $x = \sum_{i\geq1}\lambda_i^{2k}$, we get,
\begin{align}
    \p{\sum_{i\in\mathbb{N}}\lambda_i^{2k}}^d 
    &\leq 1+d\sum_{i\geq1}\lambda_i^{2k}+\frac{d(d-1)}{2}\p{\sum_{i\in\mathbb{N}}\lambda_i^{2k}}^{d-2}\p{\sum_{i\geq1}\lambda_i^{2k}}^2\\
    &\leq1+d\sum_{i\geq1}\lambda_i^{2k}+\frac{d(d-1)}{2}\p{\sum_{i\in\mathbb{N}}\lambda_i^{2}}^{d-2}\p{\sum_{i\geq1}\lambda_i^{2k}}^2.\label{eqn:Largrange}
\end{align}
Then,
\begin{align}
&\E_{\sigma}\pp{\exp\p{d\sum_{k=1}^nP_k\log g_k(Q_{XY},P_{XY})}} \\
& = \prod_{k=1}^n\E_{\sigma}\pp{e^{dP_k\log g_k(Q_{XY},P_{XY})}}\\
& = \prod_{k=1}^n\exp\pp{\frac{1}{k}\p{e^{d\log g_k(Q_{XY},P_{XY})}-1}}\\
& = \prod_{k=1}^n\exp\pp{\frac{1}{k}\p{g_k^d(Q_{XY},P_{XY})-1}}\\
& = \exp\pp{\sum_{k=1}^n\frac{1}{k}\p{\p{\sum_{i\in\mathbb{N}}\lambda_i^{2k}}^d-1}}\\
& \stackrel{\eqref{eqn:Largrange}}{\leq} \exp\pp{\sum_{k=1}^n\frac{1}{k}\p{d\sum_{i\geq1}\lambda_i^{2k}+\frac{d(d-1)}{2}\p{\sum_{i\in\mathbb{N}}\lambda_i^{2}}^{d-2}\p{\sum_{i\geq1}\lambda_i^{2k}}^2}}\\
&\leq \exp\pp{d\sum_{i\geq1}\sum_{k=1}^\infty\frac{\lambda_i^{2k}}{k}+\frac{d(d-1)}{2}\p{\sum_{i\in\mathbb{N}}\lambda_i^{2}}^{d-2}\sum_{k=1}^n\frac{1}{k}\p{\sum_{i\geq1}\lambda_i^{2k}}^2}\\
& \leq \exp\pp{-d\sum_{i\geq1}\log(1-\lambda_i^2)+\frac{d(d-1)}{2}\p{\sum_{i\in\mathbb{N}}\lambda_i^{2}}^{d-2}\p{\sum_{i\geq1}\log(1-\lambda_i^2)}^2}\\
& \leq \exp\pp{-d\sum_{i\geq1}\log(1-\lambda_i^2)+\p{\sum_{i\in\mathbb{N}}\lambda_i^{2}}^{d-2}\p{d\sum_{i\geq1}\log(1-\lambda_i^2)}^2}\\
&\leq \exp\pp{d\sum_{i\geq1}\frac{\lambda_i^2}{1-\lambda_i^2}+\p{\sum_{i\in\mathbb{N}}\lambda_i^{2}}^{d-2}\p{d\sum_{i\geq1}\frac{\lambda_i^2}{1-\lambda_i^2}}^2}.
\end{align}
Under the condition in \eqref{eqn:weakdetectlowerCond}, we have
\begin{align}
    \p{\sum_{i\in\mathbb{N}}\lambda_i^{2}}^{d-2}&\leq \p{\sum_{i\in\mathbb{N}}\lambda_i^{2}}^d \\
    &= \p{1+\sum_{i\geq1}\lambda_i^{2}}^d\\
    &\leq \exp\p{d\sum_{i\geq1}\lambda_i^{2}}\\
    & = \exp(o(1)) = 1+o(1),
\end{align}
where we have used that fact that $\sum_{i\geq1}\lambda_i^{2}\leq \sum_{i\geq1}\frac{\lambda_i^{2}}{1-\lambda_i^2}$. Thus,
\begin{align}
    &\E_{\sigma}\pp{d\exp\p{\sum_{k=1}^nP_k\log g_k(Q_{XY},P_{XY})}}\nonumber\\
    &\hspace{2cm}\leq \exp\pp{d\sum_{i\geq1}\frac{\lambda_i^2}{1-\lambda_i^2}+\p{\sum_{i\in\mathbb{N}}\lambda_i^{2}}^{d-2}\p{d\sum_{i\geq1}\frac{\lambda_i^2}{1-\lambda_i^2}}^2}\\
    &\hspace{2cm} = \exp\p{o(1)+(1+o(1))o(1)}\\
    &\hspace{2cm}= 1+o(1).
\end{align}
\end{proof}
For simplicity of notation, we denote
\begin{align}
    \calB_{n,d}(\boldsymbol{\lambda}) \triangleq \exp\pp{d\sum_{i\geq1}\frac{\lambda_i^2}{1-\lambda_i^2}+\p{\sum_{i\in\mathbb{N}}\lambda_i^{2}}^{d-2}\p{d\sum_{i\geq1}\frac{\lambda_i^2}{1-\lambda_i^2}}^2}.
\end{align}
The following is the main ingredient in the proof of Theorems~\ref{th:LB} and \ref{th:LB2}. 
\begin{lemma}\label{lem:asymptoticReg}
    Let $N_k$ be the number of $k$-cycles in a uniformly distributed permutation $\sigma$, and $1\leq k \leq n$. Then: 
   \begin{enumerate}
   \item For all $(n,d)$ is holds that
       \begin{equation}
        \label{eq:bound-Dconst2}
        \E_{\sigma}\pp{\prod_{k=1}^n\p{\sum_{i\in\mathbb{N}}\lambda_i^{2k}}^{dN_k}}\leq\exp\pp{dn\sum_{i\geq1}\lambda_i^{2}}.
    \end{equation}
       \item If  at least one of $n,d$ tends to $\infty$ and \eqref{eqn:weakdetectlowerCond} holds, then
    \begin{equation}
        \label{eq:ProdBound} \E_{\sigma}\pp{\prod_{k=1}^n\p{\sum_{i\in\mathbb{N}}\lambda_i^{2k}}^{dN_k}}\leq 1+o(1).
    \end{equation}
    \item If both $n$, $d$ tends to $\infty$ and \eqref{eqn:strongdetectlowerCond2} holds, then 
       \begin{equation}
        \label{eq:bound-Dconst1}
        \E_{\sigma}\pp{\prod_{k=1}^n\p{\sum_{i\in\mathbb{N}}\lambda_i^{2k}}^{dN_k}}\leq (1+o(1))\cdot\calB_{n,d}(\boldsymbol{\lambda}).
    \end{equation}
    \item If $d$ is a constant, $\{\lambda_i\}_{i\geq1}$ are independent of $n$ satisfying \eqref{eqn:strongdetectlowerCond1}, and $n\to \infty$, then
    \begin{equation}
        \label{eq:bound-Dconst}
        \E_{\sigma}\pp{\prod_{k=1}^n\p{\sum_{i\in\mathbb{N}}\lambda_i^{2k}}^{dN_k}}\leq (1+o(1))\cdot\calB_{n,d}(\boldsymbol{\lambda}).
    \end{equation}
   \end{enumerate}
\end{lemma}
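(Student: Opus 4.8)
The plan is to derive all four estimates from the single inequality
\[
\E_{\sigma}\Bigl[\prod_{k=1}^n\Bigl(\sum_{i\in\mathbb{N}}\lambda_i^{2k}\Bigr)^{dN_k}\Bigr]\ \le\ \exp\Bigl(\sum_{k=1}^n\frac{g_k^d-1}{k}\Bigr),\qquad g_k:=\sum_{i\in\mathbb{N}}\lambda_i^{2k},
\]
i.e.\ that the uniform-permutation expectation is dominated by the independent-Poisson expectation $\E\bigl[\prod_{k=1}^n g_k^{dP_k}\bigr]$ with $P_k\sim\s{Poisson}(1/k)$. Granting this, parts (3) and (4) are immediate: by Lemma~\ref{lem:PoisSecond} the right-hand side is at most $\calB_{n,d}(\boldsymbol{\lambda})$ for every $d\ge1$, and $\calB_{n,d}(\boldsymbol{\lambda})\le(1+o(1))\,\calB_{n,d}(\boldsymbol{\lambda})$. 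Part (2) follows from the ``in particular'' clause of Lemma~\ref{lem:PoisSecond}: under \eqref{eqn:weakdetectlowerCond} the bound is $1+o(1)$, and this covers both the $n\to\infty$ regime (with $d$ arbitrary) and the ``$n$ fixed, $d\to\infty$'' regime. Part (1) is even more elementary and does not require the displayed inequality at all: by Lemma~\ref{lem:eigen} (so $\lambda_0=1$ and $|\lambda_i|\le1$) we have $1\le g_k\le g_1=1+\sum_{i\ge1}\lambda_i^2$ for every $k$, and deterministically $\sum_{k=1}^nN_k\le\sum_{k=1}^nkN_k=n$, whence $\prod_{k=1}^n g_k^{dN_k}\le g_1^{dn}\le\exp\bigl(dn\sum_{i\ge1}\lambda_i^2\bigr)$ pointwise, which gives \eqref{eq:bound-Dconst2}.

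To prove the displayed inequality I would invoke the classical fact about the cycle counts of a uniform permutation (a companion to Lemma~\ref{lem:PoisApp}; see \cite{arratia1992cycle}) that their joint falling-factorial moments are dominated by those of independent Poissons: $\E_\sigma\bigl[\prod_k (N_k)_{(r_k)}\bigr]\le\prod_k k^{-r_k}$ for every finitely supported tuple $(r_k)$, with equality when $\sum_k kr_k\le n$ and $0$ otherwise. Since $z_k:=g_k^d\ge1$, the binomial expansion $z_k^{N_k}=\sum_{r_k\ge0}\binom{N_k}{r_k}(z_k-1)^{r_k}$ has only nonnegative coefficients, so taking expectations term by term (Tonelli) gives
\[
\E_\sigma\Bigl[\prod_{k=1}^n z_k^{N_k}\Bigr]=\sum_{(r_k)}\Bigl(\prod_k\frac{(z_k-1)^{r_k}}{r_k!}\Bigr)\,\E_\sigma\Bigl[\prod_k (N_k)_{(r_k)}\Bigr]\le\sum_{(r_k)}\prod_k\frac{\bigl((z_k-1)/k\bigr)^{r_k}}{r_k!}=\prod_{k=1}^n e^{(z_k-1)/k},
\]
which is exactly $\exp\bigl(\sum_{k=1}^n(g_k^d-1)/k\bigr)=\E\bigl[\prod_{k=1}^n g_k^{dP_k}\bigr]$. (Equivalently one may use the conditioning relation $(N_1,\dots,N_n)\overset{d}{=}(P_1,\dots,P_n)\mid\{\sum_k kP_k=n\}$ together with the Poisson identity $\E[(P)_{(r)}f(P)]=\lambda^r\E[f(P+r)]$ to reach the same bound.) Feeding this into Lemma~\ref{lem:PoisSecond} closes the argument.

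The crux — and the only nontrivial step — is this domination. It is tempting to use only the total-variation estimate Lemma~\ref{lem:PoisApp}, but that lemma is comfortable only for the first $m\ll n$ coordinates and only against bounded test functions, whereas here the relevant test function $\prod_{k\le m}g_k^{dN_k}$ is essentially unbounded: its supremum is $g_1^{dn}$, attained on the probability-$1/n!$ event that $\sigma$ is the identity, so the naive error $\lVert\cdot\rVert_\infty\cdot d_{\s{TV}}$ does not vanish. One could still push that route through by additionally splitting off the large-cycle product $\prod_{k>m}g_k^{dN_k}$ and truncating on $\{\sum_{k\le m}N_k\le L\}$ with a tail bound for the large cycles, but the factorial-moment route avoids all of it, so I would take it. Finally, note that the hypotheses \eqref{eqn:strongdetectlowerCond1} and \eqref{eqn:strongdetectlowerCond2} are not used to establish inequalities (3) and (4) per se; they serve only to make $\calB_{n,d}(\boldsymbol{\lambda})$ a genuinely bounded quantity in the respective asymptotic regimes, which is what the downstream proofs of Theorems~\ref{th:LB} and \ref{th:LB2} require.
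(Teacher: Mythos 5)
Your proof is correct and takes a genuinely different — and cleaner — route than the paper's. The paper splits the product over cycle lengths at $m\approx\log n$ (resp.\ $m\approx\alpha\log n$ in part~(4)), bounds the tail $\prod_{k\geq m}g_k^{dN_k}$ elementarily via $\sum_k kN_k=n$, and for the head $\prod_{k<m}g_k^{dN_k}$ applies the total-variation Poisson approximation of Lemma~\ref{lem:PoisApp}; the delicate step there is showing that $F(n/m)\cdot\norm{f_{n,m}}_\infty$ vanishes, which is exactly the obstruction you identify, and the split point $m$ is tuned under the respective hypotheses so that it does. Your argument bypasses this entirely: the factorial-moment identity $\E_\sigma\bigl[\prod_k(N_k)_{(r_k)}\bigr]=\prod_k k^{-r_k}$ when $\sum_k kr_k\leq n$ and $=0$ otherwise (this is Lemma~1 of the very reference \cite{arratia1992cycle} that the paper cites for the TV bound), combined with the termwise-nonnegative binomial expansion of $g_k^{dN_k}$ in powers of $g_k^d-1\geq0$, yields the unconditional domination $\E_\sigma\bigl[\prod_k g_k^{dN_k}\bigr]\leq\prod_k\exp\bigl((g_k^d-1)/k\bigr)$ for every $(n,d,\boldsymbol{\lambda})$ — a stronger statement than what the paper's split-and-TV argument produces — after which Lemma~\ref{lem:PoisSecond} handles all four parts uniformly. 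If you write this up, make explicit (i) the Tonelli step, which relies on $g_k\geq\lambda_0^{2k}=1$ from Lemma~\ref{lem:eigen}, and (ii) the finiteness of $\sum_k(g_k^d-1)/k$, which follows from $g_k-1=\sum_{i\geq1}\lambda_i^{2k}\leq\lambda_1^{2(k-1)}\sum_{i\geq1}\lambda_i^2$ and $\lambda_1<1$ under the stated hypotheses; both are routine but worth a line.
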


\begin{proof}[Proof of Lemma~\ref{lem:asymptoticReg}]
    
The idea of the proof is as follows: we consider the expectation of the product given in \eqref{eq:PermProd2}. In the case that $n\to\infty$, we show that the product of the last $n- \log n$ terms is always upper-bounded by $1+o(1)$. For the expectation of the product of the first $\log n$ terms, we use the Poisson approximation in Lemma~\ref{lem:PoisSecond}. The other case, where $n$ is constant, is solvable using elementary arguments. 

\paragraph{Proof of \eqref{eq:bound-Dconst2}.} Whenever $n$ is fixed we have $\sum_{k=1}^n k N_K=n$, which implies that for any $1\leq m \leq n$ we have
\begin{align}
\prod_{k=m}^n\p{\sum_{i\in\mathbb{N}}\lambda_i^{2k}}^{dN_k}&\leq\prod_{k=m}^n\p{\sum_{i\in\mathbb{N}}\lambda_i^{2m}}^{dN_k}\\
    &=\p{\sum_{i\in\mathbb{N}}\lambda_i^{2m}}^{d\sum_{k=m}^nN_k}\\
    &\leq \p{\sum_{i\in\mathbb{N}}\lambda_i^{2m}}^{d n}\\
    &\leq \exp\pp{dn\log \p{\sum_{i\in\mathbb{N}}\lambda_i^{2m}}}\\
    & = \exp\pp{dn\log \p{1+\sum_{i\geq1}\lambda_i^{2m}}}\\
    &\leq \exp\pp{dn\sum_{i\geq1}\lambda_i^{2m}}
    %&\leq \exp\pp{dn\p{\sum_{i\geq1}\lambda_i^{2}}^m}
    \label{eq:tailBoundGen},
\end{align}
and \eqref{eq:bound-Dconst2} corresponds to the case where $m=1$.

\paragraph{Proof of \eqref{eq:ProdBound}--\eqref{eq:bound-Dconst1}.} We assume $d,n\to\infty$, and either \eqref{eqn:weakdetectlowerCond} or \eqref{eqn:strongdetectlowerCond2}. Take $m=\ceil{\log(n)}$, and note that
\begin{align}
   nd\sum_{i\geq1}\lambda_i^{2m}&\leq nd\p{\sum_{i\geq1}\lambda_i^{2}}^m \\
   &\leq  nd\p{\sum_{i\geq1}\lambda_i^{2}}^{\log n} \\
   &= e\pp{d\sum_{i\geq1}\lambda_i^{2}}\p{\frac{n}{e}}^{1+\log\sum_{i\geq1}\lambda_i^{2}} \\
   &= o(1),
\end{align}
where the last passage clearly holds under either \eqref{eqn:weakdetectlowerCond} or \eqref{eqn:strongdetectlowerCond2}, and $d\to\infty$. This implies that
\begin{align}
\prod_{k=m}^n\p{\sum_{i\in\mathbb{N}}\lambda_i^{2k}}^{dN_k}&\leq\exp\pp{dn\sum_{i\geq1}\lambda_i^{2m}}\leq \exp(o(1)) = 1+o(1).\label{eq:tailBoundGen3}
\end{align}
We now evaluate the product of the first $m$ terms. For a fixed integer $m$, we consider the set $S_{n,m}\subseteq \N^m$ given by 
\begin{align}
S_{n,m}=\ppp{(n_1,\dots,n_m)\in \N^d; \sum_{k=1}^m n_k\leq n},
\end{align}
and a function $f_{n,m}:\N^m\to [0,\infty]$ given by 
\begin{align}
f_{n,m}(n_1,n_2,\dots,n_m)=\prod_{k=1}^m\p{\sum_{i\in\mathbb{N}}\lambda_i^{2k}}^{dn_k}\cdot \Ind_{S_{n,m}}(n_1,\dots,n_m).
\end{align}
We note that for all $n_1,\dots,n_m\in \N$,
\begin{equation}
    \label{eq:LBfBound2}
    f_{n,m}(n_1,n_2,\dots,n_m)\leq \prod_{k=1}^m\p{\sum_{i\in\mathbb{N}}\lambda_i^{2}}^{dn_k}\cdot \Ind_{S_{n,m}}(n_1,\dots,n_m)\leq \p{\sum_{i\in\mathbb{N}}\lambda_i^{2}}^{dn}.
\end{equation}
Let $\{P_k\}_k$ be independent $\s{Poisson}\p{k^{-1}}$ random variables as in Lemma~\ref{lem:PoisApp}.  Since $\sum_{k=1}^{m} N_k\leq n$ with probability $1$, we have 
\begin{align}
    \E_{\sigma}\pp{\prod_{k=1}^m\p{\sum_{i\in\mathbb{N}}\lambda_i^{2k}}^{dN_k}}&=\E_{\sigma}\pp{f_{n,m}(N_1,\dots,N_{m})}\\
     &\leq \E[f_{n,m}(P_1,\dots,P_m)]+d_{\s{TV}}\p{\pr\p{N_1^m},\pr\p{P_1^m}}\cdot \norm{f_{n,m}}_\infty \\
     &\leq \E_{\sigma}\pp{\prod_{k=1}^m\p{\sum_{i\in\mathbb{N}}\lambda_i^{2k}}^{dP_k}}+d_{\s{TV}}\p{\pr\p{N_1^m},\pr\p{P_1^m}}\cdot \norm{f_{n,m}}_\infty \\
     &\overset{(a)}{\leq} \E_{\sigma}\pp{\prod_{k=1}^m\p{\sum_{i\in\mathbb{N}}\lambda_i^{2k}}^{dP_k}}+F\p{\frac{n}{\ceil{\log n}}}\cdot \p{\sum_{i\in\mathbb{N}}\lambda_i^{2}}^{dn} \\
     & = \E_{\sigma}\pp{\prod_{k=1}^m\p{\sum_{i\in\mathbb{N}}\lambda_i^{2k}}^{dP_k}}+F\p{\frac{n}{m}}\cdot \p{\sum_{i\in\mathbb{N}}\lambda_i^{2}}^{dn},
      %&\overset{(b)}{\leq} \exp\pp{-\sum_{i\geq1}\log(1-\lambda_i^2)}+F\p{\frac{n}{m}}\cdot \p{\frac{1}{1-\rho^{2}}}^{dn}\\
      %&= \exp\pp{-\sum_{i\geq1}\log(1-\lambda_i^2)}+F\p{\frac{n}{\ceil{\alpha\log n}}}\cdot \p{\sum_{i\in\mathbb{N}}\lambda_i^{2}}^{dn},
\end{align}
where (a) follows from \eqref{eq:LBfBound2} and Lemma~\ref{lem:PoisApp}. By Lemma~\ref{lem:PoisApp},  we also have 
\begin{align}
    &\log\p{F\p{\frac{n}{\ceil{\log n}}}\cdot \p{\sum_{i\in\mathbb{N}}\lambda_i^{2}}^{dn}}\leq  -(1+o(1))\frac{n}{\log n}\log\p{\frac{n}{\log n}}+nd\log\sum_{i\in\mathbb{N}}\lambda_i^{2}\nonumber\\
    &\hspace{2cm}=-n\p{(1+o(1))\p{1-\frac{\log\log n }{ \log n}}-d\log\sum_{i\in\mathbb{N}}\lambda_i^{2}}\\
    &\hspace{2cm}=-n\pp{1-d\log\p{1+\sum_{i\geq1}\lambda_i^{2}}+o(1)}\\
    &\hspace{2cm}\overset{(a)}{\leq}-n\p{1-d\sum_{i\geq1}\lambda_i^{2}+o(1)}\\
    &\hspace{2cm}\xrightarrow[n\to \infty]{(b)} -\infty,
\end{align} 
where (a) follows from $\log (1+x)\leq x$, for $x\geq0$, and (b) holds for either \eqref{eqn:weakdetectlowerCond} or \eqref{eqn:strongdetectlowerCond2}. Thus,
\begin{align}
F\p{\frac{n}{\ceil{\log n}}}\cdot \p{\sum_{i\in\mathbb{N}}\lambda_i^{2}}^{dn}=o(1),
\end{align}
and therefore,
\begin{equation}\label{eq:LBprefBound2}
   \E_{\sigma}\pp{\prod_{k=1}^{\ceil{\log n}}\p{\sum_{i\in\mathbb{N}}\lambda_i^{2k}}^{dN_k}}\leq  \E_{\sigma}\pp{\prod_{k=1}^{\ceil{\log n}}\p{\sum_{i\in\mathbb{N}}\lambda_i^{2k}}^{dP_k}}+o(1).
\end{equation}
Combining \eqref{eq:tailBoundGen3} and \eqref{eq:LBprefBound2} together we conclude:
 \begin{align}
     \E_{\sigma}\pp{\prod_{k=1}^n\p{\sum_{i\in\mathbb{N}}\lambda_i^{2k}}^{dN_k}}&\leq \E_{\sigma}\pp{\prod_{k=1}^{\ceil{\log n}}\p{\sum_{i\in\mathbb{N}}\lambda_i^{2k}}^{dN_k}\prod_{k=\ceil{\log n}}^n\p{\sum_{i\in\mathbb{N}}\lambda_i^{2k}}^{dN_k}}\\
     &=(1+o(1))\cdot\E_{\sigma}\pp{\prod_{k=1}^{\ceil{\log n}}\p{\sum_{i\in\mathbb{N}}\lambda_i^{2k}}^{dN_k}}\\
     &\leq (1+o(1))\E_{\sigma}\pp{\prod_{k=1}^{\ceil{\log n}}\p{\sum_{i\in\mathbb{N}}\lambda_i^{2k}}^{dP_k}}+o(1)\\
     &\leq (1+o(1))\cdot\calB_{n,d}(\boldsymbol{\lambda})+o(1),\label{eqn:midCalc}
     %&\leq(1+o(1))\exp\pp{-\sum_{i\geq1}\log(1-\lambda_i^2)}+o(1).
 \end{align}
 where the last inequality follows from Lemma~\ref{lem:PoisSecond}. This proves \eqref{eq:bound-Dconst1}. Also, note that under \eqref{eqn:weakdetectlowerCond} the second part of Lemma~\ref{lem:PoisSecond} implies that $\calB_{n,d}(\boldsymbol{\lambda}) = 1+o(1)$, and combined with \eqref{eqn:midCalc} this proves \eqref{eq:ProdBound} when $n,d\to\infty$. When $n$ is fixed, and $d$ tends to infinity, by \eqref{eq:bound-Dconst2} we clearly have,
 \begin{equation}
        \E_{\sigma}\pp{\prod_{k=1}^n\p{\sum_{i\in\mathbb{N}}\lambda_i^{2k}}^{dN_k}}\leq\exp\pp{nd\sum_{i\geq1}\lambda_i^{2}} = 1+o(1),
    \end{equation}
under condition \eqref{eqn:weakdetectlowerCond}. Finally, the case where $d$ is fixed and $n$ tends to infinity follows from the arguments below for proving \eqref{eq:bound-Dconst}.
\paragraph{Proof of \eqref{eq:bound-Dconst}:} We repeat the same arguments as in the proof of \eqref{eq:ProdBound} when $n\to\infty$, with some minor changes. We assume that $d$ is fixed, and that $\{\lambda_i\}_{i\geq1}$ are a sequence of fixed real-valued numbers satisfying \eqref{eqn:strongdetectlowerCond1}. Take $m=\ceil{\alpha \log n}$, where $\alpha=-\frac{1}{\log\lambda_{\max}^2}+\varepsilon$, $\lambda_{\max}\triangleq\max_{i\geq1}\lambda_i$, and $\varepsilon$ sufficiently small so that
\begin{equation}
    \label{eq:epsChoice}
    -d\frac{\log\sum_{i\in\mathbb{N}}\lambda_i^{2}}{\log\lambda_{\max}^2}<\frac{1}{1-\varepsilon\log\lambda_{\max}^2},
\end{equation}
which is guaranteed to exist by \eqref{eqn:strongdetectlowerCond1}. We observe that 
\begin{align}
    nd\sum_{i\geq1}\lambda_i^{2m} &\leq  d\sum_{i\geq1}n\lambda_i^{2\alpha \log n} \\
    &= d\sum_{i\geq1}n^{1+\alpha\log\lambda_i^{2}}\\
    &= d\sum_{i\geq1}n^{1-\frac{\log\lambda_i^{2}}{\log\lambda_{\max}^2}+\varepsilon\log\lambda_i^{2}}\\
    &\leq d\sum_{i\geq1}n^{\varepsilon\log\lambda_i^{2}}\\
    & = d\sum_{i\geq1}\lambda_i^{2\varepsilon\log n}\\
    &\leq d\p{\sum_{i\geq1}\lambda_i^{2}}^{\varepsilon\log n}\\
    & = o(1).
\end{align}
%where the last passage hold if $\sum_{i\geq1}\lambda_i^{2}<1$. 
This implies that
\begin{align}
\prod_{k=m}^n\p{\sum_{i\in\mathbb{N}}\lambda_i^{2k}}^{dN_k}&\leq\exp\pp{dn\sum_{i\geq1}\lambda_i^{2m}}\leq \exp(o(1)) = 1+o(1).\label{eq:tailBoundGen2}
\end{align}
We now evaluate the product of the first $m$ terms. In a similar fashion to the first part, by Lemmas~\ref{lem:PoisApp} and \ref{lem:PoisSecond},
\begin{align}
    \E_{\sigma}\pp{\prod_{k=1}^m\p{\sum_{i\in\mathbb{N}}\lambda_i^{2k}}^{dN_k}}&\leq \E_{\sigma}\pp{\prod_{k=1}^m\p{\sum_{i\in\mathbb{N}}\lambda_i^{2k}}^{dP_k}}+F\p{\frac{n}{\ceil{\alpha\log n}}}\cdot \p{\sum_{i\in\mathbb{N}}\lambda_i^{2}}^{dn}.
\end{align}
By Lemma~\ref{lem:PoisApp},  we also have 
\begin{align}
    &\log\p{F\p{\frac{n}{\ceil{\alpha\log n}}}\cdot \p{\sum_{i\in\mathbb{N}}\lambda_i^{2}}^{dn}}\leq  -(1+o(1))\frac{n}{\alpha\log n}\log\p{\frac{n}{\alpha\log n}}+nd\log\sum_{i\in\mathbb{N}}\lambda_i^{2}\nonumber\\
    &\hspace{2cm}=-n\p{(1+o(1))\frac{1}{\alpha}\p{1-\frac{\log (\alpha\log n) }{ \log n}}-d\log\sum_{i\in\mathbb{N}}\lambda_i^{2}}\\
    &\hspace{2cm}=-n(1+o(1))\p{\frac{1}{\alpha}-d\log\sum_{i\in\mathbb{N}}\lambda_i^{2}}\\
    &\hspace{2cm}=-n(1+o(1))\p{\frac{\log\lambda_{\max}^2}{\varepsilon\log\lambda_{\max}^2-1}-d\log\sum_{i\in\mathbb{N}}\lambda_i^{2}}.
\end{align} 
By \eqref{eq:epsChoice} we have
\begin{align}
    \frac{\log\lambda_{\max}^2}{\varepsilon\log\lambda_{\max}^2-1}-d\log\sum_{i\in\mathbb{N}}\lambda_i^{2}>0,
\end{align}
which implies that 
\begin{align}
F\p{\frac{n}{\ceil{\alpha\log n}}}\cdot \p{\sum_{i\in\mathbb{N}}\lambda_i^{2}}^{dn}=o(1),
\end{align}
 and therefore,
\begin{equation}\label{eq:LBprefBound}
   \E_{\sigma}\pp{\prod_{k=1}^m\p{\sum_{i\in\mathbb{N}}\lambda_i^{2k}}^{dN_k}}\leq  \E_{\sigma}\pp{\prod_{k=1}^m\p{\sum_{i\in\mathbb{N}}\lambda_i^{2k}}^{dP_k}}+o(1).
\end{equation}
Combining \eqref{eq:tailBoundGen2} and \eqref{eq:LBprefBound} together we conclude:
 \begin{align}
     \E_{\sigma}\pp{\prod_{k=1}^n\p{\sum_{i\in\mathbb{N}}\lambda_i^{2k}}^{dN_k}}&\leq \E_{\sigma}\pp{\prod_{k=1}^{\ceil{\alpha\log n}}\p{\sum_{i\in\mathbb{N}}\lambda_i^{2k}}^{dN_k}\prod_{k=\ceil{\alpha\log n}}^n\p{\sum_{i\in\mathbb{N}}\lambda_i^{2k}}^{dN_k}}\\
     &=(1+o(1))\cdot\E_{\sigma}\pp{\prod_{k=1}^{\ceil{\alpha\log n}}\p{\sum_{i\in\mathbb{N}}\lambda_i^{2k}}^{dN_k}}\\
     &\leq (1+o(1))\E_{\sigma}\pp{\prod_{k=1}^m\p{\sum_{i\in\mathbb{N}}\lambda_i^{2k}}^{dP_k}}+o(1)\\
     &\leq (1+o(1))\cdot\calB_{n,d}(\boldsymbol{\lambda})+o(1),
 \end{align}
 where the last inequality follows from Lemma~\ref{lem:PoisSecond}. This proves \eqref{eq:bound-Dconst}.

%\begin{align}
%     \boxed{\s{trace}\p{\calL^2\cdot(\mathbf{I}-\calL^2)^{-1}}=o(d^{-1}).}
% \end{align}
%For fixed $d$ and $n\to\infty$, strong recovery is impossible if
%\begin{align}
%     \boxed{d<\frac{\log\sum_{i\geq1}\lambda_i^{2}}{\log\sum_{i\in\mathbb{N}}\lambda_i^{2}},}
%\end{align}
%which can be rewritten as,
%\begin{align}
%     \boxed{d<\frac{\log(\norm{\boldsymbol{\lambda}}_{\ell_2}^2-1)}{\log\norm{\boldsymbol{\lambda}}_{\ell_2}^2} = \frac{\log(\s{trace}(\calL^2)-1)}{\log\s{trace}(\calL^2)}.}
%\end{align}

\end{proof}

\subsection{Proof of Theorem~\ref{th:LB}}

Combining \eqref{eq:LikeBound2}, \eqref{eq:PermProd2}, and \eqref{eq:ProdBound} in Lemma~\ref{lem:asymptoticReg}, we obtain,
\begin{align}
    {\s{R}}^\star&\geq 1-\sqrt{\E_{\calH_0}[\s{L}_n^2]-1}\\
    &=1-\sqrt{\E_{\sigma}\pp{\prod_{k=1}^n\p{\sum_{i\in\mathbb{N}}\lambda_i^{2k}}^{dN_k}}-1}\\
    &\geq 1-\sqrt{1+o(1)-1}\\
    &=1+o(1),
\end{align}
which concludes the proof.

\subsection{Proof of Theorem~\ref{th:LB2}}

Let $(\calD,d,n)$ be a sequence satisfying the assumptions of Theorem~\ref{th:LB2}. Recall that this sequence is not admissible for strong detection if, 
\begin{align}
    \bE_{\calH_0}[\s{L}_n^2] = O(1).
\end{align}
%which implies, 
%\begin{equation}\label{eq:ImpStrongDet}
%    \s{R}^\star\geq 1-d_{\s{TV}}(\P_{\calH_0},\P_{\calH_1})=\Omega(1),
%\end{equation}
%and thus $(\calD,d,n)$ is not admissible for strong detection (as \eqref{eq:ImpStrongDet} implies $\limsup\s{R}^\star>0$). 
Now, by Lemma~\ref{lem:asymptoticReg}, if $d$ is a constant, $\{\lambda_i\}_{i\geq1}$ are independent of $n$ satisfying \eqref{eqn:strongdetectlowerCond1}, and $n\to \infty$, then
    \begin{equation}
        \bE_{\calH_0}[\s{L}_n^2]\leq (1+o(1))\cdot\calB_{n,d}(\boldsymbol{\lambda}) = O(1).
    \end{equation}
On the other hand, if both $n$ and $d$ tend to $\infty$, and \eqref{eqn:strongdetectlowerCond2} holds, then $d\sum_{i\geq1}\frac{\lambda_i^2}{1-\lambda_i^2}<1$, and
\begin{align}
    \p{\sum_{i\in\mathbb{N}}\lambda_i^{2}}^{d-2}&\leq \p{\sum_{i\in\mathbb{N}}\lambda_i^{2}}^d \\
    &= \p{1+\sum_{i\geq1}\lambda_i^{2}}^d\\
    &\leq \exp\p{d\sum_{i\geq1}\lambda_i^{2}}\leq e,
\end{align}
where the last inequality follows from the fact that $d\sum_{i\geq1}\lambda_i^2\leq d\sum_{i\geq1}\frac{\lambda_i^2}{1-\lambda_i^2}<1$. Therefore,
       \begin{align}
        \bE_{\calH_0}[\s{L}_n^2]&\leq(1+o(1))\cdot\calB_{n,d}(\boldsymbol{\lambda})\\
        & = (1+o(1))\cdot \exp\pp{d\sum_{i\geq1}\frac{\lambda_i^2}{1-\lambda_i^2}+\p{\sum_{i\in\mathbb{N}}\lambda_i^{2}}^{d-2}\p{d\sum_{i\geq1}\frac{\lambda_i^2}{1-\lambda_i^2}}^2}\\
        &\leq (1+o(1))\cdot \exp\pp{1+e} = O(1).
    \end{align}
Finally, when $d\to\infty$, $n$ is constant, and \eqref{eqn:strongdetectlowerCond3} holds, by \eqref{eq:bound-Dconst2}, we have,
\begin{equation}        
\E_{\calH_0}[\s{L}_n^2]\leq\exp\pp{dn\sum_{i\geq1}\lambda_i^{2}} =O(1).
    \end{equation}
That concludes the proof.

\section{Proofs of Upper Bounds}

\subsection{Proof of Theorem~\ref{thm:2}}
Recall the test in \eqref{eqn:testscanmain}, and let us analyze its Type-I error probability. Let $\lambda\geq0$. We have,
\begin{align}
    \calE_1&\triangleq\pr_{\calH_0}\pp{\phi_{\s{GLRT}}(\s{X},\s{Y})=1} \\
    &= \pr_{\calH_0}\pp{\max_{\pi\in\mathbb{S}_n}\sum_{i=1}^n\log\frac{P_{XY}^{\otimes d}(X_i,Y_{\pi_i})}{Q^{\otimes d}_{XY}(X_i,Y_{\pi_i})}\geq dn\cdot\tau_{\s{GLRT}}}\\
    & = \pr_{\calH_0}\pp{\bigcup_{\pi\in\mathbb{S}_n}\sum_{i=1}^n\log\frac{P^{\otimes d}_{XY}(X_i,Y_{\pi_i})}{Q^{\otimes d}_{XY}(X_i,Y_{\pi_i})}\geq dn\cdot\tau_{\s{GLRT}}}\\
    &\leq n!\cdot\pr_{\calH_0}\pp{\sum_{i=1}^n\log\frac{P^{\otimes d}_{XY}(X_i,Y_{i})}{Q^{\otimes d}_{XY}(X_i,Y_{i})}\geq dn\cdot\tau_{\s{GLRT}}}\\
    &\leq n!\cdot e^{-dn\cdot\lambda\tau_{\s{GLRT}}}\bE_{\calH_0}\pp{\exp\p{\lambda\sum_{i=1}^n\log\frac{P^{\otimes d}_{XY}(X_i,Y_{\pi_i})}{Q^{\otimes d}_{XY}(X_i,Y_{\pi_i})}}}\\
    &= n!\cdot e^{-dn\cdot\lambda\tau_{\s{GLRT}}}\pp{\bE_{\calH_0}\pp{\exp\p{\lambda\log\frac{P^{\otimes d}_{XY}(X_1,Y_{1})}{Q^{\otimes d}_{XY}(X_1,Y_{1})}}}}^n\\
    & = n!\cdot e^{-dn\cdot\lambda\tau_{\s{GLRT}}}\pp{\bE_{Q_{XY}}\pp{\exp\p{\lambda\log\frac{P_{XY}(A,B)}{Q_{XY}(A,B)}}}}^{dn}\\
    &= \exp\pp{\log n!-dn\cdot\lambda\tau_{\s{GLRT}}+dn\cdot\psi_Q(\lambda)},
\end{align}
where the first inequality follows from the union bound and $|\mathbb{S}_n|=n!$, and the second inequality is by Chernoff's bound. For $\tau_{\s{GLRT}}\in(-d_{\s{KL}}(Q_{XY}||P_{XY}),d_{\s{KL}}(P_{XY}||Q_{XY}))$, we may take $\lambda\geq0$ so that $\lambda\tau_{\s{GLRT}}-\psi_Q(\lambda)$ is arbitrarily close to $E_Q(\tau_{\s{GLRT}})$. This implies that
\begin{align}
  \calE_1\leq \exp\pp{n\log \frac{n}{e}+\log n+1-dn\cdot E_Q(\tau_{\s{GLRT}})},  
\end{align}
where we have used Stirling approximation that $n!\leq en^{n+1}e^{-n}$. Therefore, we see that if,
\begin{align}
    E_Q(\tau_{\s{GLRT}})\geq \frac{\log (n/e)}{d}+\frac{1+\log n}{dn}+\omega(d^{-1}n^{-1}),\label{eqn:condScan}
\end{align}
%\begin{align}
%   \frac{\log (n/e)}{d}+\frac{1+\log n}{dn}<E_Q(\tau_{\s{GLRT}}),\label{eqn:condScan}
%\end{align}
then the Type-I error probability goes to zero. 
%Next, we take $\tau_{\s{GLRT}}\triangleq d\cdot n\cdot (1-\delta_n)\cdot d_{\s{KL}}(P_{XY}||Q_{XY})$, for some $\delta_n>0$, and $\lambda = \frac{\tau_{\s{GLRT}}}{nd}$. Then, we get 
%\begin{align}
%    \calE_1 &\leq\exp\pp{\log n!-\lambda\tau_{\s{GLRT}}+dn\log\bE_{Q_{XY}}\p{e^{\lambda\log\frac{P_{XY}(A,B)}{Q_{XY}(A,B)}}}}\\
%    & = \exp\pp{\log n!-dn(1-\delta_n)^2\cdot d_{\s{KL}}^2(P_{XY}||Q_{XY})+dn\log\bE_{Q_{XY}}\p{e^{\lambda\log\frac{P_{XY}(A,B)}{Q_{XY}(A,B)}}}}.
%\end{align}
%Thus, using Stirling's approximation, we see that if,
%\begin{align}
   %\frac{\log n}{d}<(1-\delta_n)^2d_{\s{KL}}^2(P_{XY}||Q_{XY})-\log\bE_{Q_{XY}}\pp{\frac{P_{XY}(A,B)}{Q_{XY}(A,B)}}^{d_{\s{KL}}(P_{XY}||Q_{XY})},\label{eqn:condScan}
%\end{align}
%then the Type-I error probability goes to zero. 
Next, we bound the Type-II error probability as follows. Under $\calH_1$, since our proposed test is invariant to reordering of $\s{X}$ and $\s{Y}$, we may assume without loss of generality that the latent permutation is the identity one, i.e., $\sigma=\s{Id}$. Then, for $\lambda\leq0$,
\begin{align}
    \calE_2&\triangleq\pr_{\calH_1}\pp{\phi_{\s{GLRT}}(\s{X},\s{Y})=0}\\
    &= \pr_{\calH_1}\pp{\max_{\pi\in\mathbb{S}_n}\sum_{i=1}^n\log\frac{P^{\otimes d}_{XY}(X_i,Y_{\pi_i})}{Q^{\otimes d}_{XY}(X_i,Y_{\pi_i})}<dn\cdot\tau_{\s{GLRT}}}\\
    & = \pr_{\calH_1}\pp{\sum_{i=1}^n\log\frac{P^{\otimes d}_{XY}(X_i,Y_{i})}{Q^{\otimes d}_{XY}(X_i,Y_{i})}<dn\cdot\tau_{\s{GLRT}}}\\
    &\leq e^{-dn\cdot\lambda\tau_{\s{GLRT}}}\bE_{\calH_1}\pp{\exp\p{\lambda\sum_{i=1}^n\log\frac{P^{\otimes d}_{XY}(X_i,Y_{i})}{Q^{\otimes d}_{XY}(X_i,Y_{i})}}}\\
    & = e^{-dn\cdot\lambda\tau_{\s{GLRT}}}\pp{\bE_{\calH_1}\pp{\exp\p{\lambda\log\frac{P^{\otimes d}_{XY}(A,B)}{Q^{\otimes d}_{XY}(A,B)}}}}^n\\
    & = e^{-dn\cdot\lambda\tau_{\s{GLRT}}}\pp{\bE_{P_{XY}}\pp{\exp\p{\lambda\log\frac{P_{XY}(A,B)}{Q_{XY}(A,B)}}}}^{dn}\\
    & = \exp\pp{dn\cdot\psi_P(\lambda)-dn\cdot\lambda\tau_{\s{GLRT}}}.
\end{align}
Again, because $\tau_{\s{GLRT}}\in(-d_{\s{KL}}(Q_{XY}||P_{XY}),d_{\s{KL}}(P_{XY}||Q_{XY}))$, we may take $\lambda\leq0$ so that $\lambda\tau_{\s{GLRT}}-\psi_P(\lambda)$ is arbitrarily close to $E_P(\tau_{\s{GLRT}})$. Therefore,
\begin{align}
    \calE_2\leq\exp\p{-dn\cdot E_P(\tau_{\s{GLRT}})},
\end{align}
and thus if,
\begin{align}
    E_P(\tau_{\s{GLRT}})=\omega(d^{-1}n^{-1}),
\end{align}
then the Type-II error probability goes to zero. 

%As for the Type-II error probability, we get
%\begin{align}
%    &\calE_2 \leq \exp\pp{dn(1-\delta_n)^2\cdot d_{\s{KL}}^2(P_{XY}||Q_{XY})+dn\log\bE_{P_{XY}}\p{e^{-\lambda\log\frac{P_{XY}(A,B)}{Q_{XY}(A,B)}}}},
%\end{align}
%and thus if,
%\begin{align}
 %   (1-\delta_n)^2d_{\s{KL}}^2(P_{XY}||Q_{XY})+\log\bE_{P_{XY}}\pp{\frac{Q_{XY}(A,B)}{P_{XY}(A,B)}}^{d_{\s{KL}}(P_{XY}||Q_{XY})}<0,
%\end{align}
%then the Type-II error probability goes to zero.

%In fact, consider the following simpler analysis for the Type-II error probability, from which it is easier to understand the conditions under which it converges to zero. By Chebyshev's inequality,
%\begin{align}
%    \calE_2 &= \pr_{\calH_1}\pp{\max_{\pi\in\mathbb{S}_n}\sum_{i=1}^n\log\frac{P^{\otimes d}_{XY}(X_i,Y_{\pi_i})}{Q^{\otimes d}_{XY}(X_i,Y_{\pi_i})}<\tau_{\s{GLRT}}}\\
%    & = \pr_{\calH_1}\pp{\sum_{i=1}^n\log\frac{P^{\otimes d}_{XY}(X_i,Y_{i})}{Q^{\otimes d}_{XY}(X_i,Y_{i})}<\tau_{\s{GLRT}}}\\
%    &\leq \frac{\s{Var}_{P_{XY}}\p{\sum_{i=1}^n\log\frac{P^{\otimes d}_{XY}(X_i,Y_{i})}{Q^{\otimes d}_{XY}(X_i,Y_{i})}}}{(1-\delta_n)^2d^2n^2d^2_{\s{KL}}(P_{XY}||Q_{XY})}\\
%    & = \frac{nd\cdot\s{Var}_{P_{XY}}\p{\log\frac{P_{XY}(A,B)}{Q_{XY}(A,B)}}}{(1-\delta_n)^2d^2n^2d^2_{\s{KL}}(P_{XY}||Q_{XY})}\\
%    &=\frac{1}{(1-\delta_n)^2nd}\frac{\s{Var}_{P_{XY}}\p{\log\frac{P_{XY}(A,B)}{Q_{XY}(A,B)}}}{d^2_{\s{KL}}(P_{XY}||Q_{XY})},
%\end{align}
%which clearly goes to zero if \eqref{eqn:condScan} holds. 

\subsection{Proof of Theorem~\ref{thm:3}}
Consider the sum test in \eqref{eqn:testSummain} and \eqref{eqn:calkDef}, and let us analyze its Type-I error probability. By Chebyshev's inequality,
\begin{align}
    \pr_{\calH_0}\pp{\phi_{\s{sum}}(\s{X},\s{Y})=1} &= \pr_{\calH_0}\pp{\sum_{i,j=1}^n\sum_{\ell=1}^d\calK(X_{i\ell},Y_{j\ell})\geq\tau_{\s{sum}}}\\
    &\leq \frac{\s{Var}_{Q_{XY}}\p{\sum_{i,j=1}^n\sum_{\ell=1}^d\calK(X_{i\ell},Y_{j\ell})}}{\tau_{\s{sum}}^2},\label{eqn:TypeIChebyshev}
\end{align}
where we have used the fact that $\bE_{\calH_0}[\calK(X_{i\ell},Y_{j\ell})]=0$, for any $i,j\in[n]$ and $\ell\in[d]$. Note that,
\begin{align}
  \s{Var}_{Q_{XY}}\p{\sum_{i,j=1}^n\sum_{\ell=1}^d\calK(X_{i\ell},Y_{j\ell})} = d\cdot \s{Var}_{Q_{XY}}\p{\sum_{i,j=1}^n\calK(A_i,B_j)},\label{eqn:VariacnceTerm0}
\end{align}
where each $\{A_i\}_{i\geq1}$ and $\{B_i\}_{i\geq1}$ are i.i.d. random variables distributed according to $P_X$. Now, we have,
\begin{align}
\s{Var}_{Q_{XY}}\p{\sum_{i,j=1}^n\calK(A_i,B_j)}&=\sum_{i=1}^n\s{Var}_{Q_{XY}}\p{\sum_{j=1}^n\calK(A_i,B_j)}\nonumber\\
&\hspace{0.5cm}+ \sum_{i\neq k}\s{cov}_{Q_{XY}}\p{\sum_{j=1}^n\calK(A_i,B_j),\sum_{m=1}^n\calK(A_k,B_m)}.\label{eqn:varaCalc}
\end{align}
Let us start with the first term at the right-hand-side of \eqref{eqn:varaCalc}. We get,
\begin{align}
    \s{Var}_{Q_{XY}}\p{\sum_{j=1}^n\calK(A_i,B_j)} &= \sum_{j=1}^n\s{Var}_{Q_{XY}}\p{\calK(A_i,B_j)}\nonumber\\
    &\hspace{1cm}+\sum_{j\neq m}\s{cov}_{Q_{XY}}\p{\calK(A_i,B_j),\calK(A_i,B_m)}\\
    & = n\cdot \s{Var}_{Q_{XY}}\p{\calK(A,B)}\nonumber\\
    &\hspace{1cm}+n(n-1)\cdot\s{cov}_{Q_{XY}}\p{\calK(A,B),\calK(A,B')},\label{eqn:VariacnceTerm}
\end{align}
where $B$ and $B'$ are independent copies. Now, we note that
\begin{align}
    \s{cov}_{Q_{XY}}\p{\calK(A,B),\calK(A,B')} &=\bE_{Q_{XY}}\p{\calK(A,B)\cdot\calK(A,B')}-\bE_{Q_{XY}}^2\p{\calK(A,B)}.\label{eqn:Cov1zero}
\end{align}
By the definition of $\calK$ in \eqref{eqn:calkDef}, we have
\begin{align}
    \bE_{Q_{XY}}(\calK(A,B))&=\bE_{Q_{XY}}\pp{\log\frac{P_{XY}(A,B)}{Q_{XY}(A,B)}}-\bE_{Q_{XY}}\pp{\log\frac{P_{XY}(A,B)}{Q_{XY}(A,B)}}\nonumber\\
    &\quad-\bE_{Q_{XY}}\pp{\log\frac{P_{XY}(A,B)}{Q_{XY}(A,B)}}\quad-d_{\s{KL}}(Q_{XY}||P_{XY})\\
    & = -d_{\s{KL}}(Q_{XY}||P_{XY})+d_{\s{KL}}(Q_{XY}||P_{XY})+d_{\s{KL}}(Q_{XY}||P_{XY})-d_{\s{KL}}(Q_{XY}||P_{XY})\nonumber\\
    & = 0,
\end{align}
where we have used the fact that $(A,B,B')$ are statistically independent, and thus $(A,B,B')\sim P_X^{\otimes 3}$. On the other hand, by the law of total expectation, 
\begin{align}
    \bE_{Q_{XY}}\p{\calK(A,B)\cdot\calK(A,B')} = \bE_{Q_{XY}}\p{\calK(A,B)\cdot\bE_{Q_{XY}}[\calK(A,B')\vert A]}.
\end{align}
Now, we note that
\begin{align}
    \bE_{Q_{XY}}[\calK(A,B')\vert A] &= \bE_{Q_{XY}}\pp{\left.\log\frac{P_{XY}(A,B')}{Q_{XY}(A,B')}\right|A}-\bE_{Q_{XY}}\pp{\left.\log\frac{P_{XY}(A,B')}{Q_{XY}(A,B')}\right|A}\nonumber\\
    &\quad-\bE_{Q_{XY}}\pp{\log\frac{P_{XY}(A,B')}{Q_{XY}(A,B')}}\quad-d_{\s{KL}}(Q_{XY}||P_{XY})\\
    & = 0,
\end{align}
where again we have used the fact that $(A,B,B')$ are statistically independent. The above implies that $\bE_{Q_{XY}}\p{\calK(A,B)\cdot\calK(A,B')}=0$, and thus,
\begin{align}
    \s{cov}_{Q_{XY}}\p{\calK(A,B),\calK(A,B')}=0.\label{eqn:covzero}
\end{align}
Therefore, 
\begin{align}
    \s{Var}_{Q_{XY}}\p{\sum_{j=1}^n\calK(A_i,B_j)} = n\cdot \s{Var}_{Q_{XY}}\p{\calK(A,B)}.\label{eqn:VarVar}
\end{align}

Next, consider the second term at the right-hand-side of \eqref{eqn:varaCalc}. For $i\neq k$, we have
\begin{align}
    \s{cov}_{Q_{XY}}\p{\sum_{j=1}^n\calK(A_i,B_j),\sum_{m=1}^n\calK(A_k,B_m)}=\sum_{j=1}^n\sum_{m=1}^n\s{cov}_{Q_{XY}}\p{\calK(A_i,B_j),\calK(A_k,B_m)},\label{eqn:Cov2zero0}
\end{align}
By symmetry, using the exact same arguments we used to prove \eqref{eqn:covzero}, it is clear that
\begin{align}
\sum_{j=1}^n\s{cov}_{Q_{XY}}\p{\calK(A_i,B_j),\calK(A_k,B_j)} = 0.
\end{align}
Similarly,
\begin{align}
   \sum_{j\neq m}\s{cov}_{Q_{XY}}\p{\calK(A_i,B_j),\calK(A_k,B_m)} =0,
\end{align}
because $\calK(A_i,B_j)$ and $\calK(A_k,B_m)$ are statistically independent for $i\neq k$ and $j\neq m$. Thus,
\begin{align}
    \s{cov}_{Q_{XY}}\p{\sum_{j=1}^n\calK(A_i,B_j),\sum_{m=1}^n\calK(A_k,B_m)}=0.\label{eqn:Cov2zero}
\end{align}
Combining \eqref{eqn:VariacnceTerm0}, \eqref{eqn:varaCalc}, \eqref{eqn:VarVar}, and \eqref{eqn:Cov2zero}, we get,
\begin{align}
  \s{Var}_{Q_{XY}}\p{\sum_{i,j=1}^n\sum_{\ell=1}^d\calK(X_{i\ell},Y_{j\ell})} = dn^2\cdot\s{Var}_{Q_{XY}}\p{\calK(A,B)}.
\end{align}
Taking $\tau_{\s{sum}} = dn\frac{d_{\s{KL}}(P_{XY}||Q_{XY})+d_{\s{KL}}(Q_{XY}||P_{XY})}{2}$, we obtain from \eqref{eqn:TypeIChebyshev},
\begin{align}
    \pr_{\calH_0}\pp{\phi_{\s{sum}}(\s{X},\s{Y})=1} &\leq \frac{4\cdot\s{Var}_{Q_{XY}}\p{\calK(A,B)}}{d\cdot\p{d_{\s{KL}}(P_{XY}||Q_{XY})+d_{\s{KL}}(Q_{XY}||P_{XY})}^2}.\label{eqn:TypeIChebyshev2}
\end{align}

Next, we analyze the Type-II error probability. As in the GLRT analysis, under $\calH_1$, since our proposed test is invariant to reordering of $\s{X}$ and $\s{Y}$, we may assume without loss of generality that the latent permutation is the identity one, i.e., $\sigma=\s{Id}$. Then, by Chebyshev's inequality,
\begin{align}
    \pr_{\calH_1}\pp{\phi_{\s{sum}}(\s{X},\s{Y})=0} &= \pr_{\calH_1}\pp{\sum_{i,j=1}^n\sum_{\ell=1}^d\calK(X_{i\ell},Y_{j\ell})<\tau_{\s{sum}}}\\
    &\leq \frac{\s{Var}_{P_{XY}}\p{\sum_{i,j=1}^n\sum_{\ell=1}^d\calK(X_{i\ell},Y_{j\ell})}}{\pp{d n\cdot (d_{\s{KL}}(P_{XY}||Q_{XY})+d_{\s{KL}}(Q_{XY}||P_{XY}))-\tau_{\s{sum}}}^2}\\
    & = \frac{4\cdot\s{Var}_{P_{XY}}\p{\sum_{i,j=1}^n\sum_{\ell=1}^d\calK(X_{i\ell},Y_{j\ell})}}{d^2n^2\cdot\p{d_{\s{KL}}(P_{XY}||Q_{XY})+d_{\s{KL}}(Q_{XY}||P_{XY})}^2}.\label{eqn:TypeIChebyshev3}
\end{align}
As before, let us find the variance term in \eqref{eqn:TypeIChebyshev3}. Note that,
\begin{align}
  \s{Var}_{P_{XY}}\p{\sum_{i,j=1}^n\sum_{\ell=1}^d\calK(X_{i\ell},Y_{j\ell})} = d\cdot \s{Var}_{P_{XY}}\p{\sum_{i,j=1}^n\calK(A_i,B_j)},\label{eqn:VariacnceTerm01}
\end{align}
where each $\{(A_i,B_i)\}_{i=1}^n$ are i.i.d. pairs of random variables distributed according to $P_{XY}$. Now, we have,
\begin{align}
\s{Var}_{P_{XY}}\p{\sum_{i,j=1}^n\calK(A_i,B_j)}&=\sum_{i=1}^n\s{Var}_{P_{XY}}\p{\sum_{j=1}^n\calK(A_i,B_j)}\nonumber\\
&\hspace{0.5cm}+ \sum_{i\neq k}\s{cov}_{P_{XY}}\p{\sum_{j=1}^n\calK(A_i,B_j),\sum_{m=1}^n\calK(A_k,B_m)}.\label{eqn:varaCalcType2}
\end{align}
Let us start with the first term at the right-hand-side of \eqref{eqn:varaCalcType2}. We get,
\begin{align}
    \s{Var}_{P_{XY}}\p{\sum_{j=1}^n\calK(A_i,B_j)} &= \sum_{j=1}^n\s{Var}_{P_{XY}}\p{\calK(A_i,B_j)}\nonumber\\
    &\hspace{1cm}+\sum_{j\neq m}\s{cov}_{P_{XY}}\p{\calK(A_i,B_j),\calK(A_i,B_m)}\\
    & = \s{Var}_{P_{XY}}\p{\calK(A,B)}+(n-1)\cdot \s{Var}_{Q_{XY}}\p{\calK(A,B)}\nonumber\\
    &\hspace{1cm}+\sum_{j\neq m}\s{cov}_{P_{XY}}\p{\calK(A_i,B_j),\calK(A_i,B_m)}.\label{eqn:VariacnceTermType2temp}
\end{align}
Let us find $\s{cov}_{P_{XY}}\p{\calK(A_i,B_j),\calK(A_i,B_m)}$, for the different possible values of $(i,j,m)$. Without loss of generality, we fix $i=1$. For $j=1$ and $m\neq 1$, we have
\begin{align}
   \s{cov}_{P_{XY}}\p{\calK(A_1,B_1),\calK(A_1,B_m)} &= \bE_{P_{XY}}\pp{\calK(A_1,B_1)\cdot\calK(A_1,B_m)}\\
   & = \bE_{P_{XY}}\pp{\calK(A_1,B_1)\bE_{Q_{XY}}[\calK(A_1,B_m)\vert A_1]}\\
   & = 0,
\end{align}
where we have used the fact that $A_1$ and $B_m$ are statistically independent and thus $\bE_{Q_{XY}}[\calK(A_1,B_m)\vert A_1]=0$ by definition. By symmetry, the above is true also when $m=1$ and $j\neq 1$. Finally, for the case where $j,m\neq 1$, due to the same reasons we get,
\begin{align}
   \s{cov}_{P_{XY}}\p{\calK(A_1,B_j),\calK(A_1,B_m)} &= \bE_{P_{XY}}\pp{\calK(A_1,B_j)\cdot\calK(A_1,B_m)}\\
   & = \bE_{P_{XY}}\pp{\calK(A_1,B_j)\bE[\calK(A_1,B_m)\vert A_1]}\\
   & = 0.
\end{align}
Thus,
\begin{align}
    \s{Var}_{P_{XY}}\p{\sum_{j=1}^n\calK(A_i,B_j)} &=  \s{Var}_{P_{XY}}\p{\calK(A,B)}+(n-1)\cdot \s{Var}_{Q_{XY}}\p{\calK(A,B)}.\label{eqn:VariacnceTermType2}
\end{align}
Next, consider the second term at the right-hand-side of \eqref{eqn:varaCalcType2}. For $i\neq k$, we have,
\begin{align}
\s{cov}_{P_{XY}}\p{\sum_{j=1}^n\calK(A_i,B_j),\sum_{m=1}^n\calK(A_k,B_m)} = \sum_{j,m=1}^n \s{cov}_{P_{XY}}\p{\calK(A_i,B_j),\calK(A_k,B_m)}.\label{eqn:covUnderP}
\end{align}
We split the sum above into two parts; the first correspond to $j=m$, and the second to $j\neq m$. For the former, we have,
\begin{align}
  \sum_{m=1}^n\s{cov}_{P_{XY}}\p{\calK(A_i,B_m),\calK(A_k,B_m)} &=  \s{cov}_{P_{XY}}\p{\calK(A_i,B_i),\calK(A_k,B_i)}\nonumber\\
  &\quad+\s{cov}_{P_{XY}}\p{\calK(A_i,B_k),\calK(A_k,B_k)}\nonumber\\
  &\quad +\sum_{m\neq i,k}\s{cov}_{P_{XY}}\p{\calK(A_i,B_m),\calK(A_k,B_m)}.
\end{align}
 We have,
 \begin{align}
    \s{cov}_{P_{XY}}\p{\calK(A_i,B_i),\calK(A_k,B_i)} &= \bE_{P_{XY}}\p{\calK(A_i,B_i)\cdot\calK(A_k,B_i)} \\
    & = \bE_{P_{XY}}\pp{\calK(A_i,B_i)\cdot\bE_{Q_{XY}}\p{\left.\calK(A_k,B_i)\right| B_i}}\\
    & =0,
 \end{align}
where we have used the fact that $A_k$ and $B_i$ are statistically independent. Similarly, it is clear that $\s{cov}_{P_{XY}}\p{\calK(A_i,B_k),\calK(A_k,B_k)}=0$, and $\s{cov}_{P_{XY}}\p{\calK(A_i,B_m),\calK(A_k,B_m)}=0$. Thus,
\begin{align}
    \sum_{m=1}^n\s{cov}_{P_{XY}}\p{\calK(A_i,B_m),\calK(A_k,B_m)}=0.\label{eqn:covUnderP1}
\end{align}
Now, for $j\neq m$, we have
\begin{align}
    \sum_{j\neq m}^n\s{cov}_{P_{XY}}\p{\calK(A_i,B_j),\calK(A_k,B_m)} &= \s{cov}_{P_{XY}}\p{\calK(A_i,B_i),\calK(A_k,B_k)}\nonumber\\
    &\quad+\s{cov}_{P_{XY}}\p{\calK(A_i,B_k),\calK(A_k,B_i)}\nonumber\\
    &\quad+\sum_{(j,m)\in\calG}\s{cov}_{P_{XY}}\p{\calK(A_i,B_j),\calK(A_k,B_m)},\label{eqn:threeCov}
\end{align}
where $\calG\triangleq\{(j,m)\in[n]\times[n]:j\neq m,(j,m)\neq(i,k),(j,m)\neq(k,i)\}$. Now, since $\{(A_i,B_i)\}_{i=1}^n$ are i.i.d. pairs of random variables, we have,
\begin{align}
    \s{cov}_{P_{XY}}\p{\calK(A_i,B_i),\calK(A_k,B_k)}=0.\label{eqn:ccov1}
\end{align}
On the other hand,
\begin{align}
  \s{cov}_{P_{XY}}\p{\calK(A_i,B_k),\calK(A_k,B_i)} = \bE_{P_{XY}}\pp{\calK(A_i,B_k)\cdot\calK(A_k,B_i)}.\label{eqn:ccov2}
\end{align}
Finally, let us find the covariance term at the right-hand-side of \eqref{eqn:threeCov} for any pair $(j,m)\in\calG$. For any $j\neq i,k$ and $m\neq i,k$, we have
\begin{align}
    \s{cov}_{P_{XY}}\p{\calK(A_i,B_j),\calK(A_k,B_m)}=0,
\end{align}
because the pairs $(A_i,B_j)$ and $(A_k,B_m)$ are statistically independent. Next, for $j=i$ and $m\neq k$, we have
\begin{align}
    \s{cov}_{P_{XY}}\p{\calK(A_i,B_i),\calK(A_k,B_m)} &= 0,
\end{align}
because the pairs $(A_i,B_i)$ and $(A_k,B_m)$ are statistically independent and $\bE_{P_{XY}}[\calK(A_k,B_m)] = \bE_{Q_{XY}}[\calK(A_k,B_m)]=0$ because $A_k$ and $B_m$ are independent. The same is true for $m=k$ and $j\neq i$. For $j=k$ and $m\neq i$, we have
\begin{align}
    \s{cov}_{P_{XY}}\p{\calK(A_i,B_k),\calK(A_k,B_m)} &= \bE_{P_{XY}}\pp{\calK(A_i,B_k)\cdot\calK(A_k,B_m)}\\
    & = \bE_{P_{XY}}\pp{\calK(A_i,B_k)\cdot\bE_{P_{XY}}\p{\left.\calK(A_k,B_m)\right|A_i,B_k,A_k}}\\
    & = \bE_{P_{XY}}\pp{\calK(A_i,B_k)\cdot\bE_{P_{XY}}\p{\left.\calK(A_k,B_m)\right|A_k}}\\
    & = 0,
\end{align}
where in the first equality we have used the fact that $A_k$ and $B_m$ are statistically independent and thus $\bE_{P_{XY}}[\calK(A_k,B_m)] = \bE_{Q_{XY}}[\calK(A_k,B_m)]=0$, the third equality follows because $B_m$ is independent of $A_i$ and $B_k$, and finally, $\bE_{P_{XY}}\p{\left.\calK(A_k,B_m)\right|A_k}=0$ by the definition of $\calK$ and the fact that $A_k$ and $B_m$ are statistically independent. Finally, by symmetry when $m=i$ and $j\neq k$ we also get $\s{cov}_{P_{XY}}\p{\calK(A_i,B_j),\calK(A_k,B_i)}=0$. Combining all the cases above, we finally obtain that
\begin{align}
\sum_{(j,m)\in\calG}\s{cov}_{P_{XY}}\p{\calK(A_i,B_j),\calK(A_k,B_m)}=0,\label{eqn:ccov3}
\end{align}
and thus, from \eqref{eqn:threeCov}, \eqref{eqn:ccov1}, \eqref{eqn:ccov2}, and \eqref{eqn:ccov3}, we have
\begin{align}
    \sum_{j\neq m}^n\s{cov}_{P_{XY}}\p{\calK(A_i,B_j),\calK(A_k,B_m)} = \bE_{P_{XY}}\pp{\calK(A_i,B_k)\cdot\calK(A_k,B_i)}.\label{eqn:covUnderP2}
\end{align}
Accordingly, from \eqref{eqn:covUnderP}, \eqref{eqn:covUnderP1}, and \eqref{eqn:covUnderP2}, we obtain
\begin{align}
   \s{cov}_{P_{XY}}\p{\sum_{j=1}^n\calK(A_i,B_j),\sum_{m=1}^n\calK(A_k,B_m)} =  \bE_{P_{XY}}\pp{\calK(A_i,B_k)\cdot\calK(A_k,B_i)},\label{eqn:LastCovTerm}
\end{align}
which together with \eqref{eqn:VariacnceTerm01}, \eqref{eqn:varaCalcType2}, and \eqref{eqn:VariacnceTermType2}, imply that,
\begin{align}
    \s{Var}_{P_{XY}}\p{\sum_{i,j=1}^n\sum_{\ell=1}^d\calK(X_{i\ell},Y_{j\ell})} &= dn\cdot\s{Var}_{P_{XY}}\p{\calK(A,B)}+dn(n-1)\cdot \s{Var}_{Q_{XY}}\p{\calK(A,B)}\nonumber\\
    &\quad+dn(n-1)\cdot\bE_{P}\pp{\calK(A_1,B_2)\cdot\calK(A_2,B_1)},\label{eqn:VarFinal}
\end{align}
where the last expectation \eqref{eqn:VarFinal} is with respect to $(A_1,B_1)\indep(A_2,B_2)\sim P_{XY}$. Therefore, substituting the last result in \eqref{eqn:TypeIChebyshev3}, we obtain,
\begin{align}
    \pr_{\calH_1}\pp{\phi_{\s{sum}}(\s{X},\s{Y})=0} &\leq \frac{4\cdot\s{Var}_{P_{XY}}\p{\sum_{i,j=1}^n\sum_{\ell=1}^d\calK(X_{i\ell},Y_{j\ell})}}{d^2n^2\cdot\p{d_{\s{KL}}(P_{XY}||Q_{XY})+d_{\s{KL}}(Q_{XY}||P_{XY})}^2}\\
    &\leq \frac{4\cdot\s{Var}_{P_{XY}}\p{\calK(A,B)}}{dn\cdot\p{d_{\s{KL}}(P_{XY}||Q_{XY})+d_{\s{KL}}(Q_{XY}||P_{XY})}^2}\nonumber\\
    &\quad+\frac{4\cdot \s{Var}_{Q_{XY}}\p{\calK(A,B)}}{d\cdot\p{d_{\s{KL}}(P_{XY}||Q_{XY})+d_{\s{KL}}(Q_{XY}||P_{XY})}^2}\nonumber\\
    &\quad+\frac{4\cdot \bE_{P}\pp{\calK(A_1,B_2)\cdot\calK(A_2,B_1)}}{d\cdot\p{d_{\s{KL}}(P_{XY}||Q_{XY})+d_{\s{KL}}(Q_{XY}||P_{XY})}^2}\\
    &\leq \frac{4\cdot\s{Var}_{P_{XY}}\p{\calK(A,B)}}{dn\cdot\p{d_{\s{KL}}(P_{XY}||Q_{XY})+d_{\s{KL}}(Q_{XY}||P_{XY})}^2}\nonumber\\
    &\quad+\frac{8\cdot \s{Var}_{Q_{XY}}\p{\calK(A,B)}}{d\cdot\p{d_{\s{KL}}(P_{XY}||Q_{XY})+d_{\s{KL}}(Q_{XY}||P_{XY})}^2},\label{eqn:TypeIChebyshev4}
\end{align}
where the last inequality follows from the fact that 
\begin{align}
    \bE_{P}\pp{\calK(A_1,B_2)\cdot\calK(A_2,B_1)}&\leq \sqrt{\s{Var}_{P_{XY}}\p{\calK(A_1,B_2)}\s{Var}_{P_{XY}}\p{\calK(A_2,B_1)}}\\
    & = \sqrt{\s{Var}_{Q_{XY}}\p{\calK(A_1,B_2)}\s{Var}_{Q_{XY}}\p{\calK(A_2,B_1)}}\\
    & = \s{Var}_{Q_{XY}}\p{\calK(A,B)},
\end{align}
where the first inequality follows from Cauchy–Schwarz inequality coupled with $\bE_{P}[\calK(A_1,B_2)]=\bE_{P}[\calK(A_2,B_1)]=0$, and the second and third equalities are because $(A_1,B_2)$ and $(A_2,B_1)$ are two pairs of statistically independent random variables, and because $A_1$, $A_2$, $B_1$, and $B_2$, have the same marginal distribution. Finally, based on our upper bounds in \eqref{eqn:TypeIChebyshev2} and \eqref{eqn:TypeIChebyshev4}, we see that the sum test achieves strong detection, i.e., vanishing Type-I and Type-II error probabilities if,
\begin{align}
    \frac{\s{Var}_{Q_{XY}}\p{\calK(A,B)}}{d\cdot\p{d_{\s{KL}}(P_{XY}||Q_{XY})+d_{\s{KL}}(Q_{XY}||P_{XY})}^2}\to0,
\end{align}
as $d\to\infty$, which concludes the proof.

\subsection{Proof of Theorem~\ref{thm:IndSum}}

Let 
\begin{align}
  \calQ_{d}&\triangleq  \pr_{Q_{XY}^{\otimes d}}\pp{\frac{1}{d}\sum_{\ell=1}^d\log\frac{P_{XY}(A_{\ell},B_{\ell})}{Q_{XY}(A_{\ell},B_{\ell})}\geq\tau_{\s{count}}},\\
  \calP_{d}&\triangleq  \pr_{P_{XY}^{\otimes d}}\pp{\frac{1}{d}\sum_{\ell=1}^d\log\frac{P_{XY}(A_{\ell},B_{\ell})}{Q_{XY}(A_{\ell},B_{\ell})}\geq\tau_{\s{count}}}.
\end{align}
Consider the test in \eqref{eqn:testcount}. We start by bounding the Type-I error probability. Markov's inequality implies that
\begin{align}
    \pr_{\calH_0}\p{\phi_{\s{count}}=1} &= \pr_{\calH_0}\p{\sum_{i,j=1}^n\Ind\ppp{\frac{1}{d}\log\frac{P^{\otimes d}_{XY}(X_{i},Y_{j})}{Q^{\otimes d}_{XY}(X_{i},Y_{j})}\geq\tau_{\s{count}}}\geq \frac{1}{2} n\calP_{d}}\\
    &\leq \frac{2n\calQ_{d}}{\calP_{d}}.\label{eqn:type1count}
\end{align}
On the other hand, we bound the Type-II error probability as follows. Under $\calH_1$, since our proposed test is invariant to reordering of $\s{X}$ and $\s{Y}$, we may assume without loss of generality that the latent permutation is the identity one, i.e., $\sigma=\s{Id}$. Then, Chebyshev's inequality implies that
\begin{align}
     \pr_{\calH_1}\p{\phi_{\s{count}}=0} &= \pr_{\calH_1}\p{\sum_{i,j=1}^n\Ind\ppp{\frac{1}{d}\log\frac{P^{\otimes d}_{XY}(X_{i},Y_{j})}{Q^{\otimes d}_{XY}(X_{i},Y_{j})}\geq\tau_{\s{count}}}< \frac{1}{2} n\calP_{d}}\\
     &\leq \pr_{\calH_1}\p{\sum_{i=1}^n\Ind\ppp{\frac{1}{d}\log\frac{P^{\otimes d}_{XY}(X_{i},Y_{i})}{Q^{\otimes d}_{XY}(X_{i},Y_{i})}\geq\tau_{\s{count}}}< \frac{1}{2} n\calP_{d}}\\
&\leq\frac{4\cdot\s{Var}_{P_{XY}^{\otimes d}}\p{\sum_{i=1}^n\Ind\ppp{\frac{1}{d}\log\frac{P^{\otimes d}_{XY}(X_{i},Y_{i})}{Q^{\otimes d}_{XY}(X_{i},Y_{i})}\geq\tau_{\s{count}}}}}{n^2\calP^2_{d}},
\end{align}
where $\s{Var}_{P_{XY}^{\otimes d}}$ denotes the variance with respect to $P_{XY}^{\otimes d}$. Noticing that 
\begin{align}
    &\s{Var}_{P_{XY}^{\otimes d}}\p{\sum_{i=1}^n\Ind\ppp{\frac{1}{d}\log\frac{P^{\otimes d}_{XY}(X_{i},Y_{i})}{Q^{\otimes d}_{XY}(X_{i},Y_{i})}\geq\tau_{\s{count}}}} \nonumber\\
    &\quad\quad= \sum_{i=1}^n\s{Var}_{P_{XY}^{\otimes d}}\p{\Ind\ppp{\frac{1}{d}\log\frac{P^{\otimes d}_{XY}(X_{i},Y_{i})}{Q^{\otimes d}_{XY}(X_{i},Y_{i})}\geq\tau_{\s{count}}}}\\
    &\quad\quad = n\calP_{d}(1-\calP_{d}),
\end{align}
we finally obtain,
\begin{align}
     \pr_{\calH_1}\p{\phi_{\s{count}}=0} &\leq\frac{4(1-\calP_{d})}{n\calP_{d}}\leq\frac{4}{n\calP_{d}}.\label{eqn:type2count}
\end{align}
Standard Chernoff bounds on these tails yield that if the threshold $\tau_{\s{count}}$ is such that $\tau_{\s{count}}\in(-d_{\s{KL}}(Q_{XY}||P_{XY}),d_{\s{KL}}(P_{XY}||Q_{XY}))$, then
\begin{align}
    \calQ_d&\leq \exp\pp{-d\cdot E_Q(\tau_{\s{count}})},\\
    \calP_d&\geq 1-\exp\pp{-d\cdot E_P(\tau_{\s{count}})}.
\end{align}
Thus, we obtain
\begin{align}
    \pr_{\calH_0}\p{\phi_{\s{count}}=1}&\leq 2n\cdot\frac{\exp\pp{-d\cdot E_Q(\tau_{\s{count}})}}{1-\exp\pp{-d\cdot E_P(\tau_{\s{count}})}},
\end{align}
and
\begin{align}
    \pr_{\calH_1}\p{\phi_{\s{count}}=0}&\leq \frac{4}{n\cdot\p{1-\exp\pp{-d\cdot E_P(\tau_{\s{count}})}}}.
\end{align}
For strong detection, we see that the Type-II error probability converges to zero as $n\to\infty$, if $1-e^{-d\cdot E_P(\tau_{\s{count}})}=\omega(n^{-1})$, which is equivalent to $E_P(\tau_{\s{count}}) = \omega(n^{-1}d^{-1})$. Under this condition we see that the Type-I error probability converges to zero if $ne^{-d\cdot E_Q(\tau_{\s{count}})} = o(1)$, which is equivalent to $E_Q(\tau_{\s{count}}) = \omega(\log n^{1/d})$.

\section{Conclusions and Outlook}

In this paper, we analyzed the problem of detecting the correlation of dependent databases with general distributions. To that end, we derived both lower and upper bounds on the phase transition boundaries at which detection is possible or impossible. Our bounds are tight in the special cases considered in the literature. We hope our work has opened more doors than it closes. Let us mention briefly a few interesting questions for future research:
    \begin{enumerate}
        \item In this paper we assume that both generative distributions $P_{XY}$ and $Q_{XY}$ are known to the learner. In practice, it would be interesting to devise universal algorithms which are independent of $P_{XY}$ and $Q_{XY}$, but at the same time achieve comparable results to the case where those distributions are known. Furthermore, it would be interesting to devise adaptive probing detection algorithms for scenarios where one is able to probe only part of the data, while the complete databases are not given/expensive to obtain. Finally, it would be interesting to construct differentially private algorithms.
        \item We focused on the case where the data is i.i.d. given the permutation. However, in practice, it is reasonable that there will be some structured dependency over, for example, the feature space. As so, it would be interesting to generalize our results to a model which captures such a dependency. Furthermore, we assumed that the permutation is drawn uniformly over $\mathbb{S}_n$. What if there is a different prior, or, a structured set from which the permutation is drawn? As an example, consider the case where there are groups of users with similar features, and the permutation is applied per each group.
        \item We investigated the question of ``independent vs. dependent" databases. It would be interesting to analyze the related, but easier, question of ``uncorrelated vs. correlated"? Obviously, two databases can be dependent but uncorrelated. Namely, this is a question about first order dependency.
        \item It would be interesting to check whether the techniques in this paper can be used to solve the graph matching detection problem in \cite{wu2020testing}, for general weight distributions.
    \end{enumerate}

\bibliographystyle{alpha}
\bibliography{bibfile}

\end{document}